\newcommand{\N}{\mathbb{N}}
\newcommand{\R}{\mathbb{R}}
\renewcommand{\P}{\mathbb{P}}
\newcommand{\measSpace}{\Omega}
\newcommand{\probaSet}{\mathcal P(\measSpace)}
\newcommand{\PRD}{\text{PRD}}
\newcommand{\dPRD}{\partial\PRD}
\newcommand{\supp}{\mathrm{supp}}
\newcommand{\cosupp}{\mathrm{cosupp}}
\newcommand{\fpr}{\mathrm{fpr}}
\newcommand{\fnr}{\mathrm{fnr}}
\newcommand{\F}{\mathscr{F}}
\newcommand{\E}{\mathbb{E}}
\newcommand{\1}{\mathds{1}}
\newcommand{\BKNN}{B_{\text{kNN}}}
\newcommand{\X}{{\mathcal X}}
\newcommand{\Y}{{\mathcal Y}}
\newcommand{\Var}{\mathrm{Var}}
\newcommand{\TV}[1]{\Vert #1 \Vert_{\mathrm{TV}}}
\DeclareMathOperator*{\argmin}{arg\,min}
\definecolor{color_IPR}{RGB}{23,116,180}
\definecolor{color_KNN}{RGB}{255,136,32}
\definecolor{color_PARZ}{RGB}{20,160,20}
\definecolor{color_KDE}{RGB}{20,160,20}
\definecolor{color_COV}{RGB}{214,27,28}
\definecolor{color_GT}{RGB}{110,24,189}
\newcommand{\knn}{\text{\normalfont\scshape \bfseries kNN}}
\newcommand{\ipr}{\text{\normalfont\scshape \bfseries iPR}}
\newcommand{\parzen}{\text{\normalfont\scshape \bfseries KDE}}
\newcommand{\kde}{\text{\normalfont\scshape \bfseries KDE}}
\newcommand{\cov}{\text{\normalfont\scshape \bfseries Cov}}
\newlength{\myheight} 
\newlength{\mywidth} %
\newlength{\mywidthhere} %
\begin{document}
\title{A New Perspective on Precision and Recall for Generative Models}

\author{\name Benjamin Sykes \email benjamin.sykes@unicaen.fr\\
    \addr NORMANDIE UNIV, UNICAEN, ENSICAEN, CNRS, GREYC, 14000 CAEN, FRANCE\\
    \AND
    \name Loïc Simon \email loic.simon@ensicaen.fr\\
    \addr NORMANDIE UNIV, UNICAEN, ENSICAEN, CNRS, GREYC, 14000 CAEN, FRANCE\\
    \AND
    \name Julien Rabin \email julien.rabin@ensicaen.fr\\
    \addr NORMANDIE UNIV, UNICAEN, ENSICAEN, CNRS, GREYC, 14000 CAEN, FRANCE\\
    \AND
    \name Jalal Fadili \email jalal.fadili@ensicaen.fr\\
    \addr NORMANDIE UNIV, UNICAEN, ENSICAEN, CNRS, GREYC, 14000 CAEN, FRANCE\\
    }
\editor{None}

\maketitle

\begin{abstract}%
With the recent success of generative models in image and text, the question of their evaluation has recently gained a lot of attention.
While most methods from the state of the art rely on scalar metrics, the introduction of Precision and Recall (PR) for generative model has opened up a new avenue of research. 
The associated PR curve allows for a richer analysis, but their estimation poses several challenges.
In this paper, we present a new framework for estimating entire PR curves based on a binary classification standpoint. 
We conduct a thorough statistical analysis of the proposed estimates. As a byproduct, we obtain a minimax upper bound on the PR estimation risk.
We also show that our framework extends several landmark PR metrics of the literature which by design are restrained to the extreme values of the curve. 
Finally, we study the different behaviors of the curves obtained experimentally in various settings.
\end{abstract}%

\begin{keywords}
  machine-learning, generative-modeling, statistics, evaluation metric
\end{keywords}

\section{Introduction}
\label{sec:introduction}

\subsection{Problem Statement}
\label{sec:problem-statement}
In this paper, we consider metrics designed to evaluate the faithfulness of a generative model to the distribution it is trained to capture. The problem intrinsically consists in evaluating the “closeness” of a target distribution, hereafter denoted by $P$ and a generated one, denoted by $Q$. 
This problem in itself is a challenging one as generative models have recently been able to create ever more complex data in high dimension.
Additionally, the notion of quality in generative models is inherently subjective — unlike in supervised learning, there is no unique correct output or ground-truth for one generated data.
The notion of ``closeness'' stated above still remains to be precisely defined and several methods have been proposed to deal with this question, ranging from human feedback to the use of pre-trained encoder as a learned metric.
We will first briefly see how different methods have emerged with the will to tackle generative model evaluation and which refreshing point of view we present in this field.

\subsection{Context}
\label{sec:intro-SOTA}

In the early days of training deep generative models for still images, several scalar metrics were defined to objectively capture a notion of quality on generated images. While these metrics were appealing at first to compare two generative models, they were omitting a large quantity of information about the distributions under study.
To alleviate these drawbacks, \cite{sajjadi_AssessingGenerativeModels_2018} defined a new and formal framework to evaluate generative models. This framework allows to capture the quality of a generative model in terms of \textbf{fidelity}: \textit{``how close the generated data are from the target data?''} and \textbf{diversity}: \textit{``how much of the target space do the generated images cover?''}. 
In this framework, both information are captured through a Precision and Recall curve which is inherently different from those of binary classification. The intuitive description of the metric is described in Figure~\ref{fig:PR}.

\begin{figure}[!t]
    \centering
         \includegraphics[width=0.48\textwidth]{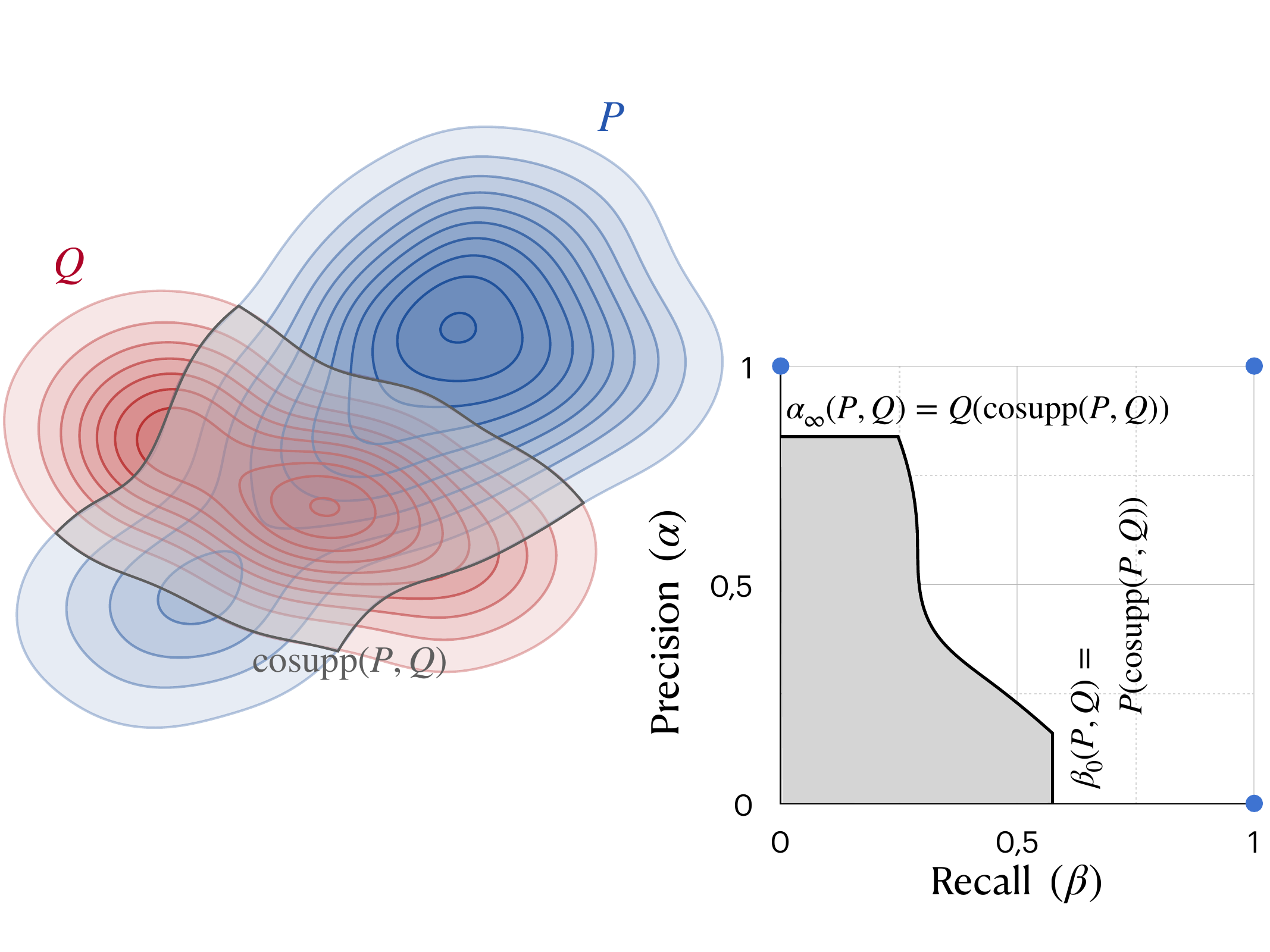}
    \caption{
    Left: two illustrative distributions $P$ and $Q$ (example borrowed from \citet{kynkaanniemi_ImprovedPrecisionRecall_2019}) ---
    Right: the PR-curve is the frontier of the shaded area composed of all admissible PR pairs $(\beta,\alpha)$. In essence, these pairs represent the mass of $P$ and $Q$ that one can recover by selecting a subset of the common support (gray area on the left). More precisely, by selecting regions of high likelihood of $P$, one trades Precision ($\alpha$) in favor of Recall ($\beta$).  The extreme values $\beta_0(P,Q)$ and  $\alpha_\infty(P,Q)$ embody the respective masses of the entire common support.
    }
    \label{fig:PR}
\end{figure}

While PR-curves are strongly theoretically grounded, practitioners have preferred scalar metrics of Precision and Recall over the entire curve. 
Whereas the initial formulation by \cite{sajjadi_AssessingGenerativeModels_2018} defined clusters based on samples using K-Means, \cite{kynkaanniemi_ImprovedPrecisionRecall_2019} defined scalar Precision and Recall metrics based on kNN support estimation.
\cite{naeem_ReliableFidelityDiversity_2020} proposed other metrics to evaluate fidelity and diversity, still based on kNN yet in a different manner. One key contribution was that these metrics were less sensitive to outliers. 
In \cite{khayatkhoei_EmergentAsymmetryPrecision_2023}, the authors proposed a new metric constructed in order to be robust in high dimension. This metric summarizes as the minimum between the two previous metrics.
Finally, \cite{park_ProbabilisticPrecisionRecall_2023} created novel Precision and Recall metrics estimating not simply the support of the distribution under study but estimating the probability density in a Parzen like framework.
A more detailed overview of the state of the art will be provided later on in Section~\ref{sec:link-literature}.

On the one hand, this effort is justified by practical arguments which entail simplicity as well as an intuitive understanding of the extreme values.
On the other hand, restricting to extreme values only allows for a very shallow comparison.
For instance, extreme Precision and Recall scores may both attain their saturation level of $1$ for entirely different distributions (as long as both distributions share the same support). 
On the contrary, Precision and Recall curves do capture the fundamental differences between two distributions in the transition zone between the two extreme values. 
Another concern with published PR metrics relates to their statistical consistency which is either provably lacking or at best only partially studied. 
A notable exception is \citep{pillutla_MAUVEScoresGenerative_2023}, in which a formal and theoretical framework is presented in the context of divergence frontiers introduced by \citep{djolonga_PrecisionRecallCurvesUsing_2020}. 
The authors propose several scalar summary metrics for such performance curves, and %
conduct a theoretical analysis to provide estimation error bounds for $f$-divergences. %
It is also worth mentioning \citep{kim_TopPRRobustSupport_2023} in which the authors propose a metric called topological Precision and Recall (topP\&R) along with its consistency analysis. However this latter is limited to extreme PR metrics and relies on strong assumptions (no tail in the distributions -- see their assumption A3) to obtain meaningful results.

\subsection{Contributions and Link to Prior Works}
\label{sec:intro-contributions}
In this work, we propose a novel approach for evaluating generative models with Precision and Recall, through the prism of binary classification. We also exploit the link between Precision and Recall and the Total Variation (TV) distance in order to make a fine grained statistical analysis on several of our proposed metrics. 
This analysis includes a non asymptotic convergence analysis for our KDE based methods which formally unveils the elephant in the room that is the curse of dimensionality.

In section \ref{sec:link-literature} we express how our novel framework actually coincides with the methods from the relevant literature on the extreme values. In particular, we detail how our approach restricted to these extreme values can be viewed as a dual approach of common metrics. In section \ref{sec:experiments} we first lead experiments on a controlled setting in which we can compare our proposed methods to the known ground truth. We then conduct experiments on real data and compare them to the results of articles from the same literature. Finally, in order to come to a conclusion concerning these ``real world'' experiments, we construct a hybrid experiment in which ground truth can effectively be computed and a qualitative conclusion can be made on our extended metrics.

\section{A New Framework for Evaluating Generative Models}
\label{sec:new-framework}

In this section, we formally present our novel framework for evaluating generative models. This framework stems from the dual approach proposed in \cite{simon_RevisitingPrecisionRecall_2019} related to binary classification. 
Before presenting this formulation, let us first introduce the notations.

\subsection{Notations}
\label{sec:notations}
\begin{itemize}
    \item $\langle \cdot , \cdot \rangle$ is the standard scalar product on $\R^d$ and $\|\cdot\|$ the associated norm;
    \item $\measSpace \subset \R^d$: a measurable space;
    \item $\probaSet$: set of probability measures on $\measSpace$; 
    \item $P,Q\in\probaSet$ (e.g. real and generated distributions);
    \item $\X$ (resp. $\Y$): finite training set of $N$ samples from $P$ (resp. $Q$) ($\# \X= \# \Y = N$);
    \item $\X'$ (resp. $\Y'$): finite testing set of $N$ samples from $P$ (resp. $Q$) ($\# \X'= \# \Y' = N$);
    \item $\bar P, \bar Q$: empirical distributions formed by the $N$ training samples;
    \item $\hat P, \hat Q$: estimates of the probability $P$ and $Q$; %
\end{itemize}

We also recall the following definitions.
\begin{definition}[TV norm] The total variation (TV) norm of a signed measure $\mu$ over $\measSpace$ is
    $$\TV{\mu} := \frac12 \sup_{A} \left( \mu(A) - \mu(A^c) \right)$$
    where $A$ is a measurable set and $A^c$ is the complement of $A$.\\
    The TV distance between $P, Q \in probaSet$ is
    $$
    \TV{P-Q} = \\ \int_{dP\ge dQ} d(P-Q) = \sup_{A} \left( P(A) - Q(A) \right).
    $$
     When the probability measures have densities w.r.t to the Lebesgue measure, \emph{i.e.,} $dP(x) = p(x)dx$ and $dQ(x) = q(x)dx$, the TV distance can be expressed as the half $L^1(\Omega)$ norm, 
     $$
     \TV{p-q} = \tfrac12 \|p-q\|_1 = \tfrac12 \int_{\Omega} |p(x)-q(x)| dx .
     $$
\end{definition}

\begin{definition}[Subgaussianity]
\label{def:subgaussianity}
A probability distribution $P$ on $\R^d$ is said to be subgaussian with constant $K$ iff $\forall \theta\in\R^d$ 
\begin{align}
\E_{X\sim P}\left[\exp\left(\left\langle \theta, X-\E[X] \right\rangle\right)\right]\leq&  \exp\left(\frac 12 K^2\Vert\theta\Vert^2\right) .
\end{align}
We will say that $X\sim P$ is a $K$-subgaussian random vector.
Besides the smallest value of $K^2$ for which $X$ is $K$-subgaussian is denoted $\Var_G(X)$ and is called the subgaussian variance of $X$ (with the convention $\Var_G(X)=+\infty$ iff $X$ is not subgaussian).
\end{definition}

\begin{definition}
\label{def:sobolev}
The Sobolev space $W^{1,1}$ on $\measSpace\subset\R^d$ is defined by:

\begin{equation*}
    W^{1,1}(\Omega) := \left\{ p \in L^{1}(\measSpace):\nabla p \in L^{1}(\Omega) \right\}
\end{equation*}
where $\nabla p$ is {\em a priori} defined in the distributional sense. This space is equipped with the semi-norm 
$$
\Vert p\Vert_{W^{1,1}(\measSpace)}:=\int_\measSpace \|\nabla p(x)\| dx .
$$
\end{definition}

\subsection{PR Curve Starting Point}

\paragraph{The initial framework}
As initially described in \cite{sajjadi_AssessingGenerativeModels_2018} the PR set between $P$ and $Q$ is defined as the set of non-negative pairs $(\alpha, \beta)$ such that $\exists \mu\in\probaSet$ verifying both $P\geq \beta \mu$ and $Q\geq \alpha \mu$.
This set is denoted as $\PRD(P,Q)$.
In a nutshell, the two conditions translate the fact that the ``probe'' distribution $\mu$ can simultaneously ``extract''  some mass $\beta$ from $P$ and $\alpha$ from $Q$.
Note that the PR set is included within $[0,1]^2$ and it is quite regular: it is topologically closed and also a lower set for the component-wise order.
The latter means that $\forall \alpha'\leq\alpha,\forall\beta'\leq\beta$, $(\alpha, \beta)\in \PRD(P,Q)\implies (\alpha', \beta')\in \PRD(P,Q)$. As a result, this set is fully characterized by its (upper-right) Pareto frontier denoted by $\dPRD(P,Q)$ which can be parameterized as $\dPRD(P,Q) = \{(\alpha_\lambda,\beta_\lambda), \lambda \in\bar\R^+\}$ with 
\begin{equation}
\label{eq:primalPR}
\begin{split}
\alpha_\lambda &= (\lambda P \wedge Q)(\Omega) = \frac{\lambda+1}2 - \TV{\lambda P -Q}\\
\beta_\lambda &= \frac{\alpha_\lambda}{\lambda}
\end{split}
\end{equation}
where $\wedge$ is the minimum operator between two measures. The link with TV can directly be adapted from \cite[Lemma~4.4.1]{verine_QualityDiversityGenerative_2024}. Note that since Recall can be deduced from Precision as $\beta_\lambda=\frac{\alpha_\lambda}{\lambda}$, we will often solely focus on the Precision $\alpha_\lambda$ value.

The information captured by this whole curve encompasses both extreme Precision and Recall values corresponding to $\alpha_\infty$ and $\beta_0$ which play a central role in the literature starting from \cite{kynkaanniemi_ImprovedPrecisionRecall_2019}. In addition, it also describes how similarly the mass is distributed within the common support of $P$ and $Q$ (see \citep{siry_TheoreticalEquivalenceSeveral_2022} for details). 
This curve can also be characterized in a \textbf{dual form, based on a specific two-sample classification problem} as developed in \cite{simon_RevisitingPrecisionRecall_2019}.

\paragraph{Dual approach description of PR curves}

This dual approach allows to express the Precision and Recall pairs $(\alpha_\lambda,\beta_\lambda)$ for a given $\lambda$ as a minimization problem over the hypothesis class $\F$ which is composed of all binary classifiers on $\Omega$. The expression under minimization is a linear combination of the false negative rate (FNR) and the false positive rate (FPR) of the binary classifier $f$, that is to say the probability that a sample $\Y\sim Q$ was classified as a sample from $P$ for FNR (resp. $\X\sim P$ classified as coming from $Q$ for FPR). With these notations set once and for all, the dual approach gives:
\begin{equation}
\label{eq:dualPR}
\begin{split}
    \alpha_{\lambda}(P,Q) &= \min_{f \in \F} \left\{ \lambda \fpr(f) + \fnr(f) \right\} \\
\end{split}
\end{equation}
where the false positive and negative rates write: 
\begin{equation}
    \label{eq:fpr}
    \fpr(f) = \int (1-f) dP \quad\text{ and }\quad \fnr(f)=\int f dQ .
\end{equation}

\subsection{A New Framework for Estimating $(\alpha_\lambda,\beta_\lambda)$}
\label{subsec:new-framework}
While \eqref{eq:dualPR} provides a rigorous alternative definition of Precision, it is  intractable to estimate in practice as minimization occurs on the the entire set of binary classifiers $\F$. 
To tackle this issue, \cite{simon_RevisitingPrecisionRecall_2019} propose to approximate the optimal Bayes classifier $f^*$ through the optimization over a parameterized family $\F_\theta$ of deep neural networks.
While this method does yield relevant Precision and Recall curves, it is computationally intensive since it requires training one classifier per point on the curve. 
In the proposed approach, we propose to replace the neural network parameterization by a family of non-parametric classifiers, \emph{e.g.} based on nearest-neighbors. For a specific classification method $M$, the corresponding family takes the form:
\begin{align}
\label{def:family-parametrised-classifiers}
\begin{split}
    \F^{M} &= \{f_\gamma^M: \gamma \in [0,+\infty]\} \\
    f^M_\gamma(z)& = \1_{\{\gamma c(\X,z) \geq c(\Y,z)\}} ,
\end{split}
\end{align}
where $c(\X,z)$ (resp. $c(\Y,z)$) denotes a matching score between some sample $z$ and samples from $\X$ (resp from $\Y$). 
The key idea is that a classifier will attribute a sample $z$ to distribution $P$ if it ``matches" better the samples from $P$ than it does those from $Q$.
Concrete instances of scores corresponding to standard classification rules will be given in an upcoming section.

Adapting the dual form \eqref{eq:dualPR} to this new family of classifiers $\F^M$, we obtain the following estimators of the Precision and Recall values based on Empirical Risk Minimization:
\begin{align}
    \label{eq:hatPRC}
        \hat \alpha^M_\lambda &:=\min_{\gamma \in \R^+}  \left\{\lambda\overline\fpr(f^M_\gamma) + \overline\fnr(f^M_\gamma) \right\} , %
\end{align}
where $\overline \fpr$ (resp. $\overline \fnr$) are the empirical estimates of the false positive (resp. negative) rate \eqref{eq:fpr}. 

In addition to these empirical estimators, we introduce for the theoretical analysis conducted in the next part another estimator for the Precision $\alpha_\lambda$. 
This estimator, denoted as $\hat \alpha_\lambda^\text{TV}$, is related to the TV distance between estimated distributions:
\begin{definition}[TV based plug-in estimator of Precision]
\label{def:alpha-TV}
\begin{equation}\label{eq:alpha-TV}
    \hat \alpha_\lambda^\mathrm{TV} :=  \alpha_\lambda(\hat P,\hat Q)
    = \frac{\lambda+1}{2} - \TV{\lambda \hat P - \hat Q} ,
\end{equation}
\end{definition}
where $\hat P,\hat Q$ are the estimated distributions based on samples $\X,\Y$ and the selected method $M$.
This estimator can be thought of as the "ground truth value" of Precision at $\lambda$ for the estimated distributions $\hat P,\hat Q$ and is in fact the plugin-in TV estimator of Precision at $\lambda$. While this estimator is not tractable in practice, it is useful in the statistical analysis that will be detailed in Section~\ref{sec:theoretical-consistency}. 
This notation emphasizes the fact that here the Precision is estimated directly from the TV distance between estimated measures, which is different from the proposed classifier-based approach $\hat \alpha^M_\lambda$ defined in~\eqref{eq:hatPRC}.

\subsection{Instantiating our Framework on kNN and KDE Families}
\label{sec:instantiating-framework}
In this section, we present effective families ---based on the $k$-nearest neighbors method (kNN) and KDE--- of classifiers $\F^M$ and their advantages. This section will be complemented in Section~\ref{sec:link-literature} with %
some extensions to encompass  
other metrics from the literature.

\paragraph{kNN family}
\label{sec:instantiate-knn}
An intuitive way of defining a binary classifier
which aims at discriminating between $P$ and $Q$ from sample points %
is to define a majority vote: for a given test sample, it simply consists in comparing the number of closest samples from $\X$ and from $\Y$.
We introduce the following family of kNN binary classifiers
\begin{equation*}
f^{\text{kNN}}_\gamma(z)=\1_{\gamma\#\{x\in \X / x\in\BKNN^{\X\cup\Y}(z)\}\geq \#\{y\in \Y / y\in\BKNN^{\X\cup\Y}(z)\}} \; ,
\end{equation*}
where the parameter $\gamma$ is related to the ratio of samples originating from $Q$. 
For $\gamma=1$, this binary classifier is an indicator function that outputs 1 when there are more samples from $\X$ falling in the kNN radius ball of $z$ than there are from $\Y$. This parameterized classifier will be discussed in depth in Sections~\ref{sec:knn-convergence} and \ref{sec:link-literature}. 

\paragraph{KDE family}
\label{sec:instantiate-kde}
Another family which naturally emerges when thinking of classification based on inter-sample distances is Kernel Density Estimation (KDE). We propose the simplest instance of KDE which uses fixed bandwidth. This method uses a uniform kernel and may be defined as follows:
\begin{equation}\label{eq:KDE_classifier}
    f_\gamma^{\text{KDE}}(z) 
    = \1_{\frac{\hat p(z)}{\hat q(z)}\geq \frac 1\gamma} ~ ,
\end{equation} 
with estimated density
$\hat p(z) \propto \sum_{x\in\X} \1_{\Vert x-z\Vert \leq \sigma}$ (similarly for $\hat q$)
using a positive bandwidth parameter $\sigma$.

\paragraph{Corresponding estimators}
With the parametrized families of classifiers for $\text{kNN}$ and $\text{KDE}$ we can now define the corresponding Precision metrics as:

\begin{align*}
    \hat\alpha^{\text{kNN}}_\lambda(P,Q)&=\min_{\gamma \in \R^{+}} \left\{ \lambda \fpr(f^\text{kNN}_\gamma) + \fnr(f^\text{kNN}_\gamma) \right\} , \\
    \hat\alpha^{\text{KDE}}_\lambda(P,Q)&=\min_{\gamma \in \R^{+}} \left\{ \lambda \fpr(f^\text{KDE}_\gamma) + \fnr(f^\text{KDE}_\gamma) \right\} .
\end{align*}

\subsection{Extending the Classification Framework}
\label{sec:extensions}
\paragraph{Introduction of splitting}
Based on the previous classification standpoint, it appears natural to split the dataset into two subsets: a ``training set'' ($\X \cup \Y)$ which is used to define the family of classifiers $f_\gamma^M$ and a ``test set'' ($\X' \cup \Y'$) used to evaluate the empirical error rates as defined above.
In this setting, the conventional empirical estimation of FPR and FNR for a classifier $f$ simply write:
\begin{equation*}
    \overline \fpr(f) := \frac{1}{|\X'|}
    \sum_{x \in \X'} (1-f(x))
    \qquad \text{ and } \qquad
 \overline \fnr(f) := \frac{1}{|\Y'|}
    \sum_{y \in \Y'}
    f(y).
\end{equation*}

By splitting the dataset, the law of large numbers applies which is crucial for the consistency of $\hat \alpha_\lambda^{M}$ which will be shown in Section~\ref{sec:theoretical-consistency}. 

Note however that because of splitting, it is possible that none of the classifier $f^M_\gamma$ ensures a null FPR. As a result, it is possible that $\hat\alpha^M_\lambda>1$. 
As a remedy, in our experiments, we always complement $\F^M$ with the trivial classifiers $f\equiv 1$ and $f\equiv 0$ that predict either $P$ or $Q$ uniformly.

From now on in the splitting setting $\X\cup\Y$ will denote the training set used to fit the classifiers and $\X'\cup\Y'$ the set on which the error rates are evaluated.

\paragraph{Hyper-parameter $k$}
In addition to the introduction of splitting, we consider modifying the hyper-parameter $k$ for the kNN balls approach. In the kNN literature (see e.g. \citet{devroye2013probabilistic}), it is known that as the number of samples $N$ gets bigger, $k$ must also increase but at a slower rate (this will be a key element to ensure the consistency of the kNN estimator in Theorem~\ref{thm:optimality}). We therefore consider for each approach, setting $k\propto
\sqrt N$, 
which is a standard choice of parameters in kNN classification and allows to benefit from the consistency Theorem~\ref{thm:optimality}.

\section{Theoretical Consistency Guarantees}
\label{sec:theoretical-consistency}
In this section, we will first prove asymptotic consistency of our kNN-based estimator. 
Then we will turn to establishing non-asymptotic consistency results for our KDE-based estimator. The second result highlights the curse of dimensionality as the upper-bound depends exponentially on the dimension.
These convergence analysis go way further than the ones in the relevant literature which includes examples where metrics are not even consistent.

\subsection{Asymptotic Consistency with kNN}
\label{sec:knn-convergence}

We first prove asymptotic statistical consistency of Precision and Recall estimators when using our kNN method defined in Section~\ref{sec:instantiating-framework}.
Once again, we will be focusing on $\alpha_\lambda$ as the results can be deduced for $\beta_\lambda$ using the relationship $\beta_\lambda=\alpha_\lambda / \lambda$.

\begin{theorem}
 \label{thm:optimality}
    Let $\lambda\in\bar\R^+$, $k\geq 3$ and $N=\#\X=\#\Y$. If $k\to\infty$ and $\tfrac{k}{N}\to 0$ as $N \to \infty$, and denoting
    $$
    \Gamma^*_\lambda=\argmin_{\gamma} \lim_{k\to\infty, \tfrac{k}{N}\to 0} \E[\lambda \fpr(f_\gamma^{\mathrm{kNN}})+\fnr(f_\gamma^{\mathrm{kNN}})] ,
    $$
    then, the following hold:
    \begin{enumerate}
        \item $\lambda\in\Gamma^*_\lambda$.
        \item $\E[\hat\alpha_\lambda^{\mathrm{kNN}}] \to \alpha_\lambda$ assuming that data split was used.
    \end{enumerate}
 \end{theorem}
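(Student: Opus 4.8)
The key to both statements is the classical theory of $k$NN classification (as in \citet{devroye2013probabilistic}): when $k\to\infty$ and $k/N\to 0$, the $k$NN rule is consistent in the sense that the regression estimate converges to the true posterior. To exploit this, I would first recast the family $f_\gamma^{\text{kNN}}$ in terms of a single posterior-probability estimate. Pool the $2N$ training points $\X\cup\Y$, label a point $+1$ if it comes from $\X$ and $0$ if it comes from $\Y$, and let $\hat\eta_N(z)$ be the fraction of the $k$ pooled nearest neighbours of $z$ that carry label $+1$. Then $f_\gamma^{\text{kNN}}(z)=\1_{\gamma\hat\eta_N(z)\geq 1-\hat\eta_N(z)}=\1_{\hat\eta_N(z)\geq \frac{1}{\gamma+1}}$, i.e. the whole family is obtained by sweeping a threshold $t=\frac{1}{\gamma+1}\in[0,1]$ across the \emph{same} score $\hat\eta_N$. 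Under the $k/N\to0,\,k\to\infty$ regime, $\hat\eta_N(z)\to \eta(z):=\frac{dP}{dP+dQ}(z)$ in the appropriate ($L^1(P+Q)$, hence in probability under $P+Q$) sense, by the standard Stone-type argument. So the limiting objects are threshold classifiers on the true posterior $\eta$.

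\medskip
\textbf{Part 1.} Taking limits inside the expectation (justified because $\fpr,\fnr\in[0,1]$, so dominated convergence applies to the empirical-process side, and the $k$NN consistency result controls the bias), I would show
$$
\lim \E[\lambda\fpr(f_\gamma^{\text{kNN}})+\fnr(f_\gamma^{\text{kNN}})] = \lambda\int(1-\1_{\eta\geq t})\,dP + \int \1_{\eta\geq t}\,dQ,
\qquad t=\tfrac{1}{\gamma+1}.
$$
Now one invokes the dual/primal identity already recorded in the paper, Equations~\eqref{eq:dualPR}--\eqref{eq:primalPR}: the minimiser over \emph{all} classifiers of $\lambda\fpr(f)+\fnr(f)$ is the Bayes-type test comparing $\lambda\,dP$ with $dQ$, namely $f^*=\1_{\lambda\,dP\geq dQ}=\1_{\eta\geq \frac{1}{\lambda+1}}$. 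This is exactly the threshold classifier obtained for $\gamma=\lambda$, i.e. $t=\frac{1}{\lambda+1}$. Hence $\gamma=\lambda$ attains the unconstrained minimum $\alpha_\lambda$, and since the limiting family $\{\1_{\eta\geq t}\}$ is a subset of $\F$, no $\gamma$ can do better; therefore $\lambda\in\Gamma^*_\lambda$ and the attained value is $\alpha_\lambda$.

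\medskip
\textbf{Part 2.} With data splitting, $\hat\alpha_\lambda^{\text{kNN}}=\min_\gamma\{\lambda\overline\fpr(f_\gamma^{\text{kNN}})+\overline\fnr(f_\gamma^{\text{kNN}})\}$ where the classifiers are built on $\X\cup\Y$ and the error rates evaluated on the independent set $\X'\cup\Y'$. Conditionally on the training set, each $\overline\fpr(f_\gamma)$ and $\overline\fnr(f_\gamma)$ is an unbiased estimate of $\fpr(f_\gamma),\fnr(f_\gamma)$. Write $g_N(\gamma)=\lambda\overline\fpr(f_\gamma^{\text{kNN}})+\overline\fnr(f_\gamma^{\text{kNN}})$ and $G_N(\gamma)=\lambda\fpr(f_\gamma^{\text{kNN}})+\fnr(f_\gamma^{\text{kNN}})$. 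From $\min g_N \le g_N(\lambda)$ one gets $\E[\hat\alpha_\lambda^{\text{kNN}}]\le \E[G_N(\lambda)]\to\alpha_\lambda$ by Part 1. For the reverse inequality one needs $\E[\min_\gamma g_N(\gamma)]\geq \alpha_\lambda-o(1)$: since every $f_\gamma^{\text{kNN}}\in\F$, we have $G_N(\gamma)\geq\alpha_\lambda$ pointwise, so it suffices to bound $\E[\sup_\gamma (G_N(\gamma)-g_N(\gamma))]\to 0$. The crucial simplification is that, although $\gamma$ ranges over $[0,\infty]$, the classifier $f_\gamma^{\text{kNN}}$ restricted to the finite pooled training sample takes at most $2N+1$ distinct values as $\gamma$ varies (the threshold $t$ only matters relative to the finitely many values $\hat\eta_N$ takes), so on the test set the supremum is over a finite family of size $O(N)$; a union bound with Hoeffding's inequality conditionally on the training data gives $\E[\sup_\gamma |G_N(\gamma)-g_N(\gamma)|]=O(\sqrt{\log N/N})\to 0$. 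Combining the two inequalities yields $\E[\hat\alpha_\lambda^{\text{kNN}}]\to\alpha_\lambda$.

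\medskip
\textbf{Main obstacle.} The delicate point is the interchange of limit and expectation/minimisation in Part 1 — specifically, upgrading the pointwise-a.e. $k$NN consistency $\hat\eta_N\to\eta$ into convergence of the \emph{minimised} objective uniformly in $\gamma$, and ruling out that the infimum over $\gamma$ ``escapes'' to a degenerate threshold in the limit. I would handle this by arguing on the score side: uniform (in the threshold $t$) control follows from $L^1(P+Q)$-convergence of $\hat\eta_N$ together with the fact that the map $t\mapsto \lambda P(\eta\geq t)^c+Q(\eta\geq t)$ is of bounded variation, so small perturbations of the score produce small perturbations of all threshold risks simultaneously; the complement classifiers $f\equiv 0,1$ appended to $\F^{\text{kNN}}$ take care of the endpoints $t\in\{0,1\}$.
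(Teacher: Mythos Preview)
Your proposal is correct. It is grounded in the same body of $k$NN theory as the paper but takes a different concrete route. The paper proceeds in two explicit steps: (i) for \emph{fixed} $k$, let $N\to\infty$ and invoke \citet[Theorem~5.2]{devroye2013probabilistic} to obtain a closed-form limiting risk in terms of Binomial tail probabilities $\P\{\mathrm{Binom}(k,\eta(Z))\lessgtr k/(\gamma+1)\mid Z\}$; (ii) then let $k\to\infty$ and apply Hoeffding's inequality to these Binomial tails, treating the cases $\lambda\eta(z)\lessgtr 1-\eta(z)$ and the boundary $\lambda\eta(z)=1-\eta(z)$ separately. You instead bypass the intermediate Binomial expression by appealing directly to Stone-type $L^1(P{+}Q)$-consistency of the regression estimate $\hat\eta_N\to\eta$, and then read off the convergence of each threshold risk via the standard excess-risk bound. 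The paper's approach is more computational, yours more conceptual; both are standard.

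Two minor remarks. Your counting in Part~2 is loose but harmless: since $\hat\eta_N$ takes values in $\{0,1/k,\ldots,1\}$, the family $\{f_\gamma^{\text{kNN}}\}_\gamma$ has at most $k{+}2$ distinct members (not $2N{+}1$), so Hoeffding plus a union bound already gives $O(\sqrt{\log k/N})$. More importantly, your Part~2 is in fact more detailed than the paper's, which simply asserts that $R_\lambda(f_\lambda^{\text{kNN}})\to\alpha_\lambda$ ``effectively implies both items'' without spelling out the lower bound $\liminf\E[\hat\alpha_\lambda^{\text{kNN}}]\geq\alpha_\lambda$; your uniform-deviation argument over the finite classifier family handles this cleanly. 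Conversely, your ``main obstacle'' paragraph overstates the difficulty for Part~1: no uniformity in $\gamma$ is needed there, since every $f_\gamma^{\text{kNN}}\in\F$ has population $\lambda$-risk at least $\alpha_\lambda$ trivially, so the argmin claim follows as soon as the limit at $\gamma=\lambda$ equals $\alpha_\lambda$.
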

The proof is provided in Appendix~\ref{app:proof-consistency-knn} and is similar to the standard Bayes consistency results of the kNN classifier (see, \emph{e.g.} \citep[Chapters~5 and~6]{devroye2013probabilistic}) .
     It is essentially adapted to the fact that the risk is class weighted \emph{i.e.} $R_\lambda(f)=\lambda \fpr(f) + \fnr(f)$ instead of the classical one $R(f)=\frac 12(\fpr(f)+\fnr(f))$.

In short, this theorem highlights that with increasing number of samples $N$, if the number of nearest neighbors $k$ is chosen to grow large but more slowly than $N$ so that it remains negligible in comparison, then our kNN-based estimator of Precision tends towards the ground-truth Precision in expectation.

\subsection{Non-Asymptotic Consistency with KDE: TV counterpart}
\label{sec:kde-convergence}

In this section, we derive an intermediate result about the optimal bound associated to the \emph{risk} of the estimated Precision based on the KDE. 
More precisely, we consider the equivalent formulation \eqref{eq:primalPR} of the Precision $\alpha_\lambda(P,Q)$ and the corresponding plugin-in estimator \eqref{eq:alpha-TV} based on the KDE of bandwidth $\sigma$. %

For simplicity, we use the same Parzen-window kernel $k_\sigma(z) \propto \1\{\Vert z \Vert \leq \sigma \}$ to estimate both densities, and we consider the estimator $\hat P := \bar P \ast k_\sigma$ (respectively $\hat Q := \bar Q \ast k_\sigma$),
where $\bar P$ is the empirical distribution of $N$ \emph{i.i.d.} samples $\cal X$ drawn from $P$ (resp. $\bar Q$ is the empirical distribution of $N$ samples $\cal Y$ from $Q$).\\

From the expression of $\alpha_\lambda$ in \eqref{eq:primalPR} and using standard statistical arguments, it is well-known that if one wants to establish precise non-asymptotic convergence rates, one has to trade on the generality w.r.t to the class of distributions.
Indeed, any estimator of $\alpha_\lambda(P,Q)$ can be trivially turned into an estimator of $\TV{\lambda P-Q}$ and vice versa. In that sense, the estimation difficulty of both functionals is the same. In fact, the difficulty of estimating $\TV{P-Q}$ (i.e $\lambda=1$) is not much studied in the literature but one can conjecture that it is related to the difficulty of estimating $P$ and $Q$ under the TV norm (more precise statements will be made later on in the paper).
It is important to know that in \citet{devroye_NoEmpiricalProbability_1990}, the authors exhibit a scenario in which for any number of samples $N$, and any estimation scheme, the estimation of a worst-case distribution $P$ is $1/2$ away from the actual target distribution in terms of TV norm. Such scenario relies on the potential existence of a singular component in the distribution (i.e. a component that is neither discrete nor absolutely continuous). Therefore in order to obtain any non-asymptotic consistency result, one usually requires some regularity and tail assumptions on the target distribution.
In our case, we will assume Sobolev-type regularity and subgaussianity (see Definition~\ref{def:subgaussianity}).

\begin{theorem}
\label{thm:minimax-upper-bound-via-plugin}
Assume that both $P$ and $Q$ are $K$-subgaussian and have densities w.r.t. Lebesgue measure that belong to the Sobolev ball of radius $R$. Set $\sigma\propto \frac 1 {N^{\frac 1{2+d}}}$. Then
\begin{equation}\label{eq:upper-bound-via-plugin}
    \E|\hat\alpha_\lambda^{\mathrm{TV}} -\alpha_\lambda(P,Q)|= O\left(\frac{\lambda+1}{N^\frac{1}{2+d}}\right) ,
\end{equation}
where the constant in the big O notation depends only on $K$, $R$ and the dimension $d$ (not on $P$ and $Q$ nor $\lambda$ and $N$).

Denoting $\mathcal M_{R,K}$ the set of probability distributions pairs $(P,Q)$ that are both $K$-subgaussian and whose densities are in the Sobolev ball of radius $R$, we obtain the  minimax upper-bound
\begin{equation}\label{eq:minimax-upper-bound-via-plugin}
    \inf_{\hat \alpha_\lambda} 
    \sup_{(P,Q)\in \mathcal M_{R,K}} 
        \E_{P^{\otimes N}\times Q^{\otimes N}}|\hat\alpha_\lambda -\alpha_\lambda(P,Q)|
    = O\left(\frac{\lambda+1}{N^\frac{1}{2+d}}\right) .
\end{equation}
\end{theorem}

In the remainder of this subsection, we will prove the theorem through a series of intermediate lemmas.

The following lemma highlights how the estimation error made on the Precision and Recall metrics can be transposed to a TV error estimation made on the distributions under study.
\begin{lemma}\label{prop:plugin-trivial-bound}%
\begin{align}\label{eq:plugin-trivial-bound}
    \E\left[\left|\hat \alpha^{\mathrm{TV}}_\lambda - \alpha_\lambda(P,Q)\right|\right]\leq&\lambda\E\left[\TV{\hat P -P}\right]+\E\left[\TV{\hat Q-Q}\right] .
\end{align}
\end{lemma}
\begin{proof}
Indeed
\begin{align*}
    \E\left[\left|\hat \alpha^{\mathrm{TV}}_\lambda - \alpha_\lambda(P,Q)\right|\right]=&\E\left[\left|\TV{\lambda\hat P -\hat Q} - \TV{\lambda P -Q}\right|\right] &\\
    \leq& \E\left[\TV{\lambda\hat P -\hat Q - (\lambda P -Q)}\right] & \text{(reverse triangle inequality)}\\
    \leq&\lambda\E\left[\TV{\hat P -P}\right]+\E\left[\TV{\hat Q-Q}\right] & \text{(direct triangle inequality)} .
\end{align*}
\end{proof}

Because of the previous bound, one can now consider the TV risk of the KDE estimator of $P$ (or $Q$). To do so, it is standard to use the bias-deviation decomposition which follows from the triangular inequality.
\begin{lemma}[Bias-Deviation decomposition]\label{prop:bias_deviation_bound}
\begin{align}
    \E\left[\TV{\hat P -P}\right]\leq \underbrace{\E\left[\TV{\hat P -\E[\hat P]}\right]}_{\mathrm{deviation}} + \underbrace{\E\left[\TV{\E[\hat P] -P}\right]}_{\mathrm{bias}} .
\end{align}
Besides, for the considered estimator $\hat P:=\bar P\ast k_\sigma$, we have
\begin{equation}
    \E[\hat P] = P\ast k_\sigma .
\end{equation}
\end{lemma}

\begin{lemma}[Deviation bound]\label{prop:risk_deviation}
For $P$ a $K$-subgaussian probability distribution defined in $\R^d$, $\bar P$ an empirical distribution based on $N$ samples of $P$ and a Parzen-window kernel $k_\sigma$ of width $\sigma > 0$ we have
\begin{equation}
        \E [\TV{\bar P \ast k_\sigma - P\ast k_\sigma}]  \leq \sqrt{\Gamma\left(\frac d2+1\right)}  8^{\frac d4} \left(\frac\eta d +\frac{K^2}{\sigma^2}\right)^{\frac d4}\frac 1{2\sqrt N} ,
\end{equation}
where $\eta$ is a universal constant independent from $d$ (see Proposition~\ref{prop:subgauss-props} for details) and $\Gamma$ is Euler's gamma function.
\end{lemma}

See \ref{sec:proof-smoothed-TV-bound-highSNR} for the proof.

\begin{remark}
Lemma~\ref{prop:risk_deviation} is stated for the Parzen-window kernel.
Nonetheless the proof can be easily extended to any kernel $k$ obeying the following growth condition:
$$
\exists c>0, \forall z\in\R^d, k^2(z)\leq c \tilde k(z) ,
$$
where $\tilde k$ is non-negative subgaussian kernel (i.e. corresponds to a density that is subgaussian).
In particular, this condition does not require the original kernel $k$ to be non-negative, which is an important flexibility in order to control the bias under high-order smoothness conditions. %
\end{remark}

In what follows, we consider probability distributions having densities which lie in a Sobolev space $W^{1,1}(\R^d)$. %

\begin{lemma}[Bias bound]\label{prop:risk_bias}
\label{prop:error-estimator-tv-density-bias-term}
Let $R>0$ and assume that $P$ has a density $p \in W^{1,1}(\R^d)$ w.r.t. Lebesgue measure   such that $\Vert p\Vert_{W^{1,1}(\R^d)}\leq R$. Then
$$
\TV{P-P\ast k_\sigma}\leq \frac R2 \sigma .
$$
\end{lemma}

See \ref{proof:tv-upperbound-bias} for the proof.

\begin{remark}\label{rem:risk_bias}
Again this lemma can be extended to more general kernels. For first-order smoothness, any  kernel $k$ (not necessarily non negative) is fine, as long as $\int \Vert z\Vert |k(z)|dz<+\infty$ (this quantity would of course appear in the upper bound).  For higher order smoothness $s$, the previous condition is replaced by $\int \Vert z\Vert^s|k(z)| dz<+\infty$ along with assumptions of vanishing signed moments up to order $s-1$. The proof is again similar, but rely on a higher order Taylor expansion. In this context, the bias term is controlled as $O(\sigma^s)$. This kind of result is standard in the approximation folklore and is mentioned merely for completeness.
\end{remark}

\begin{proof}[of Theorem~\ref{thm:minimax-upper-bound-via-plugin}]
First, combining Lemma~\ref{prop:risk_deviation} and Lemma~\ref{prop:risk_bias}, and optimizing for $\sigma$ in the resulting bound, the best bias-deviation trade of is achieved for the devised choice $\sigma\propto \frac 1 {N^{\frac 1{2+d}}}$. Plugging this into Lemma~\ref{prop:bias_deviation_bound} and then using Lemma~\ref{prop:plugin-trivial-bound} yields the first claim of Theorem~\ref{thm:minimax-upper-bound-via-plugin}. The second claim follows from the first one as the left hand side of \eqref{eq:minimax-upper-bound-via-plugin} is obviously upper bounded by the one of \eqref{eq:upper-bound-via-plugin}. 
\end{proof}

\begin{remark}
The result of Theorem~\ref{thm:minimax-upper-bound-via-plugin} is consistent with known results on density estimation under $L^1$ or $TV$ norm. Roughly speaking, for smoothness order $s$ (e.g in terms of Sobolev spaces or other regularity spaces), the minimax lower bound rate is known to be $\frac C{N^{\frac s{2s+d}}}$ for $C > 0$; see \cite{tsybakov2009nonparametric}. Following the discussion in Remark~\ref{rem:risk_bias}, to obtain an estimator for larger smoothness orders, it is well known that the Parzen-Window (constant) kernel is not sufficient. Instead one needs to use more regular kernels with vanishing signed moments up to order $s-1$, which implies that the kernel cannot be non-negative anymore.

That being said, one should realize from the proof of Lemma~\ref{prop:plugin-trivial-bound} that what really matters for our purpose is consistent  estimation of $\alpha_\lambda(P,Q)$, or equivalently, of $\TV{\lambda P-Q}$. Even for the simple case of $\lambda=1$, less is known about the consistency rate of $\TV{P-Q}$ except that it is necessarily faster than the rate for density estimation under $TV$ norm. This distinction between the convergence rate for an estimation under a functional vs the estimation of the functional itself is classical (see e.g. \citep{lepski1999estimation}). Usually the rates differ by a slow correcting factor (e.g. logarithmic or even sub-logarithmic in $N$). 

On the other hand,  the dependence we have obtained for the prefactor w.r.t the value of $\lambda$ is $\lambda+1$. This stems from the use of the triangular inequality. This approach is quite crude, since it would imply an infinite pre-factor when $\lambda\to\infty$, while the  consistency bound is necessarily lower than $\frac 12$ for all $\lambda \in [0,+\infty]$. Indeed, we know that the estimation target $\alpha_\lambda\in [0,1]$, so that trivial estimator $\hat \alpha_\lambda=\frac 12$ display the prescribed error of $\frac 12$ (see the discussion on the onset of Section~\ref{sec:kde-convergence}).

Intuitively, if one considers a common support hypothesis, then the  consistency error is zero for $\lambda\to\infty$ since we know for sure that $\alpha_\infty(P,Q)=1$ in that case. So one should lean towards thinking of $\lambda=1$ as the most difficult configuration. That being said, despite being widespread in density estimation, the common support assumption is not well adapted to a setting where one desires to estimate $\alpha_\infty$. Indeed  one would not seek to estimate a value that carries no information (as is the case for $\alpha_\infty$ under the common support assumption). Studying the  consistency of estimating $\alpha_\lambda(P,Q)$ under more general settings and its dependence on $\lambda$ is therefore a very interesting avenue that we leave to a future work.

Another interesting direction of future research is proving that the rate of Theorem~\ref{thm:minimax-upper-bound-via-plugin} is (nearly) optimal. We are currently working towards this direction by establishing a minimax lower bound that would reveal that the rate of Theorem~\ref{thm:minimax-upper-bound-via-plugin} is optimal up to a logarithmic factor.
\end{remark}

\subsection{Bias of the KDE method}
\label{sec:bias_analysis_KDE}

In the previous section we obtained an optimal bound on the risk associated to the estimated Precision using the plug-in estimator $\hat \alpha^{\text{TV}}_\lambda$ based on the TV distance using KDE. 
However, this analysis does not account for the practical way the Precision is estimated, based the evaluation of a classifier as proposed in \eqref{eq:hatPRC}, rather than a distance between densities. Furthermore, the TV norm $\TV{\lambda \hat P - \hat Q}$ is intractable in practice.
In this section, we show that we can still use this result to directly derive a bound on the bias associated to the KDE classifier~\eqref{eq:KDE_classifier} used to estimate
$\hat \alpha^{\mathrm{KDE}}_\lambda$.

Recall first that the proposed approach is based on a ``data splitting'' strategy introduced in \ref{sec:new-framework} where the classifier is evaluated on a different set than the training one.
Consequently, let us denote by $(\X, \Y)$ (resp. $(\X', \Y')$) random and independent samples from the distributions $(P,Q)$ which designate the training set (resp. the evaluation set). Namely, $\X, \X' \sim P^{\otimes N}$ and $\Y, \Y' \sim Q^{\otimes N}$. The distributions $\hat P, \hat Q$ are the estimated distributions based on the empirical samples $\X, \Y$. %

\begin{proposition}[Bound on the bias of the KDE-classifier estimator]
\begin{equation}\label{eq:bias_bound}
    \left|\E
    \left[ \hat \alpha^{\mathrm{KDE}}_\lambda - \alpha_\lambda \right]\right|
    \le
    2\lambda \E \TV{\hat P - P} + 2 \E \TV{\hat Q - Q} ,
\end{equation}
where once again $\hat P:=k_\sigma\ast\bar P$ and $\hat Q:=k_\sigma\ast\bar Q$ are the KDE estimators of $P$ and $Q$.
\end{proposition}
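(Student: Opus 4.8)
The plan is to reduce the bias of the classifier‑based estimator to the plug‑in TV estimator $\hat\alpha^{\text{TV}}_\lambda=\alpha_\lambda(\hat P,\hat Q)$ and then invoke the triangle‑inequality bound of Section~\ref{sec:plugin-trivial-bound}. Everything is done conditionally on the training sample $(\X,\Y)$, which freezes both the family $\F^{\text{KDE}}=\{f^{\text{KDE}}_\gamma\}_\gamma$ and the estimates $\hat P,\hat Q$, so that the test sample $(\X',\Y')$ is independent of them. Two structural facts drive the argument. First, since $\hat P,\hat Q$ admit densities $\hat p,\hat q$, the member of the family at $\gamma=\lambda$, namely $f^{\text{KDE}}_\lambda=\1_{\lambda\hat p\ge\hat q}$, is exactly the Bayes‑optimal $\lambda$‑weighted classifier for the pair $(\hat P,\hat Q)$; hence by the dual formula~\eqref{eq:dualPR}, $\lambda\int(1-f^{\text{KDE}}_\lambda)\,d\hat P+\int f^{\text{KDE}}_\lambda\,d\hat Q=\alpha_\lambda(\hat P,\hat Q)=\hat\alpha^{\text{TV}}_\lambda$. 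Second, for any $[0,1]$‑valued $g$ one has $\left|\int g\,d(P-\hat P)\right|\le\TV{\hat P-P}$ (and likewise for $Q$), so replacing $P,Q$ by $\hat P,\hat Q$ inside any FPR/FNR combination of a fixed classifier costs at most $\lambda\TV{\hat P-P}+\TV{\hat Q-Q}$.

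For the upper bound on $\E[\hat\alpha^{\text{KDE}}_\lambda]$ I would bound the minimum over $\gamma$ in~\eqref{eq:hatPRC} by its value at $\gamma=\lambda$ and then take the expectation over $(\X',\Y')$: since $f^{\text{KDE}}_\lambda$ is $(\X,\Y)$‑measurable and the test sample is independent, $\E_{\X',\Y'}[\overline\fpr(f^{\text{KDE}}_\lambda)\mid\X,\Y]=\int(1-f^{\text{KDE}}_\lambda)\,dP$ and similarly for $\overline\fnr$. This gives $\E_{\X',\Y'}[\hat\alpha^{\text{KDE}}_\lambda\mid\X,\Y]\le\lambda\int(1-f^{\text{KDE}}_\lambda)\,dP+\int f^{\text{KDE}}_\lambda\,dQ$; the second structural fact turns the right‑hand side into $\hat\alpha^{\text{TV}}_\lambda+\lambda\TV{\hat P-P}+\TV{\hat Q-Q}$, and the (pointwise‑in‑sample) bound from Section~\ref{sec:plugin-trivial-bound} turns $\hat\alpha^{\text{TV}}_\lambda$ into $\alpha_\lambda(P,Q)$ at the price of a second $\lambda\TV{\hat P-P}+\TV{\hat Q-Q}$. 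Taking the expectation over $(\X,\Y)$ yields $\E[\hat\alpha^{\text{KDE}}_\lambda]-\alpha_\lambda\le 2\lambda\,\E\TV{\hat P-P}+2\,\E\TV{\hat Q-Q}$.

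For the matching lower bound the naive Jensen step goes the wrong way, since $\hat\alpha^{\text{KDE}}_\lambda$ is a \emph{minimum} of empirical risks over the continuum of thresholds $\gamma$. I would start from the free inequality $\alpha_\lambda(P,Q)=\min_{f\in\F}R_\lambda(f)\le\min_\gamma R_\lambda(f^{\text{KDE}}_\gamma)$, where $R_\lambda(f)=\lambda\fpr(f)+\fnr(f)$, so it only remains to control $\E\big[\min_\gamma R_\lambda(f^{\text{KDE}}_\gamma)-\min_\gamma\overline R_\lambda(f^{\text{KDE}}_\gamma)\big]$. This is the one genuinely delicate point: it is a uniform‑deviation quantity over the whole family, and the argument rests on the observation that, conditionally on $(\X,\Y)$, the acceptance regions $\{\hat q\le\gamma\hat p\}$ are linearly ordered by inclusion, so a one‑dimensional DKW/VC‑type bound controls $\E_{\X',\Y'}\big[\sup_\gamma|\overline R_\lambda(f^{\text{KDE}}_\gamma)-R_\lambda(f^{\text{KDE}}_\gamma)|\mid\X,\Y\big]$ by $O((\lambda+1)/\sqrt N)$, uniformly in the training set. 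This residual term is then absorbed into $\E\TV{\hat P-P}+\E\TV{\hat Q-Q}$, which are themselves of order at least $N^{-1/2}$ in every regime of the bandwidth; combining the two directions and taking absolute values gives~\eqref{eq:bias_bound}. The hard part is precisely this uniform law of large numbers over $\gamma$ (and the verification that it is dominated by the TV errors on the right‑hand side); the rest is bookkeeping with the two structural facts and the triangle inequality.
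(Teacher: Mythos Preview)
Your upper bound is correct and is exactly the paper's argument: bound the minimum at $\gamma=\lambda$, take conditional expectation over the independent test sample to turn $\overline\fpr,\overline\fnr$ into their population versions, swap $(P,Q)$ for $(\hat P,\hat Q)$ at cost $\lambda\TV{\hat P-P}+\TV{\hat Q-Q}$, and then use the triangle-inequality bound of Section~\ref{sec:plugin-trivial-bound} on $\hat\alpha^{\text{TV}}_\lambda-\alpha_\lambda$ to pick up a second copy of the same term.

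The lower bound is where you diverge from the paper and where your argument does not close. The paper runs \emph{no} uniform-deviation argument over $\gamma$: it simply asserts that the bias is non-negative, so that the absolute value in~\eqref{eq:bias_bound} disappears and only the upper bound above is needed. Its stated reason is that, with splitting, the empirical rates at the empirical minimizer $\gamma^{*}$ have conditional expectation equal to the corresponding population rates, and $R_\lambda(f_{\gamma^{*}}^{\text{KDE}})\ge\alpha_\lambda$ for any classifier. (You are right that the passage $\E[\overline R_\lambda(f_{\gamma^{*}}^{\text{KDE}})\mid\X,\Y]\to R_\lambda(f_{\gamma^{*}}^{\text{KDE}})$ is delicate when $\gamma^{*}$ itself depends on the test data; the paper invokes unbiasedness of $\overline\fpr,\overline\fnr$ ``for any classifier $f$'' without addressing this.) Independently of that subtlety, your own DKW-plus-absorption route has a concrete gap: the residual $O((\lambda+1)/\sqrt N)$ from the uniform law of large numbers is \emph{not} pointwise dominated by $\lambda\,\E\TV{\hat P-P}+\E\TV{\hat Q-Q}$ with any fixed constant, let alone the one required for the factor~$2$ in~\eqref{eq:bias_bound}. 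Those TV risks scale with the bandwidth $\sigma$ (bias of order $\sigma$, deviation of order $\sigma^{-d/2}N^{-1/2}$) and are not universally of order $N^{-1/2}$; for the optimal $\sigma$ of Theorem~\ref{thm:minimax-upper-bound-via-plugin} they are of order $N^{-1/(d+2)}$, which does dominate $N^{-1/2}$ asymptotically, but there is no uniform constant tying the two across all $(\sigma,N)$. At best your route yields an order-of-magnitude statement, not the clean inequality with factor $2$.
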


See \ref{sec:proof-bias-kde} for the proof.

\begin{remark}
We have seen in Theorem~\ref{thm:minimax-upper-bound-via-plugin}, that the plug-in estimator displays a fairly good convergence rate for its estimation risk (if the bandwidth $\sigma$ is set appropriately), namely $O\left(\frac{\lambda+1}{N^{\frac{1}{2+d}}}\right)$. The same goes for the classifier KDE estimate (which can be implemented in practice), but in that case, we only control the bias. To get a guarantee in terms of risk, one would need to also control the mean average deviation for this estimator, and show that it is $O\left(\frac{\lambda+1}{N^{\frac{1}{2+d}}}\right)$. This result is more involved due to the minimization over $\gamma$.
\end{remark}

\section{A Bridge Towards the Existing Literature}
\label{sec:link-literature}
In this section we establish a link between the proposed framework described in Section~\ref{sec:new-framework} and scalar Precision and Recall metrics from the state of the art. 
We show that these metrics can be 
expressed as the extreme values of Precision and Recall curves when carefully selecting a parameterized family $\F_M$.

\subsection{Previous Work Under our Setting}
\label{sec:description-common-pr-metrics}

In this section, we formally describe a few published metrics related to extreme Precision and Recall ($\alpha_\infty$ and $\beta_0$). Once again, we will focus on the estimate of $\alpha_\infty(P,Q)$ because swapping the role of $P$ and $Q$ entails the extreme Recall $\beta_0(P,Q)$. 

Recall that we consider a finite set $\X$ of examples from $P$ and others in $\Y$ sampled from $Q$ with $\# \X= \# \Y = N$. In the literature, no splitting occurs that is to say $\X'=\X$ and $\Y'=\Y$.

\paragraph{IPR} Proposed in \citet{kynkaanniemi_ImprovedPrecisionRecall_2019}, the Improved Precision and Recall metric is given by
\begin{equation}
    \label{eq:improved}
    \hat\alpha_\infty^{iPR} := \frac{1}{N}\sum_{y\in \mathcal{Y} } \1_{\exists x\in \mathcal X, y\in \BKNN^{\mathcal X}(x)} ,
\end{equation}
where $\mathcal X$ and $\mathcal Y$ are the observed samples from $P$ and $Q$ respectively, 
and $\BKNN^{\mathcal X}(x)$ represents the kNN ball around $x$ computed within the set $\mathcal X$.
This value can be interpreted as the empirical estimate of $Q(\supp(P))$ where samples from $\mathcal Y$ are used to estimate the $Q$ probability, and those from $\mathcal X$ are used to estimate the support of $P$ as the union of kNN balls. 

\paragraph{Coverage} First proposed as an estimate of $\beta_0$ in \citet{naeem_ReliableFidelityDiversity_2020}, it was also adapted to $\alpha_\infty$ in \citet{khayatkhoei_EmergentAsymmetryPrecision_2023}.
\begin{equation}
    \label{eq:coverage}
    \hat\alpha_\infty^{cov} := \frac{1}{N}\sum_{y\in \mathcal{Y} } \1_{\exists x\in \mathcal X, x\in \BKNN^{\mathcal Y}(y)} .
\end{equation}
Note that compared to Equation~\eqref{eq:improved}, the condition $y\in \BKNN^{\mathcal X}(x)$ is merely replaced by $x\in \BKNN^{\mathcal Y}(y)$. 
This metric was proposed to leverage the sensitivity to outliers of \ipr{}.

\paragraph{EAS} \citet{khayatkhoei_EmergentAsymmetryPrecision_2023} propose to combine both previous estimates by taking their minimum
\begin{equation}
    \label{eq:eas}
    \hat\alpha_\infty^{eas} := \min(\hat\alpha_\infty^{iPR}, \hat\alpha_\infty^{cov}) .
\end{equation}

\paragraph{PRC}
Proposed in \citet{cheema_PrecisionRecallCover_2023} Precision and Recall Cover is an extension of coverage:

\begin{equation}
    \label{eq:prc}
\hat\alpha_\infty^{PRC} = \frac{1}{N}\sum_{y\in \mathcal{Y} } \1_{\#\{x\in\mathcal X/ x\in \BKNN^\mathcal{Y}(y)\} \geq k'} \; ,
\end{equation}
where $k'\in\N^*$ is an additional hyper-parameter: setting $k'=1$ makes this estimator identical to  $\alpha_\infty^{cov}$.

\paragraph{PPR} Last, Probabilistic Precision and Recall was proposed in \citet{park_ProbabilisticPrecisionRecall_2023}:
\begin{equation}
    \label{eq:ppr}
    \hat\alpha_\infty^{PPR} := \frac{1}{N}\sum_{y\in \mathcal{Y} } \left( 1-\prod_{x\in\mathcal X} \left(1-\tau(\Vert y-x \Vert)\right) \right) ,
\end{equation}
where $\tau(d)=\max(0,1-\tfrac dR)$ is a fixed bandwidth tent kernel and $R$ a hyper-parameter defining the width of the kernel. This method allows to take into account the stochasticity of the samples by accounting for an uncertainty on the borders of the estimated support.

\subsection{Classification Interpretation}
\label{sec:classification-interpretation}

One may notice that every estimators mentioned in the previous section reads as
\begin{equation*}
    \hat\alpha^{M}_{\infty}=\overline\fnr(f^{M}_\infty) ,
\end{equation*}
where $\overline\fnr$ is the empirical FNR, $M$ denotes the selected method (e.g. \ipr{}, \cov{}) and $f^M_\infty$ is a classifier specific to the approach which takes as an input any $z \in \Omega$.
In particular, one has $f^{iPR}_\infty(z)=\1_{\#\{x\in \X / z\in\BKNN^{\X}(x)\}\geq 1}$ and $f^{cov}_\infty(z)=\1_{\#\{x\in \X / x\in\BKNN^{\Y}(z)\}\geq 1}$.

In most cases (all except PRC when $k'>1$), by design $\overline\fpr(f^{M}_\infty)=0$ because the classifier is equal to $1$ on training samples from $\X$ ---which are the same used to evaluate the error rates.
This is reminiscent of the form of $\alpha_\lambda$ in Equation~\ref{eq:dualPR} when $\lambda\to\infty$:
\begin{equation*}
    \alpha_\infty = \min_{f\in\F\text{ s.t. } \fpr(f)=0} \fnr(f) .
\end{equation*}
In fact, one can analyze each approach $M$, as the instantiation over a carefully selected family of classifiers $\F^M$ as defined in Definition~\ref{def:family-parametrised-classifiers}.

Note that there is no unique way of defining the class of functions $\F^M$ from the extreme classifiers $f_\infty^M$ and $f_0^M$. Yet, we shall now see that natural families emerge for both \ipr{} and \cov{}.

\paragraph{IPR} In that case, we set 
$$
f^{iPR}_\gamma(z)=\1_{\gamma\#\{x\in \X / z\in\BKNN^{\X}(x)\}\geq \#\{y\in \Y / z\in\BKNN^{\Y}(y)\}} .
$$
Note that this classifier is a Kernel Density Estimator (KDE) of the form 
$$f^{iPR}_\gamma(z)=\1_{\frac{\hat p(z)}{\hat q(z)}\geq \frac 1\gamma}$$
where $\hat p(z) \propto \sum_{x\in\X} \1_{\BKNN^{\X}(x)}(z)$ and similarly for $\hat q(z)$. It is therefore a KDE classifier with adaptive bandwidth.

\paragraph{Coverage} Here we set 
$$
f^{cov}_\gamma(z)=\1_{\gamma\#\{x\in \X / x\in\BKNN^{\Y}(z)\}\geq \#\{y\in \Y / y\in\BKNN^{\X}(z)\}} .
$$

This resembles to a classical kNN classifier up to a minor difference: it is more standard to use the same kNN structure for both classes, that is $\BKNN^{\X\cup \Y}$ rather than using a separate one per class.
Interestingly, one can verify that the condition $\exists x\in \X\text{ s.t. } x\in\BKNN^{\Y}(y)$ is in fact equivalent\footnote{One direction is trivial since $\BKNN^{\X\cup\Y}(y) \subset \BKNN^{\Y}(y)$, the other requires a bit more 
reasoning: assuming $\exists x\in\BKNN^{\Y}(y)$ one may consider in particular the $x$ closest to $y$ and conclude that it belongs to $\BKNN^{\X\cup\Y}(y)$.} to $\exists x\in  \X\text{ s.t. } x\in\BKNN^{\X\cup\Y}(y)$. In other words, without split our \knn{} method matches the extreme values of Coverage.

\begin{remark}[On symmetry]
For symmetry reasons between Precision and Recall, we use the previous definitions of $f^M_\gamma$  for $\gamma\geq 1$  and favor a strict inequality over a loose one for $\gamma<1$. 
\end{remark}

\begin{remark}[Fixed vs adaptive bandwidth]
Adaptive bandwidth KDE estimators based on kNN distances---such as those used in \ipr{}---may seem like a natural choice. However, in such settings, outliers can significantly inflate the local bandwidth, distorting the estimator's output by attributing excessive support to regions that should lie outside the true distribution.
Using a fixed bandwidth as in our \kde{} estimator (see Section~\ref{sec:instantiate-kde}) helps in leveraging this issue.
\end{remark}

\begin{remark}[Splitting]
\label{remark:splitting}
While the metrics from the literature were defined out of the splitting context, extending them is straightforward.
This is done by computing the empirical error rates on ``test set'' $\X' \cup \Y'$ different from the ``training set'' ($\X \cup \Y)$ used to the define the classifiers.
\end{remark}

\subsection{The Correct Version of Extreme Precision}

In the literature, the accepted expression of the extreme Precision is 
\begin{equation}
    \label{eq:link-sota-pr}
    \alpha_\infty(P,Q):=\lim_{\lambda\to\infty} \alpha_\lambda(P,Q) = Q(\supp(P)).
\end{equation}
While this formula is used as a theoretical basis in the literature, it is flimsy and requires to be amended mainly because the support of a distribution is defined up to null sets for that distribution. 
In the incriminated identity, the issue stems from the fact that adding a $P$-null set can change the $Q$-mass of the set, and therefore the right-hand side is not well characterized.
Correcting identity \eqref{eq:link-sota-pr} requires to clarify a few notions beforehand. 
\begin{definition}[support and co-support]
\label{def:support-cossuport}
Let $A$ be a measurable subset of $\measSpace$ and $A^c$ its complement.
We say that $A$ is a
\begin{itemize}
    \item support of $P$, denoted\footnote{This is a slight abuse of notations.}  $A=\supp(P)$, iff  $P(A^c)=0$. 
    \item co-support of $P$ and $Q$  denoted $A=\cosupp(P,Q)$ iff 
    (
    $(P\wedge Q)(A^c)=0$ 
    and $\forall B\subset A$, $P(B)=0\Leftrightarrow Q(B)=0$).
\end{itemize}
\end{definition}
As the reader may notice, the second notion is characterized up to sets that are simultaneously $P$ and $Q$ null. More precisely, we have the following result.
\begin{proposition}
    \label{thm:cosupp}
    All co-supports of $P$ and $Q$ have the same $Q$-mass and 
    $$
    \alpha_\infty(P,Q)=Q(\cosupp(P,Q)) .
    $$
\end{proposition}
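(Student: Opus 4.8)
The plan is to establish two things: first, that any two co-supports of $P$ and $Q$ have the same $Q$-mass; second, that this common value equals $\alpha_\infty(P,Q) = \lim_{\lambda\to\infty}\alpha_\lambda(P,Q)$.

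For the first part, let $A$ and $A'$ be two co-supports. I would compare their $Q$-masses by looking at the symmetric difference. Consider $B = A\setminus A' \subset A$. Since $A'$ is a co-support, $(P\wedge Q)((A')^c)=0$, and in particular $(P\wedge Q)(B)=0$ because $B\subset (A')^c$. Now I want to upgrade this to $Q(B)=0$. Here the key structural fact to extract is that on a co-support $A$, the measures $P$ and $Q$ are mutually absolutely continuous on subsets of $A$ (by the defining condition $P(C)=0\Leftrightarrow Q(C)=0$ for $C\subset A$), and $(P\wedge Q)$ restricted to $A$ is equivalent to $Q|_A$ as well. Concretely, write $Q|_A = f\cdot P|_A$ via Radon–Nikodym (justified because $Q|_A \ll P|_A$ on $A$), with $f>0$ $P$-a.e. on $A$ by the reverse absolute continuity; then $(P\wedge Q)|_A$ has density $\min(1,f)>0$ with respect to $P|_A$, so $(P\wedge Q)|_A$ and $Q|_A$ are equivalent. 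Hence $(P\wedge Q)(B)=0$ with $B\subset A$ forces $Q(B)=0$. By symmetry $Q(A'\setminus A)=0$, so $Q(A)=Q(A')$.

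For the second part, I would compute $\alpha_\infty$ directly from the primal form \eqref{eq:primalPR}: $\alpha_\lambda = (\lambda P\wedge Q)(\Omega)$, which is nondecreasing in $\lambda$ (since $\lambda\mapsto\lambda P\wedge Q$ is increasing for the setwise order on measures), and bounded by $1$, so the limit exists and equals $\sup_\lambda (\lambda P\wedge Q)(\Omega)$. Fix a co-support $A$. On $A$, using $Q|_A = f\cdot P|_A$ as above, $(\lambda P\wedge Q)|_A$ has density $\min(\lambda,f)$ w.r.t.\ $P|_A$, which increases pointwise to $f$ as $\lambda\to\infty$; by monotone convergence $(\lambda P\wedge Q)(A) \to \int_A f\, dP = Q(A)$. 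On $A^c$ we have $(\lambda P\wedge Q)(A^c) \le (P\wedge Q)(A^c)\cdot\lambda$-independent bound — actually more carefully, $(\lambda P \wedge Q)(A^c)\le Q(A^c)$ always, but I need it to vanish or be absorbed. The cleaner route: $(\lambda P\wedge Q)(A^c) \le Q(A^c)$ and also $(\lambda P\wedge Q)(A^c)\ge (1\wedge\lambda)(P\wedge Q)(A^c)$; since $(P\wedge Q)(A^c)=0$ by the co-support definition, this does not immediately bound $Q(A^c)$. So instead I argue: $\alpha_\infty = \lim_\lambda (\lambda P\wedge Q)(\Omega) \ge \lim_\lambda (\lambda P\wedge Q)(A) = Q(A)$, and for the reverse, $(\lambda P\wedge Q)(A^c)\le Q(A^c)$; but I need $(\lambda P\wedge Q)(A^c)\to 0$, equivalently that the $Q$-mass on $A^c$ cannot be "extracted". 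This holds because on $A^c$, $P\wedge Q$ is null, meaning for any $C\subset A^c$, $\min(dP,dQ)=0$ a.e., so $dP\cdot dQ = 0$ a.e. on $A^c$; on the part where $dQ>0$ we have $dP=0$, hence $(\lambda P)(C\cap\{dP=0\})=0$ and $(\lambda P\wedge Q)(A^c)\le (\lambda P)(A^c\cap\{dP>0\}) + 0$, and $Q(A^c\cap\{dP>0\})=0$... I would tighten this by decomposing $A^c$ into $\{dP=0\}$ and its complement and checking $(\lambda P\wedge Q)$ vanishes on each.

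The main obstacle I anticipate is the careful Radon–Nikodym bookkeeping on $A^c$ to show the $Q$-mass there contributes nothing to $\alpha_\infty$ — i.e.\ handling the possibility that $P$ and $Q$ are mutually singular on $A^c$ but $Q(A^c)>0$, and confirming this is exactly what the co-support condition $(P\wedge Q)(A^c)=0$ rules in (it allows $Q(A^c)>0$ only where $P$ is null, and there $\lambda P\wedge Q = 0$). Once that decomposition is set up, both inequalities $\alpha_\infty \ge Q(A)$ and $\alpha_\infty \le Q(A)$ follow by monotone convergence and the vanishing of $(\lambda P\wedge Q)$ off $A$, completing the proof. I would also note that the existence of at least one co-support should be recorded (e.g.\ take the Lebesgue/Hahn decomposition-based set where the mutual density is positive), though the statement as phrased only concerns the common $Q$-mass and its identification.
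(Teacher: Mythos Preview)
Your argument is correct, and for the identification $\alpha_\infty = Q(\cosupp(P,Q))$ it follows a genuinely different route from the paper. The paper appeals to the dual characterization $\alpha_\infty = \min_{A:\,P(A^c)=0} Q(A)$ from \cite{simon_RevisitingPrecisionRecall_2019}, fixes a minimizer $A^*$, and then \emph{constructs} a specific co-support $C = A^* \setminus \bigcap_{\lambda>0}\{\lambda P > Q\}$, checking directly that $C$ satisfies both co-support axioms and that $Q(C)=Q(A^*)=\alpha_\infty$. You instead stay on the primal side: for an arbitrary co-support $A$ you use Radon--Nikodym on $A$ (where $P$ and $Q$ are equivalent by definition) together with monotone convergence to get $(\lambda P\wedge Q)(A)\uparrow Q(A)$, and on $A^c$ you use $(P\wedge Q)(A^c)=0$ to conclude $\min(dP,dQ)=0$ a.e.\ there, hence $(\lambda P\wedge Q)(A^c)=0$ for every $\lambda$. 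Your approach is more self-contained (no call to the dual formulation) and delivers the equality for \emph{every} co-support simultaneously; the paper's approach is constructive and, as a by-product, settles the existence of a co-support that you flag as a separate obligation. For the first claim, both proofs look at $B=A\setminus A'$; your Radon--Nikodym step showing $(P\wedge Q)|_A$ is equivalent to $Q|_A$ is the clean way to pass from $(P\wedge Q)(B)=0$ to $Q(B)=0$, and is more explicit than the paper's treatment of that implication.
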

\begin{proof}
See proof in Appendix~\ref{proof:cosupp}.
\end{proof}

\begin{example}
The following example illustrates the difference between the non-corrected and the corrected variants. Let $P = N(0,\mathrm{Id})$ and let $Q$ be a singular distribution, e.g., $\delta_x$ for some $x \in \R^d$. Then the entire PR curve is trivial and $\alpha_\infty(P,Q) = 0$. However, $\supp(P) = \R^d$, so $Q(\supp(P)) = 1$. In contrast, $\cosupp(P,Q) = \emptyset$, hence $Q(\cosupp(P,Q)) = 0$.

\end{example}

A natural follow-up question is whether either variant (based on support or co-support) provides a meaningful Precision metric. We believe the answer is negative for both. The issue is that these quantities tend to saturate at $1$ even in situations where Precision should be considered low. Beyond the classical example of shifted Gaussians (which yields perfect Precision regardless of the shift), the example above—and slight variations of it—illustrate this limitation.

Consider the following interpretation. Let $P$ be a white noise distribution over images, and let $Q$ be a Dirac distribution concentrated on a single image (e.g., a dog). A meaningful Precision metric should reflect whether samples from $Q$ resemble typical samples from $P$. Clearly, a dog image is not representative of white noise, so assigning a Precision score of $1$ would be incongruous.

Moreover, this issue persists (albeit to a lesser extent) with the corrected formulation. If $Q$ is instead a narrow Gaussian centered at the dog image (i.e., the dog image slightly perturbed by a zero-mean Gaussian noise), both $Q(\supp(P))$ and $Q(\cosupp(P,Q))$ still evaluate to $1$, despite the lack of alignment with the notion of Precision. For this reason, we argue that extreme metrics such as $\alpha_\infty$ are not suitable estimation targets. Consequently, studying their convergence rates is of limited interest. Instead, we advocate evaluating full PR curves and, if needed, summarizing them using appropriate statistics.

 More specifically, the previously mentioned caveats reflect the influence of the tails of $P$ and $Q$ on the extreme metrics, even when those tails decay very fast. The previous example of white-noise ($P=\mathcal N(0,\mathrm{Id})$) vs noisy-dog images ($Q=\mathcal N(x,\sigma\mathrm{Id})$) is illuminating. Whatever the noise scale $\sigma$ (be it extremely small), it remains that $P$ and $Q$ have full co-support and therefore $\alpha_\infty(P,Q)=1$.
The first negative impact of this observation is that both $\alpha_\infty$ and $\beta_0$ provide a very weak characterization of the relation between $P$ and $Q$.
The second negative impact concerns the estimation of $\alpha_\infty$: namely, the tails of $P$ and $Q$ are elusive based on empirical samples, making this extreme Precision the most challenging to evaluate.
Both of these observations have gone unnoticed by the previous approaches starting from \citet{kynkaanniemi_ImprovedPrecisionRecall_2019} as they purposely focused on the extreme values.
In addition to those two arguments, we would like to highlight that estimating the mass of a support is not a standard topic in machine learning. As a result, taking the binary classification dual standpoint brings much more useful hindsight to design estimators correctly.

\subsection{Distilling Curves with Scalar Metrics}
\label{section:distilling-curve}

While we have just highlighted the importance using Precision and Recall curves over scalar Precision and Recall metrics, practitioners may find it useful to summarize the PR curve using two metrics that respectively reflect Precision and Recall, thereby trading exhaustiveness for conciseness.

This can be particularly helpful for simplifying model comparisons. In this context, we previously discussed how focusing solely on extreme Precision or Recall values is often far from ideal. We therefore discuss four alternative approaches and examine their behavior in a scenario combining mode dropping, mode invention, and mode re-weighting (e.g., Figure~\ref{fig:mode-diffs}).

An empirical evaluation of the advantages and limitations of all four alternatives would be a valuable future direction. For example, one could study how these metrics behave with respect to hyperparameters in state-of-the-art generative models, such as the truncation trick in GANs, the temperature in auto-regressive model and normalizing flows, or the guidance scale factor in diffusion models. 
Such experiments are presented in Sections~\ref{section:stylegan-exp}~and~\ref{section:stable-diffusion-experiments}.

\paragraph{AuC} Area under the curve (AuC) is a standard metric which synthesizes a curve by simply outputting its area. This metric is particularly used in binary classification. Such a metric is proposed in \citet{pillutla_MAUVEMeasuringGap_2021} where the MAUVE metric is the area under the curve which is defined by type I and type II errors on generated and real distributions. In our case the AuC simply denotes as $\int \alpha_\lambda(\beta_\lambda) d\beta_\lambda$. The values of AuC range from 0 to 1. An AuC of 0 means that the PR curve boils down to the point $(0,0)$ and is synonymous of the two distributions under study having disjoint supports. On the contrary, AuC=1 means that the two distributions under study are the same. This score behaves as an averaging metric which tends to soften extreme behaviors for example giving the same score to curves with very large Recall and very small Precision and conversely meaning it fails to distinguish high Recall and high Precision because a symmetrized version of a curve would give exactly the same AuC value. However, AuC complements  the extreme values as it  synthesizes what happens between those two extreme values.

\paragraph{F-scores} Proposed by \citet{sajjadi_AssessingGenerativeModels_2018}, the $F_b$ score is defined as:
$$F_b = \max_{\lambda \in [0,+\infty]} \frac{b^2+1}{\frac{b^2}{\alpha_\lambda}+\frac 1{\beta_\lambda}}.$$
As $b \to \infty$, $F_b \nearrow \alpha_\infty$, and conversely, as $b \to 0$, $F_b \searrow \beta_0$. Based on this, \citet{sajjadi_AssessingGenerativeModels_2018} proposed using both $F_b$ and $F_{1/b}$ with a large value of $b$ (specifically, $b=8$). Although not explicitly discussed in their original work, it can be understood that these metrics will not be sensitive to rapidly decaying infinite tails. However, note that the $F_b$ score is a weighted harmonic mean computed from a single (optimal) point on the PR curve, meaning it can remain unchanged even when the curve is modified elsewhere. As a result, when $b$ is large, $F_b$ and $F_{1/b}$ mainly capture mode dropping or invention but are only superficially informative about mode re-weighting.

\paragraph{PR median} Another alternative is to consider $(\alpha_{\bar \lambda}, \beta_{\bar \lambda})$, where $\bar\lambda$ is chosen such that the line $\alpha = \bar\lambda \beta$ divides the area under the PR curve into two equal parts. In cases of pure mode dropping or invention, we have $\alpha_{\bar\lambda} = \alpha_\infty$ and $\beta_{\bar\lambda} = \beta_0$. By contrast, when infinite tails induce sharp transitions to $\alpha_\infty = 1$ (as seen in Figure~\ref{fig:mode-diffs}), $\alpha_{\bar\lambda} < 1$ will be only mildly affected by the fast-decaying tails, similar to $F_b$. Unlike $F_b$, however, $\alpha_{\bar\lambda}$ is significantly influenced by the presence of transitions due to mode re-weighting (purple transition in Figure~\ref{fig:mode-diffs}). As a result, these metrics may be preferable when one wishes to account for both mode re-weighting and mode dropping/invention.

\paragraph{PR\,@\,$\epsilon$}
When $P$ and $Q$ share the same support, the extreme values will always equal 1, providing little useful information about the distributions under study. However, in the case of Gaussian distributions, as the means of $P$ and $Q$ diverge, the PR curve tends toward the point $(0,0)$. Capturing the behavior “just before” reaching these extremes can provide more detailed insights. This can be done by computing Precision at a fixed value of Recall, denoted $\alpha@\epsilon$, or vice versa. In our case, we set $\epsilon = 0.05$. This idea is inspired by recommender systems or information retrieval evaluation, where one might measure how many of the top-$k$ selected items are relevant. In short, $\alpha @ \epsilon$ (or equivalently $\beta @ \epsilon$) refers to the maximum $\alpha$ such that $(\alpha, \epsilon) \in \PRD(P, Q)$ if such a point exists and $0$ otherwise.

\section{Experiments}
\label{sec:experiments}
In this section, we first revisit several experiments on toy datasets previously proposed in the literature where the ground truth is available and results are easy to interpret (Section~\ref{section:toy-experiments}. We then apply our metrics on data generated by StyleGAN-V2 model reproducing a key experiment from \cite{kynkaanniemi_ImprovedPrecisionRecall_2019} and computing PR curves based on embeddings (Section~\ref{section:stylegan-exp}).
However, analyzing experiments conducted solely on real-world data is inherently challenging due to the absence of clear ground truth, making it difficult to rigorously evaluate the performance of the metrics. To address this, we introduce a hybrid experiment: starting from real samples, we fit Gaussian distributions and generate synthetic samples following the corresponding Gaussian laws.
This hybrid setup provides a controlled reference against which we can meaningfully compare our metrics (Section~\ref{sec:hybrid-exp}).
We believe that some encouraging results in the literature are the effect of a combination of negative bias---due to correlation as train and test samples are the same---which could be compensated by the positive bias due to the sub-optimal selected classifier.
These experiments showcase the pros and cons of the various extensions, yet without declaring a clear winner.

In short, in the toy experiments, \cov{} and \knn{} perform far better than \ipr{} which fails to recover the middle part of the ground truth curve. 
In the StyleGAN experiment, \ipr{} exhibits a very different behavior than \cov{} and \knn{} which in any event is hard to analyze. Finally, the proposed hybrid setting illustrates the advantages of each method in different situations.
Additional experiments are presented in Appendix~\ref{app:add_exp} including the specific case $P=Q$ advocating  the use of data splitting.

In all the experiments, we denote $n$ as the total number of samples per distribution. Therefore, considering the previous notations, $n=2N$ when splitting occurs otherwise $n=N$.

\subsection{Toy Examples}
\label{section:toy-experiments}
In all toy experiments, we use $n = 10$K samples in $\mathbb{R}^{64}$ for the various estimators under study, matching the experimental settings of previous works. In each case, the distributions $P$ and $Q$ are known analytically, and the ground truth PR curve can be easily estimated since the Bayes classifier is the likelihood ratio classifier $f_\lambda^*(z) = \1_{\frac{dQ}{dP}(z) \leq \lambda}$ \citep{simon_RevisitingPrecisionRecall_2019}. To achieve high-accuracy ground truth curves, we apply a large-sample Monte Carlo estimation ($n^{GT} = 100$K) and estimate:
$$\hat\alpha^{GT}_\lambda = \lambda \overline{\fpr}(f_\lambda^*) + \overline{\fnr}(f_\lambda^*).$$

Based on this PR curve, we can either visually assess the quality of an estimator or use a scalar indicator to summarize performance. In particular, we propose using the Intersection over Union (IoU) score between the ground truth curve and the estimator under review. This index ranges from $0$ to $1$, with values closer to $1$ indicating stronger similarity to the ground truth.

\subsubsection{Shifted Gaussian} %
\label{sec:exp_shift}

\setlength{\myheight}{0.3\textwidth}

\begin{figure*}[t]
    \centering
    \begin{tabular}{cc}
        \multicolumn{2}{c}{\emph{50\% split validation/train}} \\
        \includegraphics[height=\myheight]{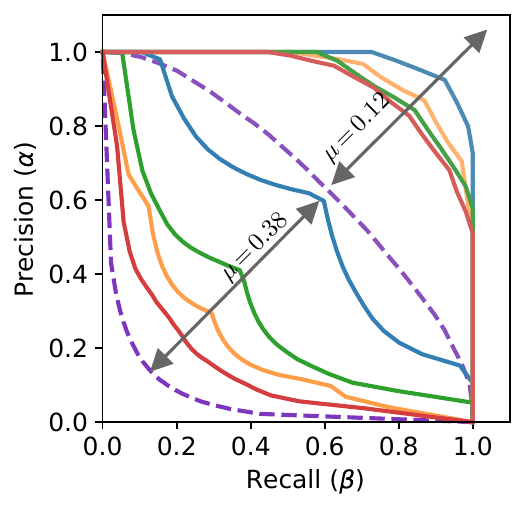} &
        \includegraphics[height=\myheight]{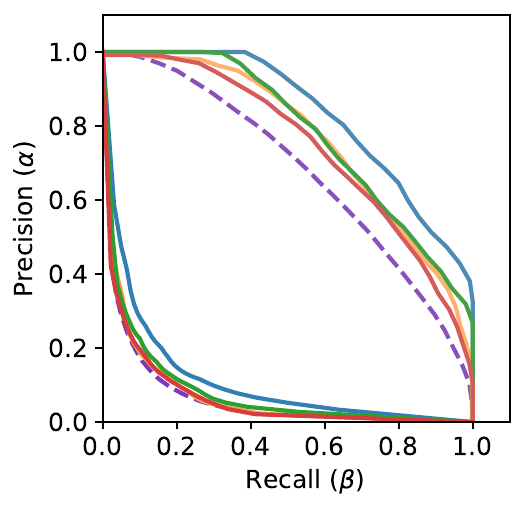} \\
        $k=4$ & $k = \sqrt n$ \\
        \multicolumn{2}{c}{\emph{without split}} \\
        \includegraphics[height=\myheight]{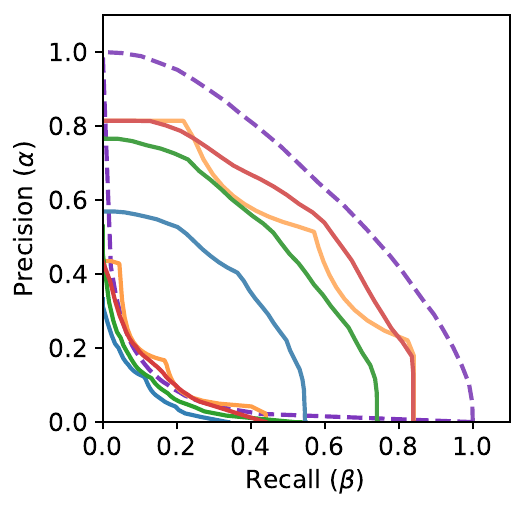} &
        \includegraphics[height=\myheight]{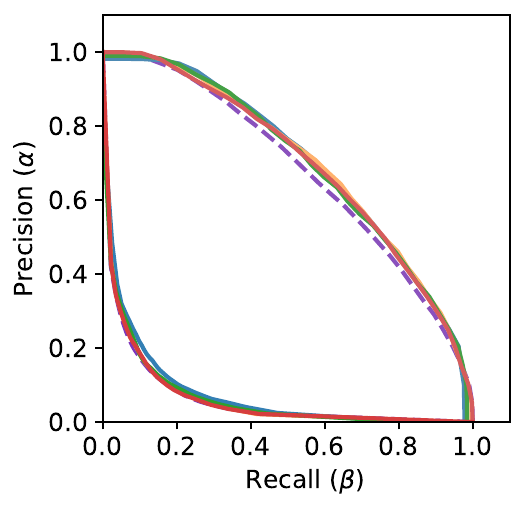} \\
        $k=4$ & $k = \sqrt n$
    \end{tabular}
        \caption{\textbf{Comparing two shifted Gaussians.} 
    The ground truth PR curve (
    {\color{color_GT} \bfseries -~-\textsc{GT}})
    is compared to empirical estimates from various NN-classifiers:
    {\color{color_IPR} \bfseries --\ipr{}},
    {\color{color_KNN} \bfseries --\knn{}},
    {\color{color_PARZ} \bfseries --\parzen{}}, and 
    {\color{color_COV} \bfseries --\cov{}}.
    Here $P \sim \mathcal{N}(0,\mathbb{I}_{d})$ and $ Q \sim \mathcal{N}(\mu \mathbf{1}_{d},\mathbb{I}_{d})$ with $d=64$ dimensions and $\mu=\frac{1}{\sqrt d}\approx.12$ or $\mu=\frac{3}{\sqrt d}\approx.38$. 
    $n=10$K points are sampled using $k=4$ or $k=\sqrt n$ for NN comparison, with or without dataset validation/train split.
    (Each curve is obtained by averaging 10 PR curves from different sets of random samples.)}.
    \label{fig:shift-gauss}
\end{figure*}

Inspired by experiments in related works, we consider the case where $P$ and $Q$ are two Gaussian distributions with increasing shifts. 
Although we sample a large number of points, we run each experiment $10$ times to obtain robust interpretations of the metrics. 
Unlike \citet{kynkaanniemi_ImprovedPrecisionRecall_2019, naeem_ReliableFidelityDiversity_2020, park_ProbabilisticPrecisionRecall_2023}, we once again emphasize that the extreme values of the curves should always equal $1$, as the two Gaussian distributions have a full co-support.

We evaluate four different methods across four different shifts of the fake Gaussian. The resulting curves are presented for two shifts in Figure.~\ref{fig:shift-gauss} and summarize the full experiment results and the corresponding IoU scores in Appendix~\ref{app:add_exp} Table~\ref{tab:mean_iou_shift_k4}. In these settings, the standard deviation across experiments is below $10^{-2}$ on IoU scores. We vary the nearest neighbor parameter $k \in \{4, \sqrt{10K}\}$ and we test our metrics with and without split through the experiment.
As described in Remark~\ref{remark:splitting}, we can apply the splitting framework to the methods from the literature which initially didn't encompass it. In this split setting, for each distribution $P, Q$ with $n$ sampled points, half of them are selected for defining the parameterized family of functions and the other half is used to compute the error rates which yield the Precision and Recall curve.

As it was already highlighted in \citet{naeem_ReliableFidelityDiversity_2020}, we can observe how iPR noticeably underestimates the ground truth in the no-split and k=4 setting (it's initial definition by the original authors).
In some settings, the PR curves differ significantly from the ground truth.
In prior works, the number of nearest neighbors $k$ used for manifold estimation was set to $3$ \citep{kynkaanniemi_ImprovedPrecisionRecall_2019} or $5$ \citep{naeem_ReliableFidelityDiversity_2020}. 
Here, we first use $k = 4$ and then, motivated by Th.~\ref{thm:optimality}, set $k = \sqrt{n}$. 
Interestingly, we observe in the former case that, as expected, estimated PR curves are more pessimistic in the split setting, meaning they underestimate the similarity between the two distributions compared to the validation/train split. This effect is so pronounced that the extreme values of the curves do not reach one, as they theoretically should.
We observe that results are more robust with $k = \sqrt{n}$, and that differences between split and no-split scenarios become marginal.

Given the theoretical guarantees in the split case, we retain the combination of split and $k = \sqrt{n}$ for the subsequent toy experiment (additional results are provided in the Appendix~\ref{ap:gaussian_shifts}).

\setlength{\mywidthhere}{.4\textwidth}
\begin{figure}[!ht]
     \centering
     \begin{subfigure}[b]{\linewidth}  
        \centering
        \includegraphics[width=\mywidthhere,trim={42, 0, 42, 0}, clip]{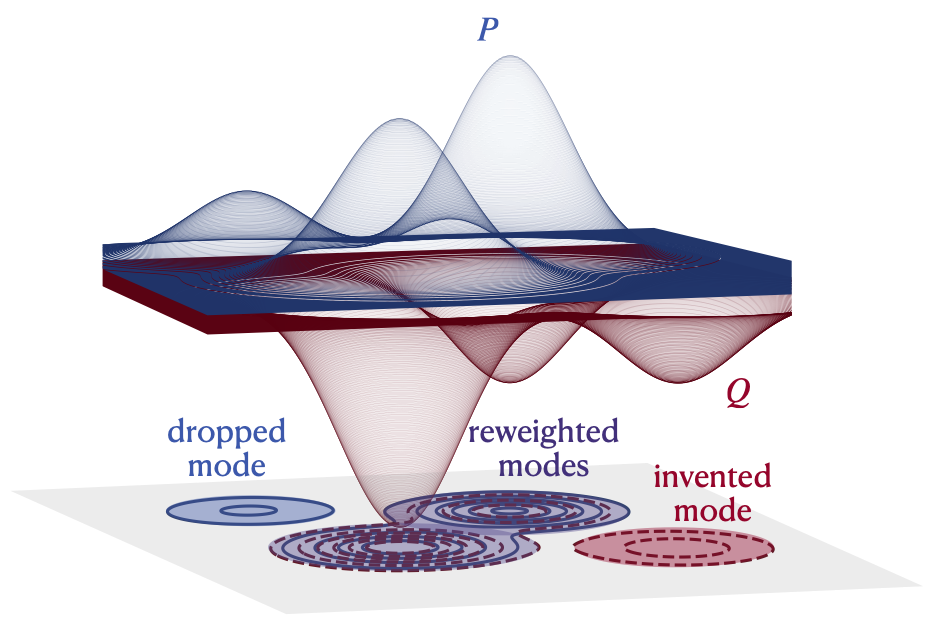}
        \caption{Gaussian Mixture Models $P$ and $Q$ showing mode dropping (only in $P$), mode inventing (only $Q$), and mode re-weighting (in both but distributed differently).
        }
        \label{fig:mode-diffs}
    \end{subfigure}
 \begin{subfigure}[b]{\linewidth}
        \centering
        \includegraphics[width=\mywidthhere,trim={8, 0, 30, 0}, clip]{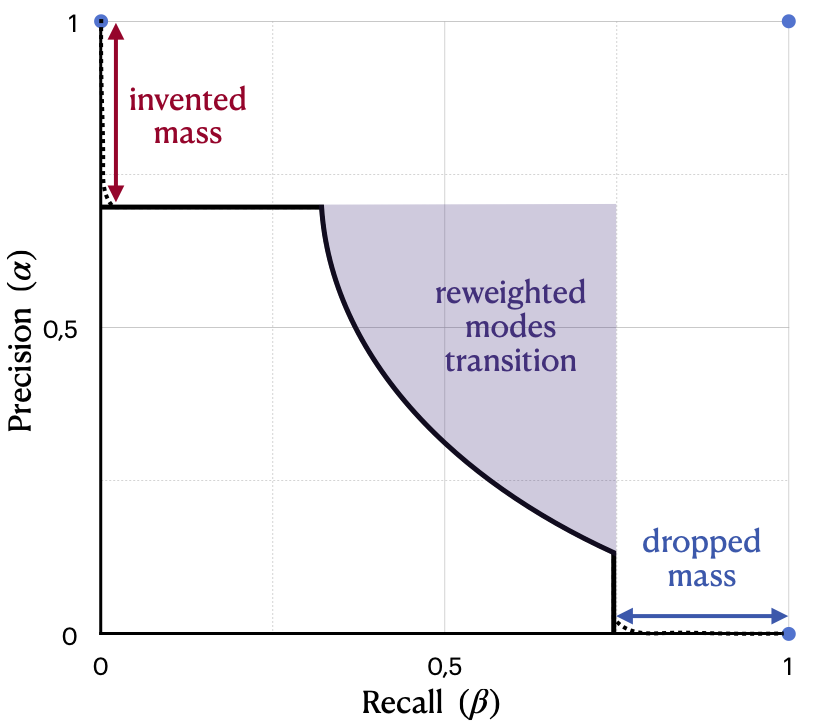}
        \caption{Expected coarse shape of the PR-curve (solid black).  
        Due to the infinite tails of the Gaussian modes, the vertical and horizontal transitions are theoretically smooth and reach $1$ (dashed curve).
        }
        \label{fig:pr-mode-diffs}
    \end{subfigure}

    \begin{subfigure}[b]{\linewidth}
        \centering
        \includegraphics[width=\mywidthhere]{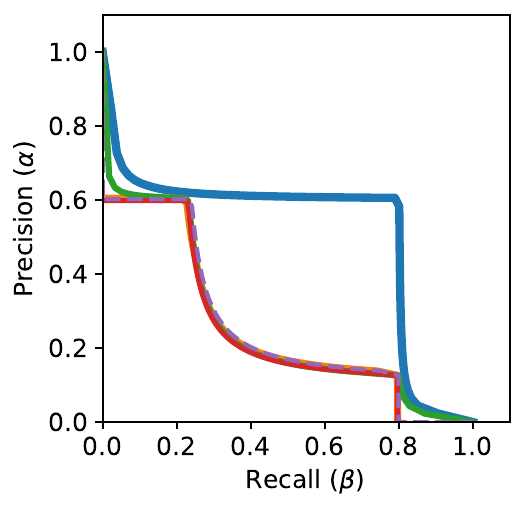}
        \caption{\textbf{Comparing two Gaussian mixtures}. %
        The ground truth PR curve (
        {\color{color_GT} \bfseries \normalfont -~-\textsc{GT}})
        is compared to empirical estimates from various NN-classifiers:
        {\color{color_IPR} \bfseries --\ipr{}},
        {\color{color_KNN} \bfseries --\knn{}},
        {\color{color_PARZ} \bfseries --\parzen{}}, and 
        {\color{color_COV} \bfseries --\cov{}}. Notice that the method \cov{} and \knn{} overlap (and are very close to the ground truth).
        }
        \label{fig:GMM-dim64}
    \end{subfigure}
    \caption{Comparison of Gaussian Mixture Models.}
    \label{fig:gmm_exp}
\end{figure}

\subsubsection{Gaussian mixture models} %
\label{sec:exp_GMM}

As argued by \citet{luzi_EvaluatingGenerativeNetworks_2023}, neural representations---such as Inception features---are better approximated by Gaussian mixture models (GMMs) than by pure Gaussians as stated in the original paper introducing FID. 
This more general modeling also allows us to illustrate key phenomena: mode \emph{dropping} (a mode present in $P$ but not in $Q$), mode \emph{invention} (a mode present in $Q$ but not in $P$), and mode \emph{re-weighting} (shared modes between $P$ and $Q$ with different weights), as shown in Figure~\ref{fig:mode-diffs}. 
The corresponding theoretical PR curves are depicted in Figure~\ref{fig:pr-mode-diffs}. 
The distributions $P$ and $Q$ used for experiments are here defined as a mixture of 4 modes: $P = \sum_{\ell} p_\ell\, \mathcal{N}(\mu_\ell \mathbf{1}_{d}, \mathbb{I}_{d})$ and $Q = \sum_{\ell} q_\ell\, \mathcal{N}(\mu_\ell \mathbf{1}_{d}, \mathbb{I}_{d})$, with $d = 64$ and $\mu_\ell \in \{0, -5, 3, 5\}$. 
The weights differ between $P$ and $Q$: $p_\ell \in \{0.2, 0.2, 0.6, 0\}$ and $q_\ell \in \{0, 0.5, 0.1, 0.4\}$. 
In this synthetic experiment, we sample points from two GMMs and apply splitting with $k = \sqrt{n}$ (see Figure~\ref{fig:GMM-dim64}). 

The \knn{}, \kde{}, and \cov{} methods perform well relative to the ground truth, while \ipr{} tends to overestimate the PR curve, particularly failing to capture the re-weighting transitions. 
Transitions to the extreme values are noticeably not well estimated by the \kde{} approach. 
Indeed, the \kde{} classifier, with non-adapting kernel width, is less discriminative on Gaussian tail densities, while being more robust than \ipr{}.
The tails effects are amplified for \ipr{} because low density samples produce large kNN balls.
Additional experiments described in \ref{ap:gaussian_mixture} confirm these observations.

\subsubsection{Variability study}\label{sec:exp_variability}
As the access to a large number of samples might be a bottleneck, the question of the variability of the metrics' output might be risen. 
In Figure~\ref{fig:varibility-nbpoints}, the impact of the size sample $n$ on the variability of the evaluation curves is shown.
These figures illustrate the consistency of the proposed method based on robust classifiers when increasing the number of samples (Theorem~\ref{thm:optimality}). Empirically, using $10K$ points reduces sufficiently the variability to make comparison between curves reliable, 
which is in line with the standard usage for generative model evaluation \citep{heusel_GANsTrainedTwo_2017,sajjadi_AssessingGenerativeModels_2018}.

Figure~\ref{fig:varibility-nbpoints} complements Section~\ref{sec:exp_variability} about variability.
Average curves are obtained by computing the empirical mean of $N = 100$ PR curves obtained different random $n$-samples (with $n=10^4$), \emph{i.e.} 
 $$
    (\bar \alpha(\lambda) , \bar \beta(\lambda)) 
    = \frac{1}{n}\sum_{i=1}^n  (\hat \alpha_i(\lambda) , \hat \beta_i(\lambda)) ,
 $$
 where $(\hat \alpha_i(\lambda), \hat\beta_i(\lambda))$ are the independent estimates (associated with the separate runs).
Deviation from average curves are materialized with two curves
$$
 (\alpha_{\pm\sigma}(\lambda) , \beta_{\pm\sigma}(\lambda))
 =
    (\bar \alpha(\lambda)\pm \sigma_\alpha(\lambda)  , \bar \beta(\lambda)\pm \sigma_\beta(\lambda)  ) ,
$$
 with empirical estimator
$$
    \sigma_\alpha(\lambda)^2 
    = \frac{1}{n}\sum_{i=1}^n  (\hat \alpha_i(\lambda)-\bar \alpha(\lambda))^2 .
$$
and similarly for the Recall deviation $\sigma_\beta$.
\textbf{NB:} By construction, for a fixed $\lambda$, $\hat\alpha_i(\lambda)=\lambda\hat\beta_i(\lambda)$. This ratio is automatically inherited for the average curves $(\bar\alpha(\lambda),\bar\beta(\lambda))$ and the standard deviation curves $(\alpha_{\pm\sigma}(\lambda) , \beta_{\pm\sigma}(\lambda))$. Whereas the methods' Precision and Recall have different biases, their deviation are very similar in their behavior as shown in Figure~\ref{fig:varibility-nbpoints}.
    
\setlength\myheight{0.4\textwidth}
\begin{figure}[htb]
    \centering
    \begin{tabular}{cc}
        \includegraphics[height=\myheight]{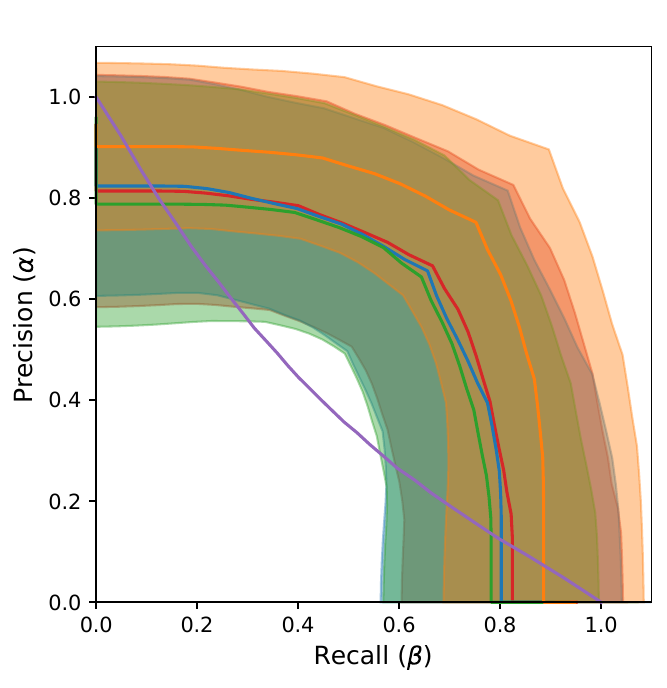}
        &
        \includegraphics[height=\myheight]{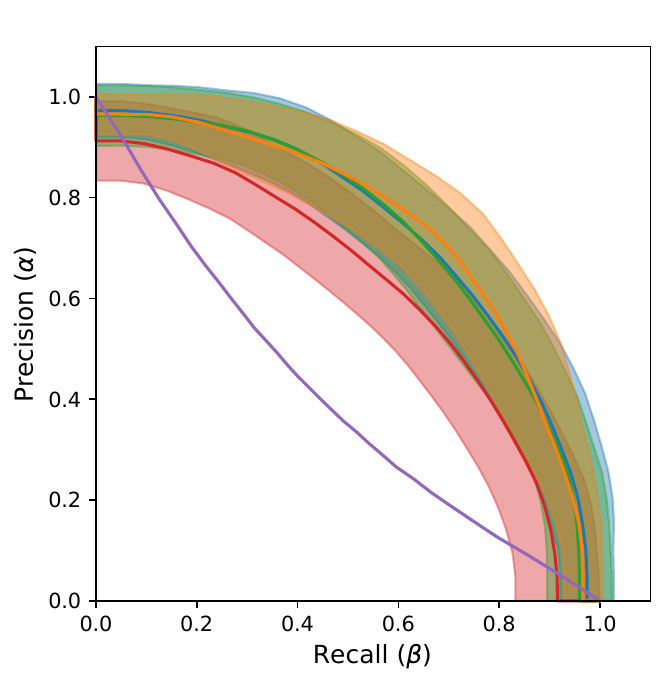}
        \\
        $n=10$ & $n=100$ \\
        \includegraphics[height=\myheight]{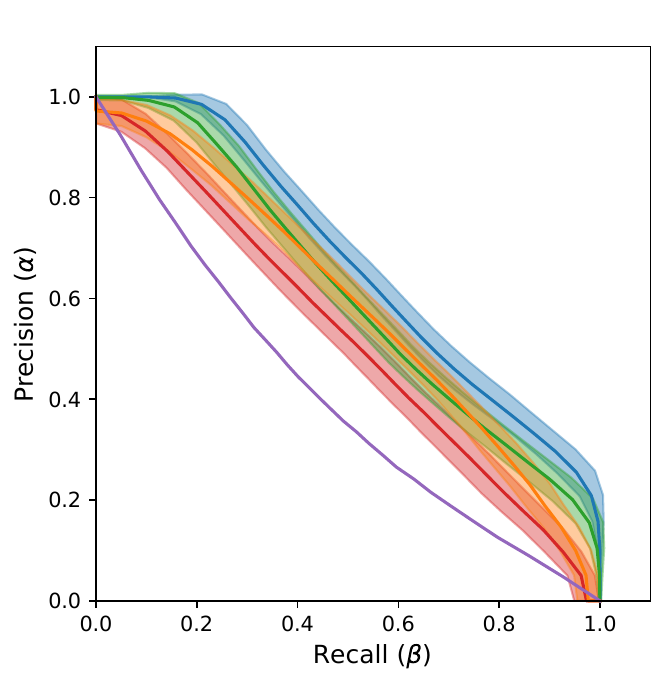}
        &
        \includegraphics[height=\myheight]{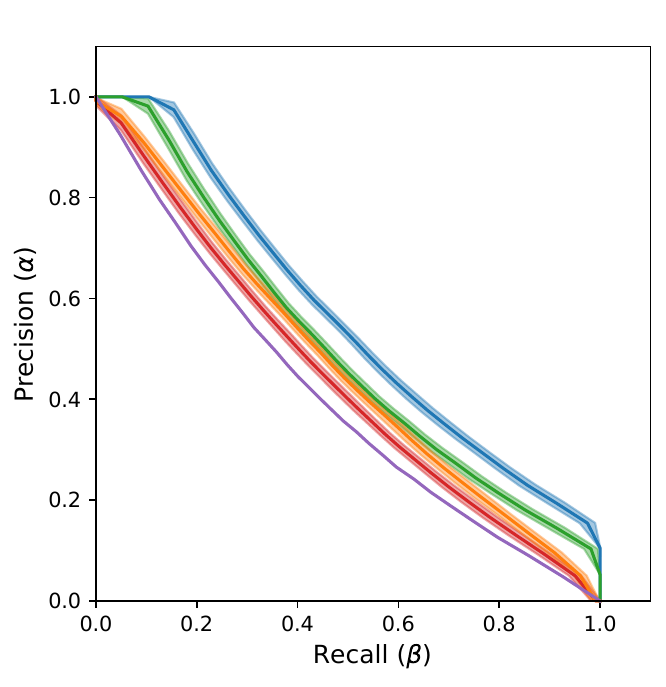}
         \\
        $n=1,000$ & $n=10,000$ 
    \end{tabular}
    \caption{\textbf{Influence of sample size $n$.}
    The setting is the same as Figure~\ref{fig:shift-gauss}
    for a translation of $\mu=.21$ between two Gaussian in dimension $d=64$ (with splitting and $k=\sqrt n$).
    Solid (respectively transparent) curves correspond to the empirical average (resp. deviations) of $100$ PR curves computed from random samples. %
    In this experiment, we use splitting with a factor $0.5$.
    }
    \label{fig:varibility-nbpoints}
\end{figure}

\subsection{Experiment with StyleGAN}
\label{section:stylegan-exp}

In this section, we test our metrics in the scenario of a generative model trained on some ``real-world'' data. To do so, we use the same approach as in \citet{kynkaanniemi_ImprovedPrecisionRecall_2019} where the model StyleGAN\footnote{In our case, we use \href{https://github.com/NVlabs/stylegan2-ada-pytorch/blob/main/generate.py}{StyleGAN v2 implementation} \citep{styleganV2}.} is trained on the Flickr Face High-Quality dataset (FFHQ).

\paragraph{Image Embedding} %
In the previous experiments on synthetic data, we noticed how high dimension made the estimation of Precision and Recall metrics challenging.
However, images from the FFHQ dataset live in very high dimension (\emph{i.e.} $d\approx 3.10^6$).
As it is standard in the literature, image samples are compared via neural representations from pre-trained deep networks.
In addition to mitigating the curse of dimensionality by mapping images to a new space of smaller dimension, it allows to define relevant metrics that can be aligned with human perception (such as LPIPS).
In this experimental section, we make use of several popular feature embedding (supervised or non-suspervised): InceptionV3 \citep{szegedy_RethinkingInceptionArchitecture_2016} ($d=2048$), VGG16 \citep{simonyan_VeryDeepConvolutional_2015} ($d=4096$), VGG16 with random weights as proposed in \cite{naeem_ReliableFidelityDiversity_2020}, and DINOv2 \citep{oquab_DINOv2LearningRobust_2024} ($d=384$).

\paragraph{Filtering of generated samples \emph{via} truncation}
Recall that StyleGAN is a latent generative model built upon a specific feed-forward neural network. Starting from a constant input, convolutional layers are modulated by a latent variable that is obtained in two stages: a Gaussian random variable $\mathbf z \sim \mathcal{N}(\mathbf{0}_{d}, \mathbb{I}_{d})$ is first sampled (Z space) that is then encoded using a MLP mapping (W space): $\mathbf w = M( \mathbf z)$.
Note that in StyleGAN-V1 \citep{styleganV1} additional random noise is also employed for modulation.
In \citet{kynkaanniemi_ImprovedPrecisionRecall_2019}, several truncation methods parametrized by a scalar $\Psi$ are investigated to improve automatically the quality of generated samples.
The truncation parameter $\Psi$ indicates how close to the average the latent variables of the model are linearly remapped after being sampled from a normal distribution. 
The most efficient one in terms of a trade-off between simplicity and efficiency consists in using an interpolation to the mean:
$
\mathbf w \mapsto \Psi (\mathbf w -  \mathbf{\bar w}) + \mathbf{\bar w}
$, where $\mathbf{\bar w}=\mathbb{E} M(\mathbf z)$.
In the Improved Precision and Recall paper, the authors' intuition was that as $\Psi \longrightarrow 0$, the generated distribution collapses to a single point (the point with highest probability density) thus always generating---slight variations of---the same image but which is the most probable. Conversely, when $\Psi \longrightarrow 1$, the model is less constrained and its latent variables are varied, yielding images with high diversity yet with more artifacts.

We reproduced the results of the truncation experiment on StyleGAN from \citet{kynkaanniemi_ImprovedPrecisionRecall_2019} using our metrics.
As described in Section~\ref{section:distilling-curve}, practitioners might want to have Precision and Recall curves summarized to scalar metrics (Area Under Curve or AuC, $F_b$, $\alpha@\varepsilon$). We computed those synthetized metrics including the extreme value $\alpha_\infty$ as done in the StyleGAN experiment of \citet{kynkaanniemi_ImprovedPrecisionRecall_2019}).  %
These metrics are shown in Figures~\ref{fig:real-exp-ipr-stylegan}, ~\ref{fig:real-exp-coverage-stylegan} and~\ref{fig:real-exp-knn-stylegan} as a function of the truncation parameter $\Psi$, for respectively \ipr{}, \cov{} and \knn{}.

\
\begin{figure}[htbp]
    \centering
    \begin{subfigure}[b]{0.49\textwidth}
      \includegraphics[width=\textwidth,trim={0 0 0 20},clip]{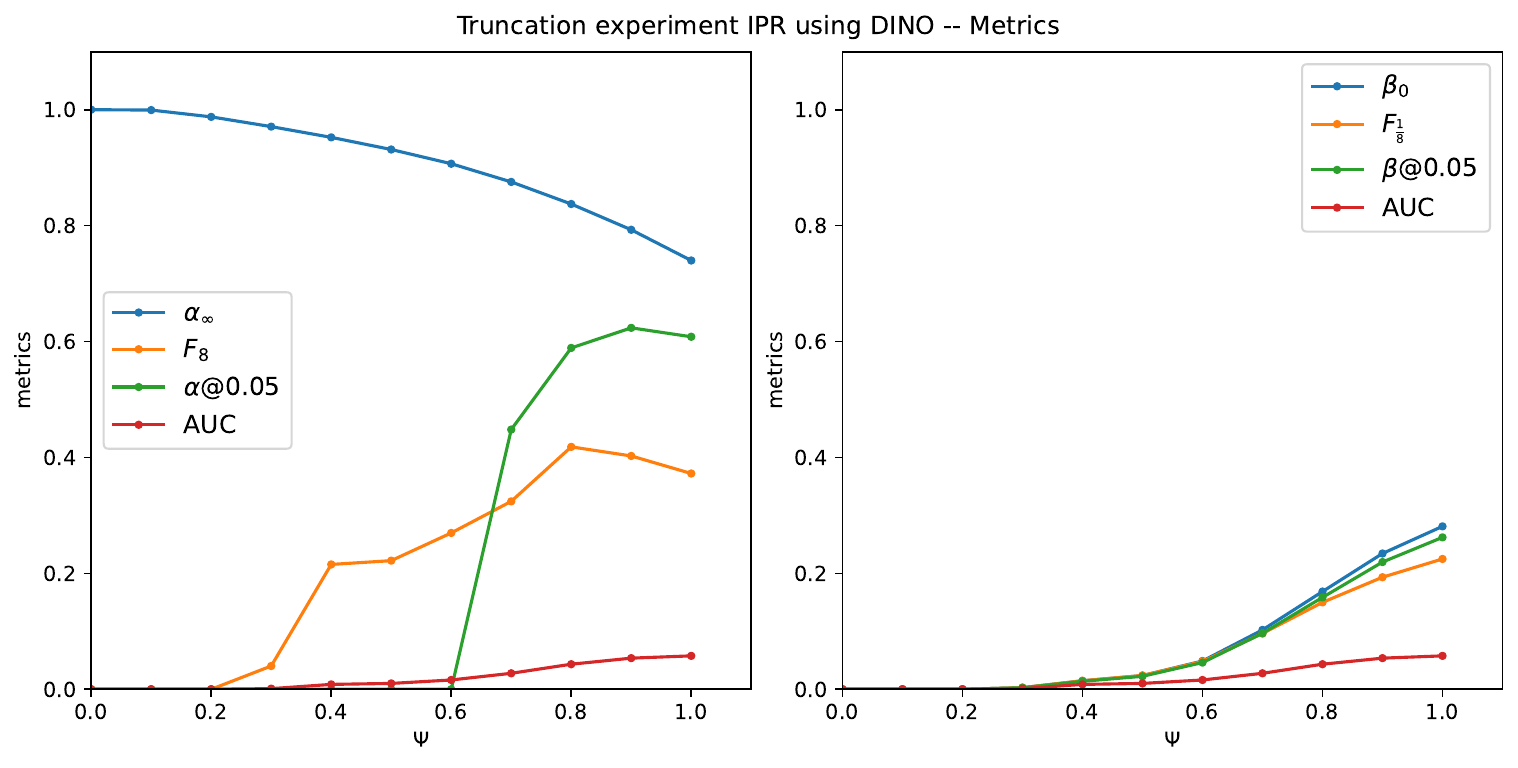}
        \caption{DINOv2}
    \end{subfigure}
    \hfill
    \begin{subfigure}[b]{0.49\textwidth}
\includegraphics[width=\textwidth,trim={0 0 0 20},clip]{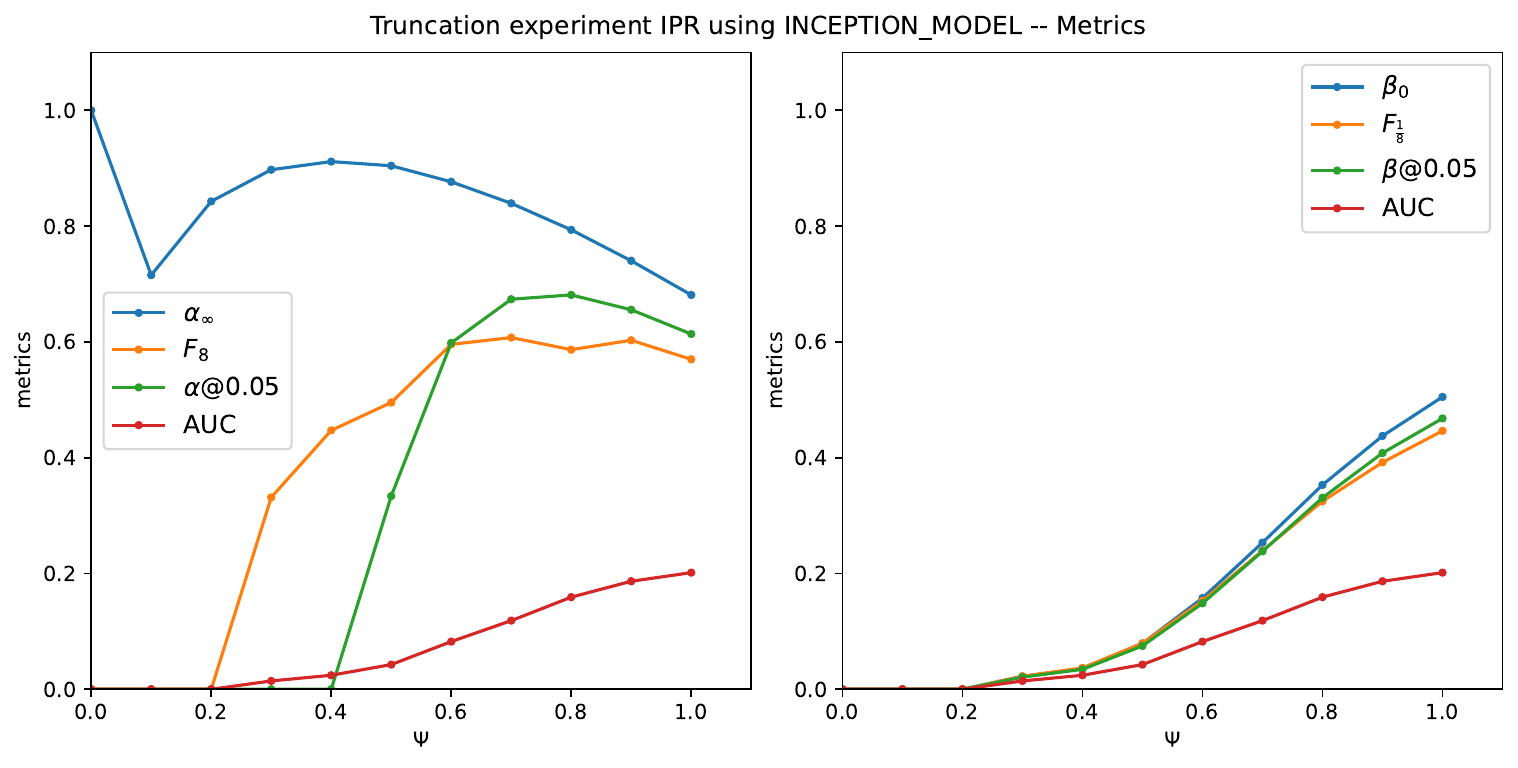}
        \caption{Inception}
    \end{subfigure}
    \vskip\baselineskip
    \begin{subfigure}[b]{0.49\textwidth}
 \includegraphics[width=\textwidth,trim={0 0 0 20},clip]{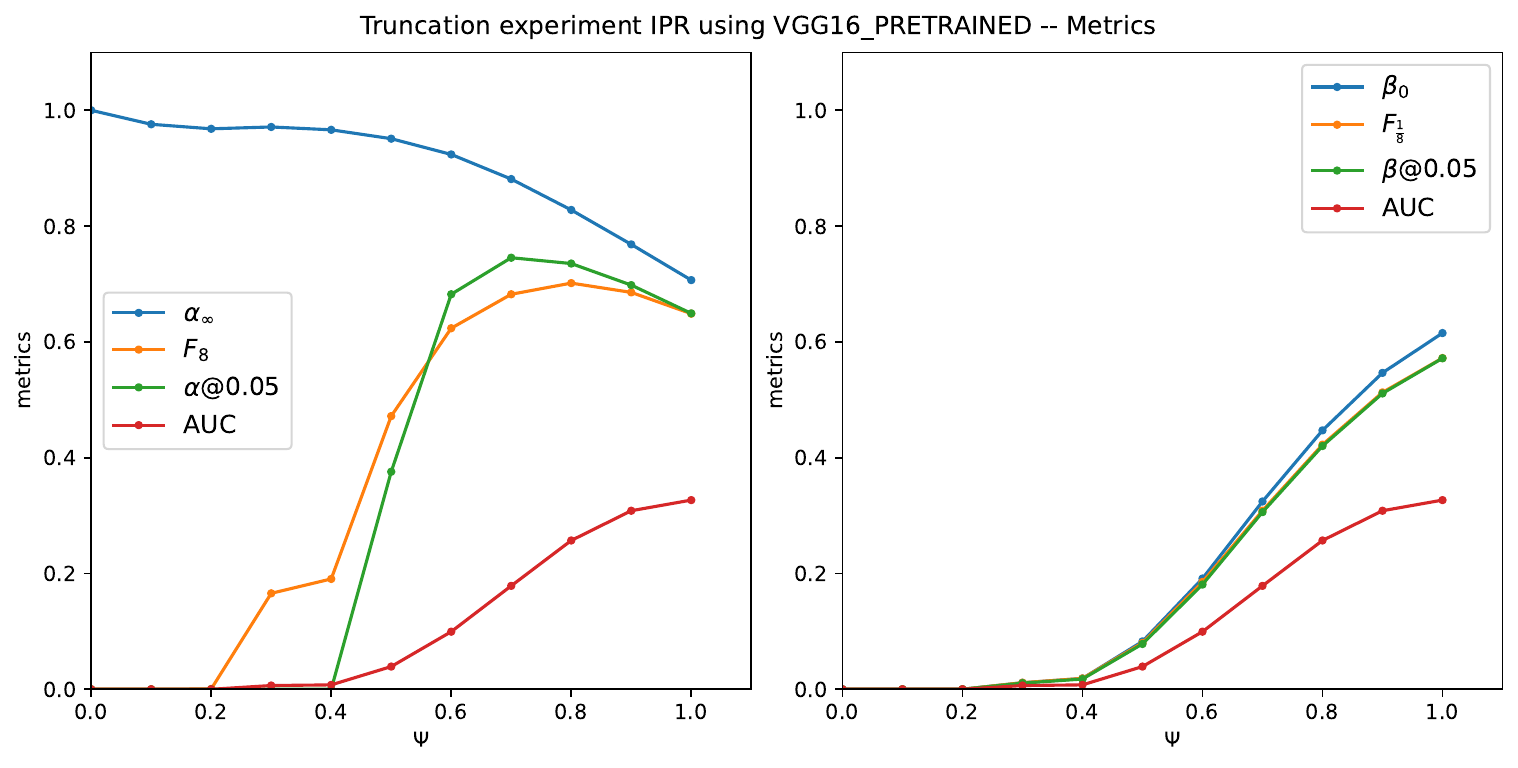}
        \caption{VGG pretrained}
    \end{subfigure}
    \hfill
    \begin{subfigure}[b]{0.49\textwidth}
        \includegraphics[width=\textwidth,trim={0 0 0 20},clip]{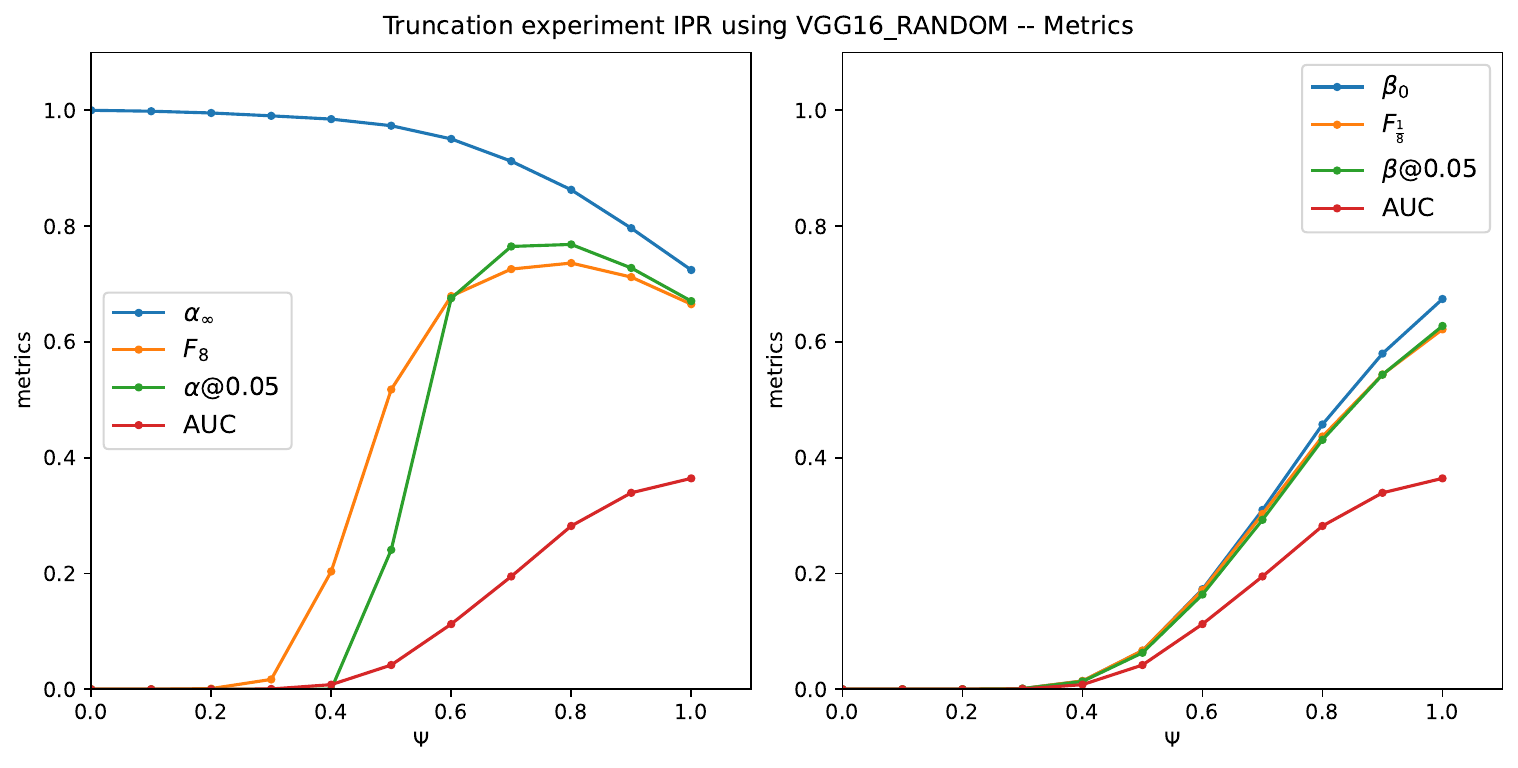}
        \caption{VGG random}
    \end{subfigure}
    \caption{Truncation experiment with the \textbf{IPR} metric, using the same parameters as in the original article.}   \label{fig:real-exp-ipr-stylegan}
\end{figure}

\begin{figure}[htbp]
    \centering
    \begin{subfigure}[b]{0.49\textwidth}
        \includegraphics[width=\textwidth,trim={0 0 0 20},clip]{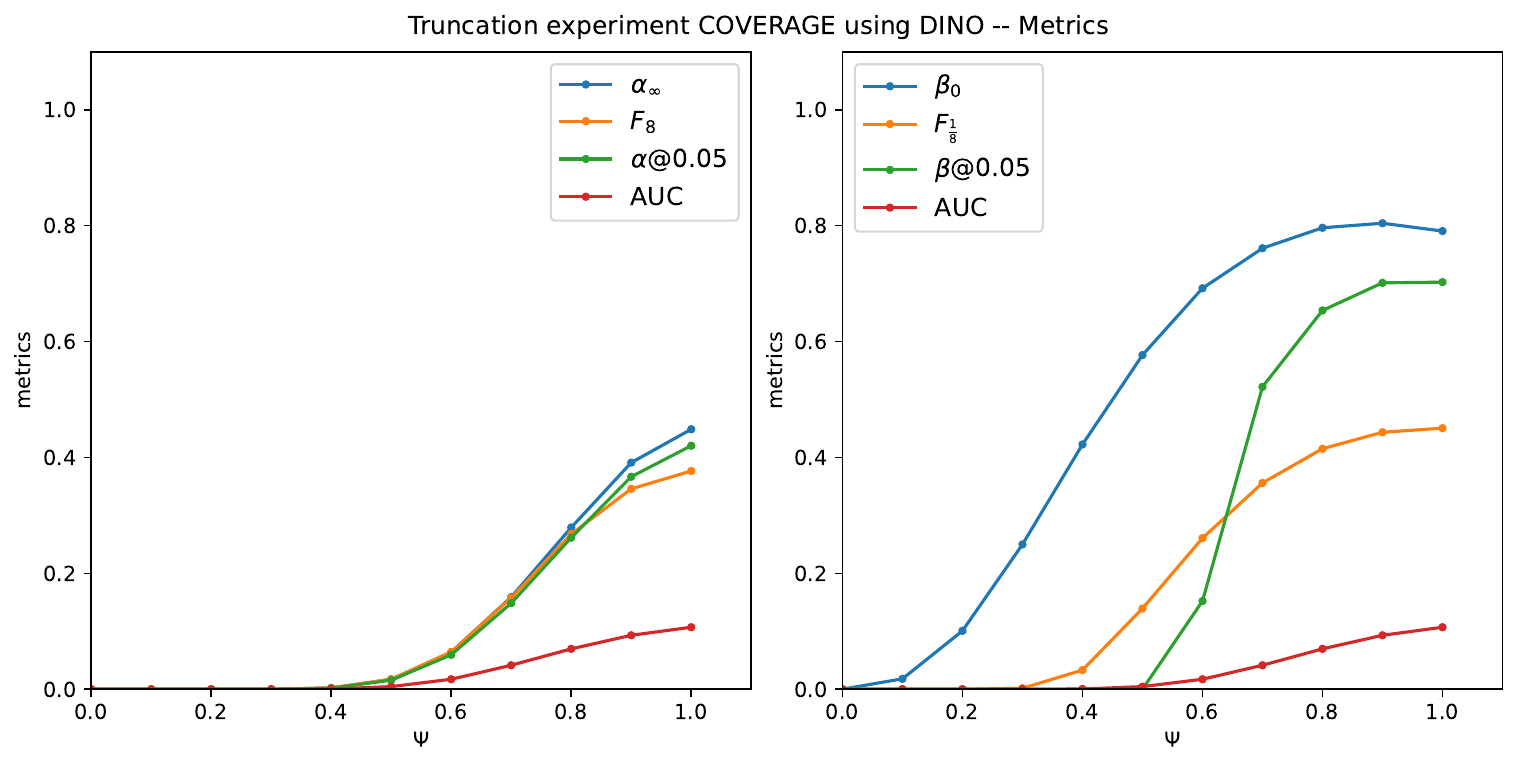}
        \caption{DINOv2}
    \end{subfigure}
    \hfill
    \begin{subfigure}[b]{0.49\textwidth}
        \includegraphics[width=\textwidth,trim={0 0 0 20},clip]{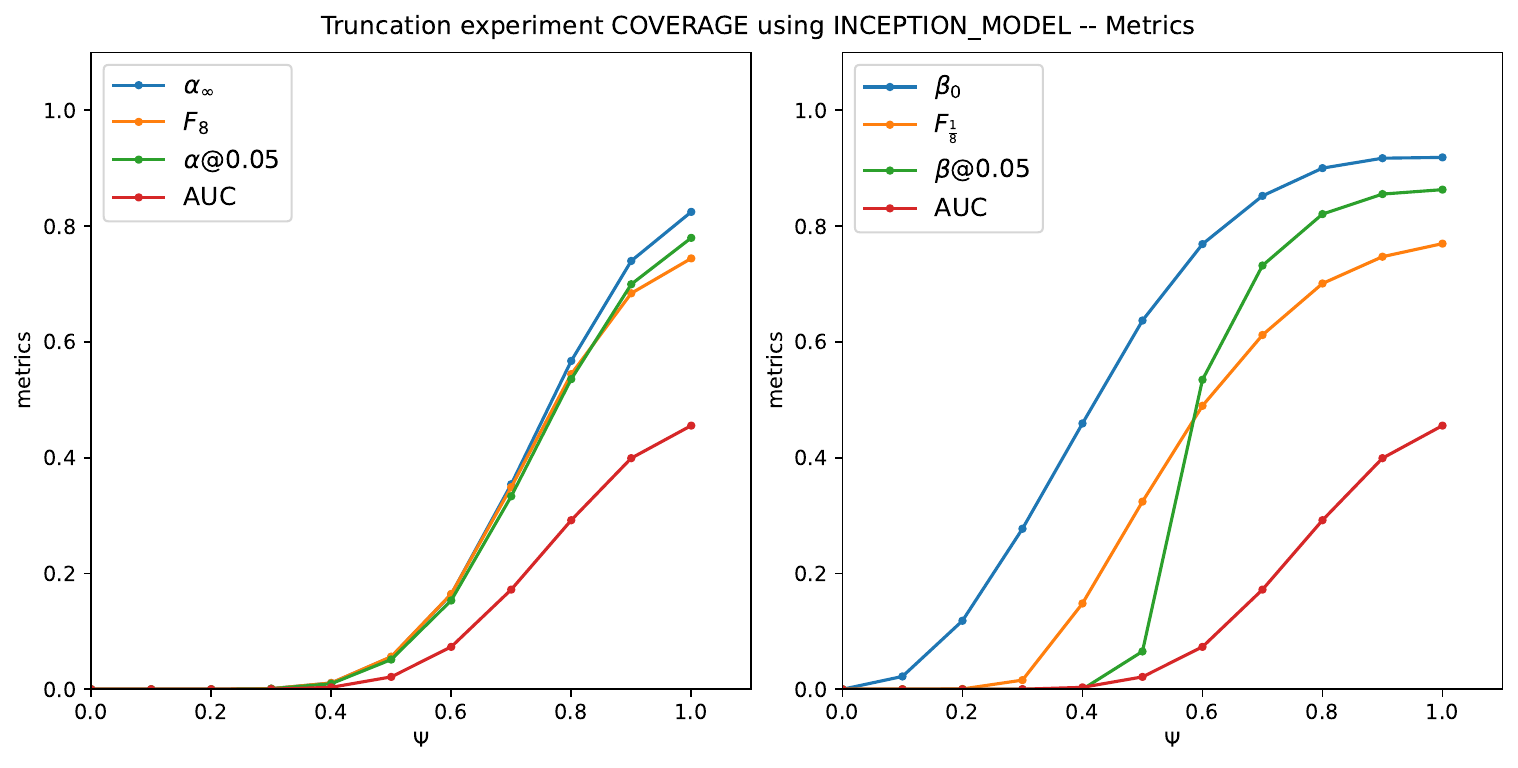}
        \caption{Inception}
    \end{subfigure}
    \vskip\baselineskip
    \begin{subfigure}[b]{0.49\textwidth}
        \includegraphics[width=\textwidth,trim={0 0 0 20},clip]{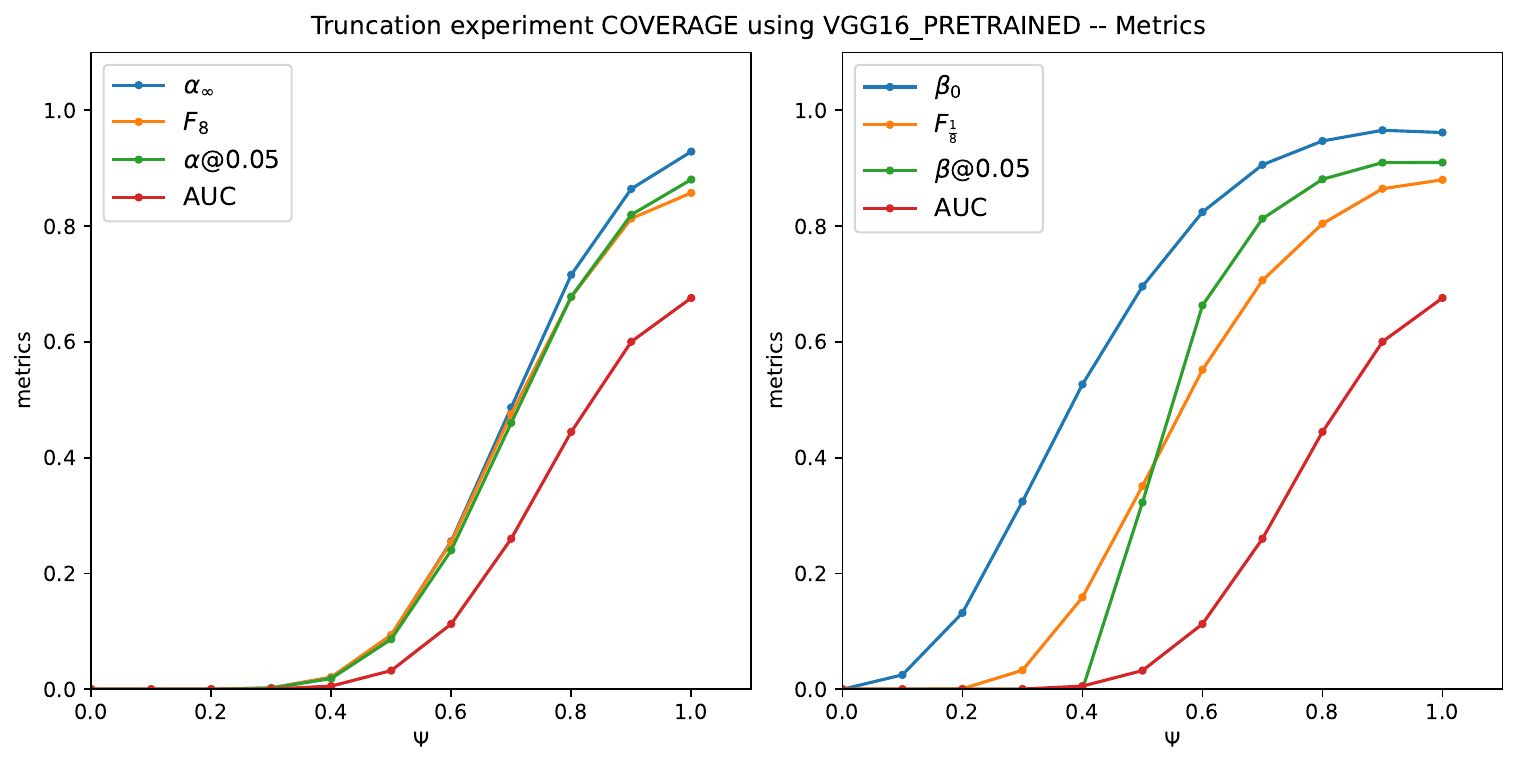}
        \caption{VGG pretrained}
    \end{subfigure}
    \hfill
    \begin{subfigure}[b]{0.49\textwidth}
  \includegraphics[width=\textwidth,trim={0 0 0 20},clip]{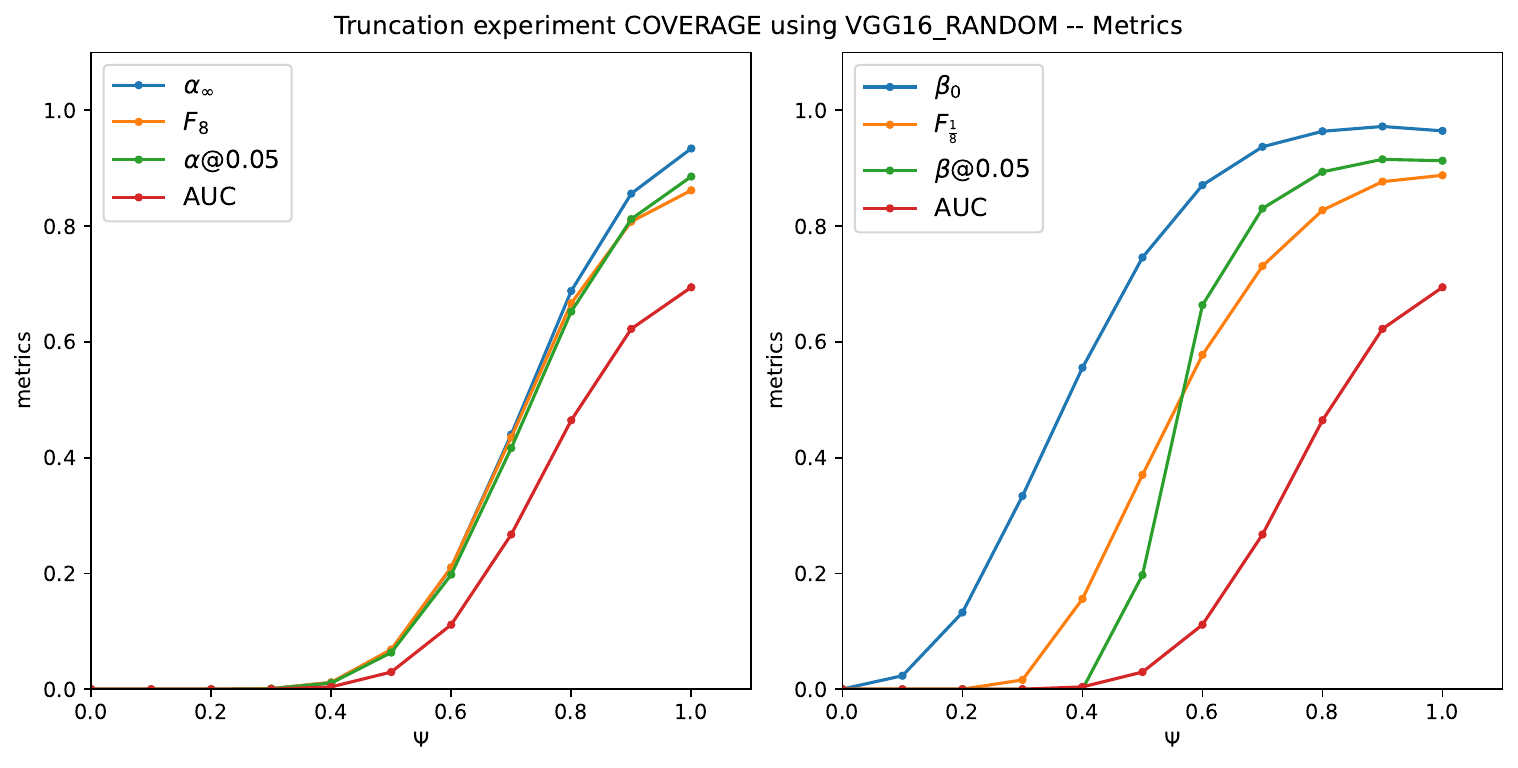}
        \caption{VGG random}
    \end{subfigure}
    \caption{Truncation experiment with the \textbf{Coverage} metric, using the same parameters as in the original article.}    \label{fig:real-exp-coverage-stylegan}
\end{figure}

\begin{figure}[htbp]
    \centering
    \begin{subfigure}[b]{0.49\textwidth}
        \includegraphics[width=\textwidth,trim={0 0 0 20},clip]{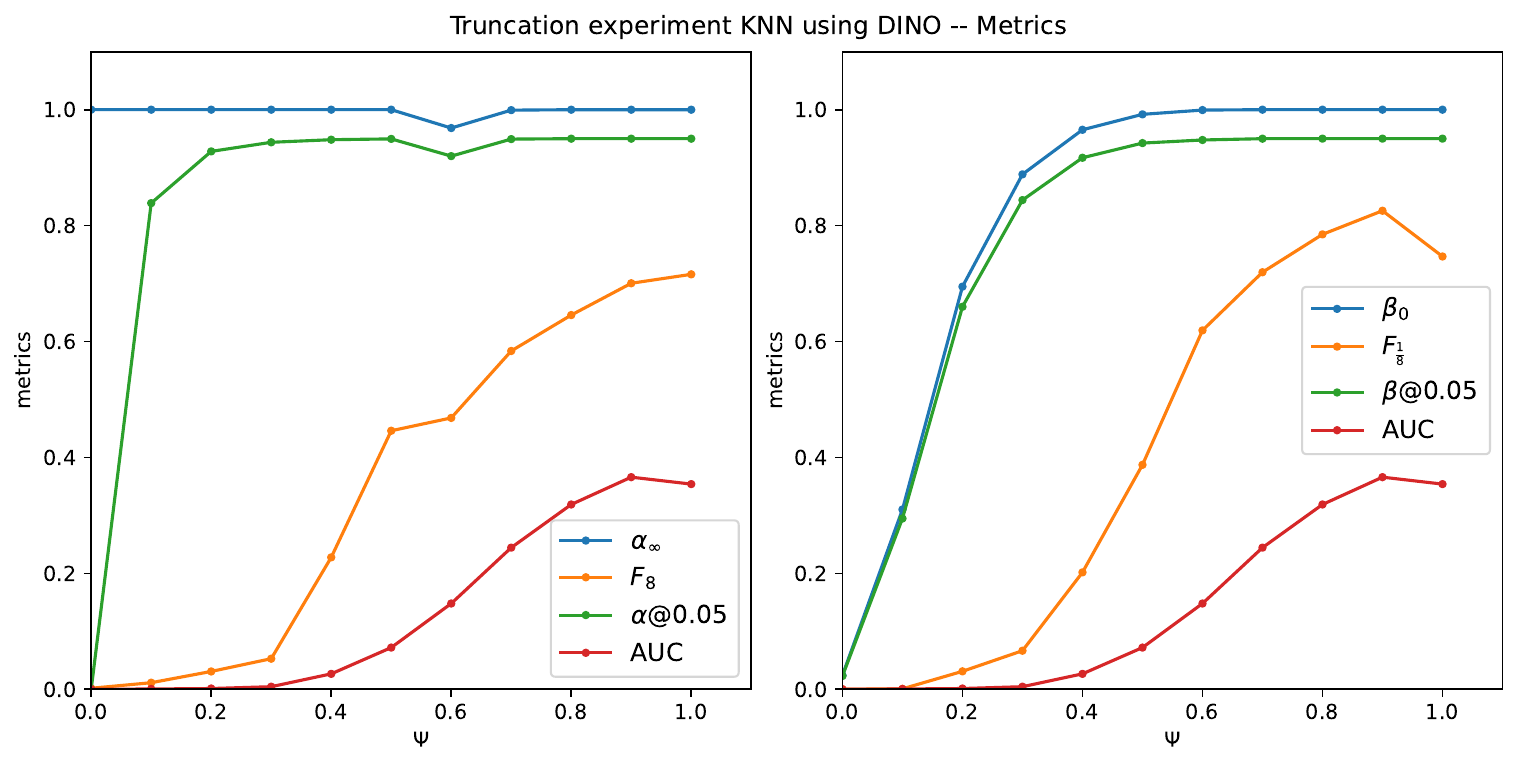}
        \caption{DINOv2}
    \end{subfigure}
    \hfill
    \begin{subfigure}[b]{0.49\textwidth}
        \includegraphics[width=\textwidth,trim={0 0 0 20},clip]{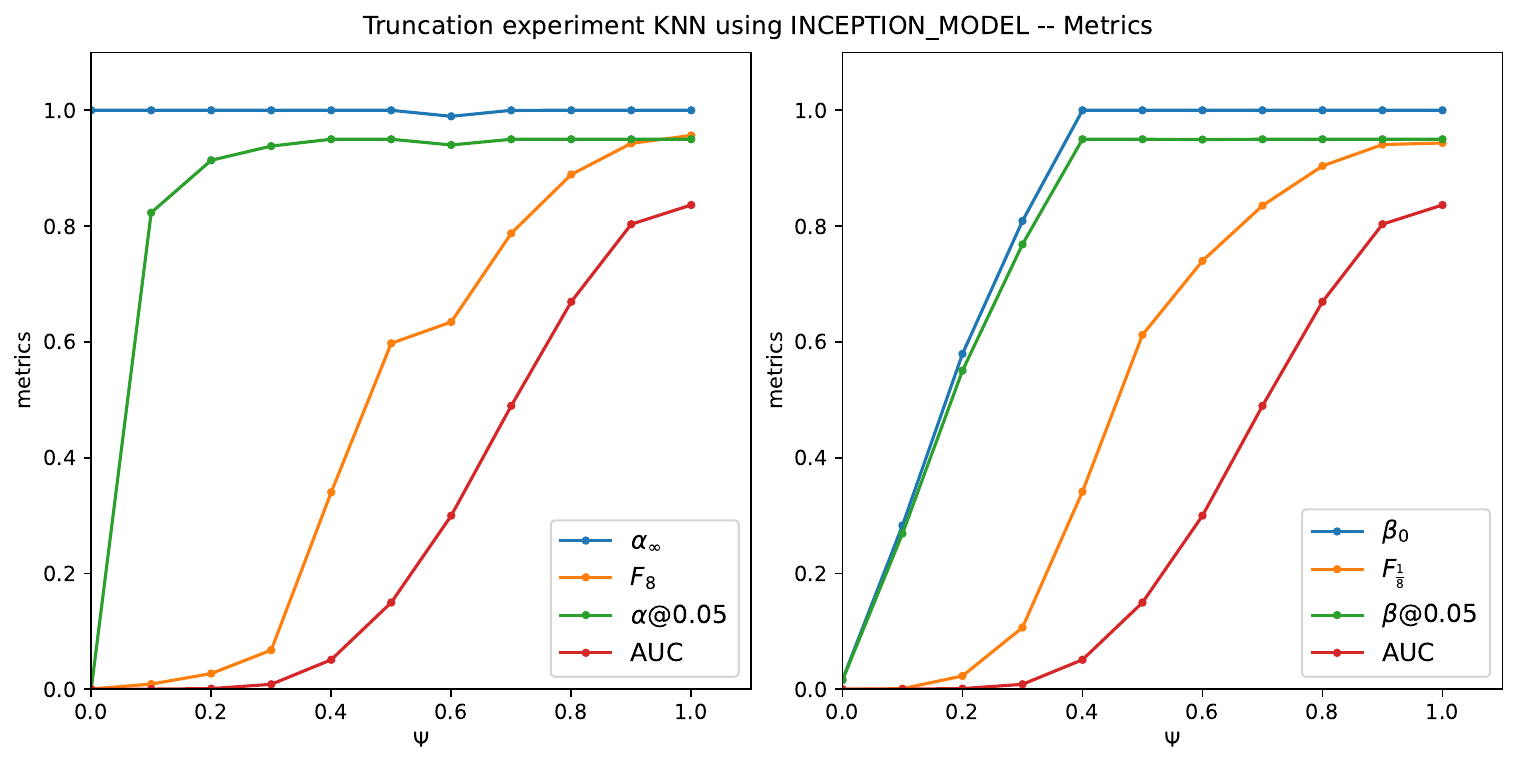}
        \caption{Inception}
    \end{subfigure}
    \vskip\baselineskip
    \begin{subfigure}[b]{0.49\textwidth}
        \includegraphics[width=\textwidth,trim={0 0 0 20},clip]{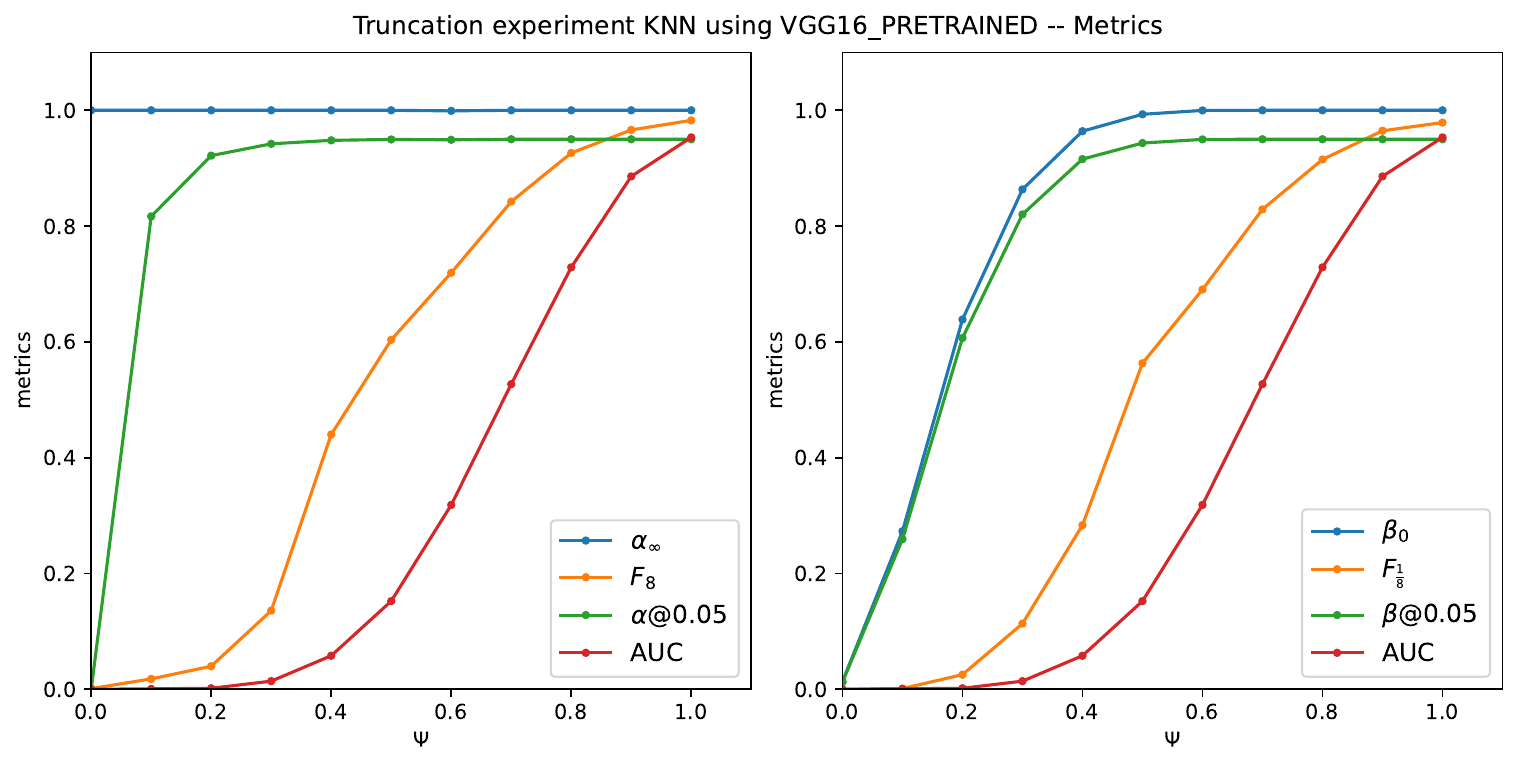}
        \caption{VGG pretrained}
    \end{subfigure}
    \hfill
    \begin{subfigure}[b]{0.49\textwidth}
        \includegraphics[width=\textwidth,trim={0 0 0 20},clip]{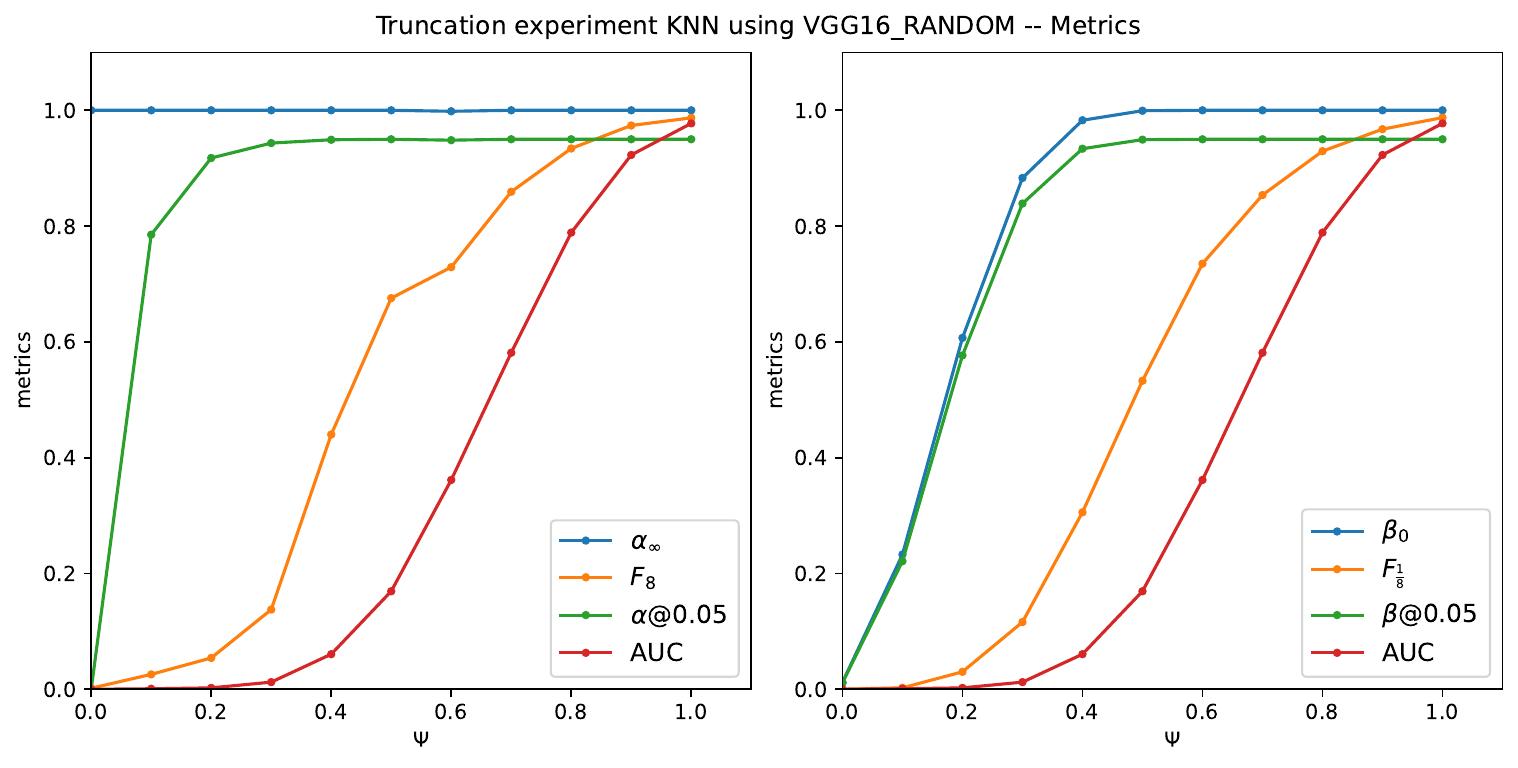}
        \caption{VGG random}
    \end{subfigure}
    \caption{Truncation experiment with the \textbf{\knn{}} method (split=0.5, k=$\sqrt{n}$).}
    \label{fig:real-exp-knn-stylegan}
\end{figure}

Regarding the extreme Precision metric $\alpha_\infty$ for the \ipr{} method, as expected we obtain the same global behavior (mostly monotonic decreasing) as in \citet{kynkaanniemi_ImprovedPrecisionRecall_2019}.
Notice that the only slight difference is for the Inception embedding, which is probably due to the fact that 
we use the deterministic StyleGAN-V2 which makes generated samples collapse to a single image when $\Psi\rightarrow 0$, making estimation of Precision more challenging.

The comparison of \cov{} with \knn{} and \ipr{} highlights a different behavior for $\alpha_\infty$ when $\Psi\rightarrow 0$ which we argue to be the desired one, as done previously by \citet{djolonga_PrecisionRecallCurvesUsing_2020}.
In this setting, we find ourselves comparing the real distribution $P$, to the distribution $Q_{\Psi}$, which collapses to a Dirac.
Using the cosupport concept introduced in Definition.~\ref{def:support-cossuport}, the extreme Precision writes $\alpha_\infty(P,Q)=Q(\cosupp(P,Q))$. 
In this case $\cosupp(P,Q)=\emptyset$ yielding $\alpha_\infty(P,Q)=0$, which is not the value yielded by \ipr{}. On the other hand, the the other metrics all yield $\alpha_\infty=0$ when $\Psi=0$ which indeed is the expected behavior.

Note that other metrics (AuC, $F_b$ and $\alpha @ \varepsilon$) share the same behavior across all embeddings (DINOv2, VGG, Inception), and all methods, including \ipr{}.

\paragraph{Impact of the embedding model}
Interestingly, a randomly initialized VGG embedding yields very similar result to what is obtained with DINOv2 or VGG networks trained on ImageNet, as previously reported by \citet{naeem_ReliableFidelityDiversity_2020}. 
In the mean time, we might question the relevance of using models pretrained on simple and cropped datasets such as ImageNet to evaluate models on complex datasets. Using a random model seems relevant as it does not induce a bias from any given dataset.
To go further, we might wonder how many output dimensions are necessary to capture all the relevant information from the images. Consistently removing a large number of dimensions from the output of a randomly initialized neural network might help yielding relevant results while not suffering from the curse of dimensionality.

\subsection{Hybrid Experiment: Sampling Gaussians from StyleGAN Statistics}
\label{sec:hybrid-exp}

In this section we present a hybrid experiment standing between the ``real'' StyleGAN truncation experiment and the ``toy experiment'' where we made two gaussian distributions evolve with respect to one another.

Due to the fact that the DINOv2 model produces embeddings in lower dimension than the other models---thus making computations less intensive---we will use DINOv2 embeddings for the remainder of this experiment.
In this experimental setting, we select the 50 000 samples embeddings that come from the FFHQ dataset and are ran through our embedding model. We will project all the features in the same subspace which is yielded by the first $d$ principal components of FFHQ embeddings. Our experiment focuses on varying the number of selected dimensions $d$. For DINOv2 dimension $d$ lies in $[1, 384]$.

For each given truncation value of $\Psi \in \{0.0, 0.1,..., 1.0\}$, we select the embeddings of the 50 000 samples generated on StyleGAN (discussed in \ref{section:stylegan-exp}), trained on FFHQ and truncated with parameter $\Psi$. We project the given embeddings in the same $d$ dimension subspace determined by FFHQ. 
In this setting, we therefore first reduce the complexity of the problem and transpose it in a controlled setting where we associate each distribution to a Gaussian distribution determined by the mean and covariance of the projected embeddings. This experimental setting makes it possible to compare the PR curves we compute to the ground truth which is estimated by a Monte Carlo simulation applied on the Bayes likelihood ratio classifier---which is explicitly known.

For each generated curve which is defined by a subspace dimension $d$, a truncation value $\Psi$ and a method $M$, we compare the method's curve to the ground truth estimation using the intersection over union (IoU) score. Given a value of $\Psi$, we propose to denote $G^d_\Psi$ the multivariate and non isotropic Gaussian distribution yielded by the projected embeddings in dimension $d$. Similarly, $G^d_{\text{FFHQ}}$ describes the pending FFHQ distribution in dimension $d$.

We propose three different experimental settings, detailed in the following table, each representing a particular scenario designed to reveal how certain approaches are favored or disadvantaged under specific conditions.

\begin{center}
\begin{tabular}{|c|c|c|c|c|}
\hline
\textbf{Description}&\textbf{Section} & $P^d$ & $Q^d$ & \textbf{Favorable Bias} \\
\hline
FFHQ vs truncated distribution & \ref{sub-sec:exp-DINO-stylegan} & $G^d_{FFHQ}$ & $G^d_\Psi$ & underestimation \\
Distribution vs itself &\ref{sub-sec:exp-DINO-self} & $G^d_\Psi$ & $G^d_\Psi$ & overestimation \\
Differently weighted GMMs &\ref{sub-sec:exp-DINO-gaussian-mixture}  & $GMM_{\Psi,\Psi'}$ & $GMM_{\Psi,\Psi''}$ & unknown \\
\hline
\end{tabular}
\end{center}
\

\subsubsection{Comparing projected FFHQ to projected truncated distribution}
\label{sub-sec:exp-DINO-stylegan}

In this setting, we transpose exactly the previous StyleGAN experiment to a controlled setting where, for a given method we compare $G^d_\Psi$ to $G^d_{FFHQ}$. As we now know, in high dimension, the curves are more complex to estimate. Besides, because Gaussians in high dimensions tend to be singular with respect to one another, the ground truth curve tends towards 0. Therefore this experiment \textbf{favors methods that tend to underestimate the PR curve} as it is the case for \citet{kynkaanniemi_ImprovedPrecisionRecall_2019}. 

Similarly to the previous workflows we presented, we compute the IoU score between the generated curves and the estimated ground truth and average the various computed IoU over different $\Psi$ values.

The results of this experiment are shown in Figure \ref{fig:dino-gaussian-samples}. We can observe how indeed the dimension of the ambient space makes it in practice hard to estimate Precision and Recall, especially for the extreme values. As we select more and more eigenvalues, the IoU score effectively decreases until reaching approximately 0. For example if we focus on the \textit{\knn{} with split} method we introduced earlier on, we see that when selecting the 2 largest eigenvalues, the IoU score is around 0.9 and this value decreases to as low as 0.3 when selecting the largest 50 eigenvalues. 
\ipr{} turns out to be a clear winner in this setting as it is known to underestimate the ground truth PR.

\begin{figure}[htbp!]
    \centering
    \includegraphics[width=\mywidth,trim={0 0 0 20},clip]{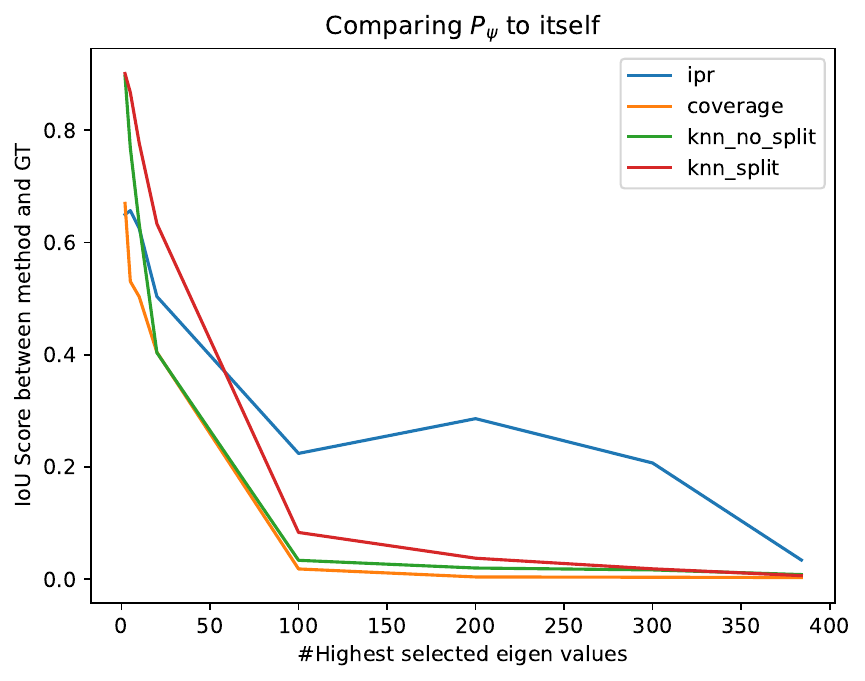}
    \caption{Evolution of IoU score between ground truth and various Precision and Recall metrics based on Gaussian samples of DINOv2.}
    \label{fig:dino-gaussian-samples}
\end{figure}

\subsubsection{Comparing a truncated and projected distribution to itself}
\label{sub-sec:exp-DINO-self}
In this setting, for each truncation $\Psi$ value, the generated distribution $G^d_\Psi$ is compared to itself. We know that in this setting, \ipr{} tends to considerably underestimate the PR curve which should be the entire going through $(1,1)$. 
This experiment should leave an advantage to methods that tend to overestimate the true PR curve. 

The results of this experiment are shown in Figure \ref{fig:dino-self}. We indeed observe that \ipr{} decreases very fast and that the other methods give more appropriate results. In this setting, we illustrate the consistency of the kNN based estimators, \knn{} and \cov{}. Interestingly, in this setting, \knn{} without split performs globally better than coverage (note that the number $k$ of neighbors differs between these two approaches so that their estimates  are  not strictly identical even for extreme values $\lambda=0$ or $\lambda=\infty$).

\begin{figure}[htbp!]
    \centering
    \includegraphics[width=\mywidth,trim={0 0 0 20},clip]{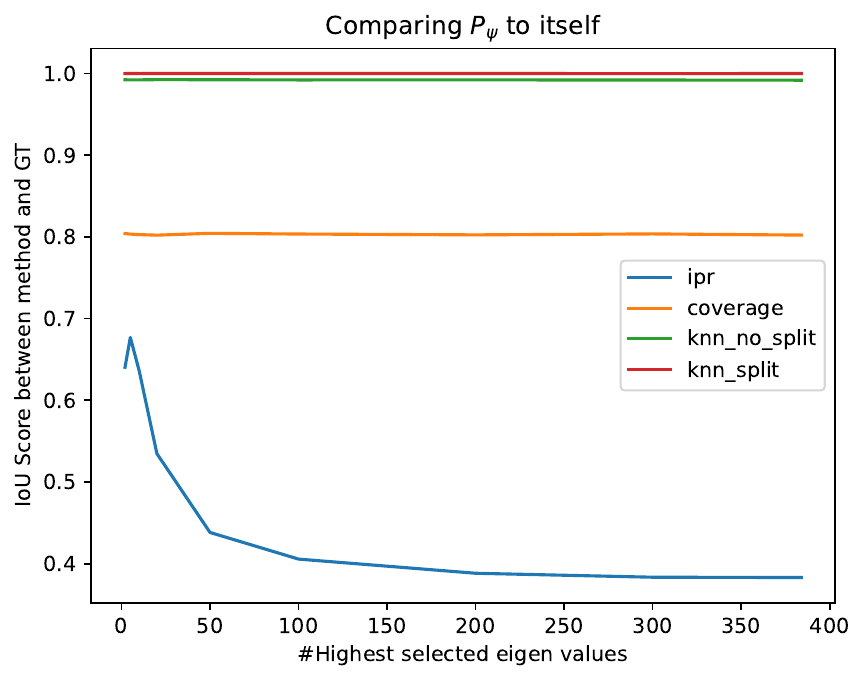}
    \caption{Evolution of IoU score when comparing a distribution to itself with various Precision and Recall metrics.}
    \label{fig:dino-self}
\end{figure}

\subsubsection{Truncated and projected Gaussian Mixture}
    \label{sub-sec:exp-DINO-gaussian-mixture}
In this experiment, we turn back to the GMM experiment described in Section~\ref{sec:exp_GMM}.
Now, we consider the case where $P^d \propto G^d_{\Psi=0.3} + G^d_{\Psi=0.5}$ and $Q^d \propto G^d_{\Psi=0.3} + G^d_{\Psi=0.9}$ are two Gaussian mixtures sharing a mode.
As the modes have all equal weight, the theoretical PR curve is approximately be a square going through $(0.5,0.5)$. This setting neither favors overestimation nor underestimation.

The results of this experiment are shown in Figure \ref{fig:dino-mixt-gaussian} where kNN based methods have very similar results and have an IoU score that tends very fast towards $0$. 
This time the comparison between \knn{} without split and \cov{} is reversed as the latter now has the upper hand by a small margin.
On the other hand, the \ipr{} method even though having a kink at the start gives a constant and reasonable IoU score with the ground truth curve.

\begin{figure}[!htpb]
    \centering
    \includegraphics[width=\mywidth,trim={0 0 0 20},clip]{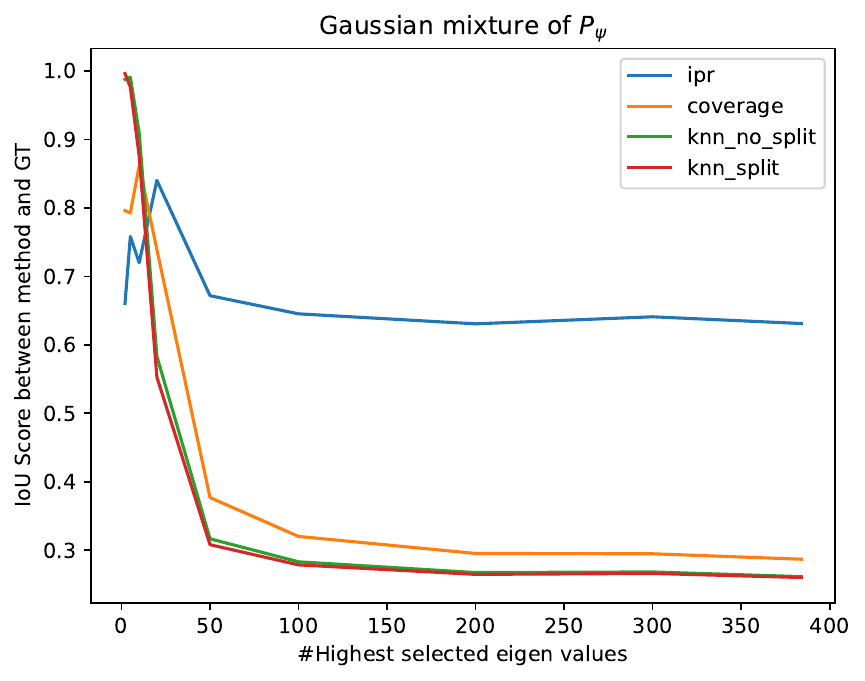}
    \caption{Evolution of IoU score between ground truth and various Precision and Recall metrics in the Gaussian mixture of StyleGAN truncated embeddings setting.}
    \label{fig:dino-mixt-gaussian}
\end{figure}

As the two distributions are Gaussian Mixtures, the extreme values should be equal to 1. Yet as the dimension increases, the two distributions become singular with respect to one another and the ground truth curve tends towards a square passing through $(0.5, 0.5)$.

\definecolor{color_KNN}{RGB}{255,136,32}

\subsection{Experiments with a conditional diffusion model}
\label{section:stable-diffusion-experiments}
This section is dedicated to experiments with a state-of-the-art generative flow-matching model.
We use Stable Diffusion \citep{esser_ScalingRectifiedFlow_2024}, a conditional text-to-image generative model based on rectified flows. 
More specifically, we use the public model \texttt{SD-3.5-medium} (with approximately three billion trainable parameters plus CLIP encoders for text conditioning) from the HuggingFace repository \citep{huggingface_StabilityaiStablediffusion35mediumHugging_2025}. 
During inference, for a given text prompt, an ordinary differential equation solver is used to synthesize iteratively a new image based on the velocity field predicted by the trained model conditioned on the text prompt.
To improve the image quality and prompt adherence, this diffusion process may be enhanced by extrapolating the estimated velocity using the classifier-free guidance technique of \cite{ho_ClassifierFreeDiffusionGuidance_2022}.
This effect is controlled by the so-called “guidance scale” factor.

In this context, we experiment two different configurations which allow to showcase the proposed metrics for a conditional generative model, when either changing the text conditioning (\S~\ref{sec:exp_prompt}) or the guidance scale (\S~\ref{sec:exp_guidance}), by
comparing samples from a generic vs specific guidance.
To do so, in both case, two datasets of 40k images are created at ${512 \times 512}$ pixels resolution with 25 inference steps. 
Similarly to Section~\ref{section:stylegan-exp}, we then use DINOv2 \citep{oquab_DINOv2LearningRobust_2024} to extract embeddings from the images.

\subsubsection{Generic vs. Specific Prompting}\label{sec:exp_prompt}

In this setting, we first generate a reference dataset with the generic prompt ``\texttt{butterfly}'' corresponding to distribution $P$, and a second one (distribution $Q$) from the more specific prompt ``\texttt{blue butterfly}''. 
This setting allows us to evaluate how restricting the prompt affects diversity (Recall). In order to affect the Precision, we also fixed the guidance scale to $7.5$ (instead of the default value of $5$) in order to amplify the adherence to the prompt, in this case the ‘‘blue’’ aspect of the generated image.

The experimental performance curves associated with this first experiment are presented in Figure~\ref{fig:stable_diffusion_varying_prompt}. 
As the prompt of distribution $Q$ describes a subset of distribution $P$, we expect in this setting a drop in Recall. Now, because of the high guidance during synthesis, we may also expect an impact on Precision.
All the curves behave accordingly, but in noticeable different ways.
In accordance with the previous experiments on StyleGAN, the \knn{} curve tends to overestimate extreme Precision and Recall (estimated with split datasets), but with a sharp transition along the axes, also suggesting a drop of Precision and Recall. The effect of drop in Precision and Recall is highlighted by the curves summary $F_8=0.44$ and $F_{1/8}=0.52$. %
In particular the \ipr{} curve shows a significant drop in extreme Recall to $\beta^{\ipr{}}_0 =  0.21$ and a moderate for Precision to $\alpha^{\ipr{}}_\infty = 0.65$ while, conversely, the \cov{} curve exhibits a more important drop of extreme Precision than Recall ($\alpha^{\cov{}}_\infty =0.14$ vs $\beta^{\cov{}}_0 = 0.42$).

\begin{figure}[htbp!]
    \centering
    \includegraphics{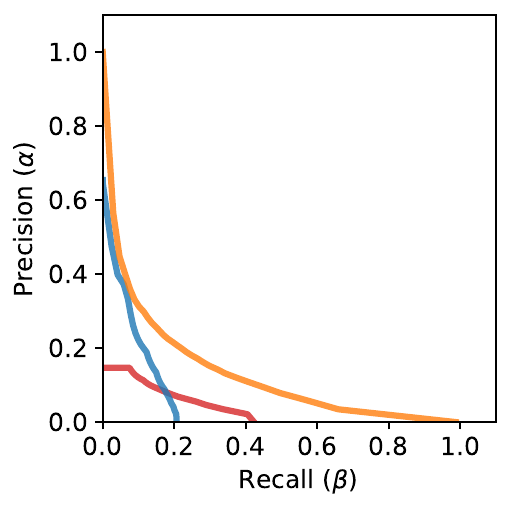}
    \caption{
        A comparison of the Precision and Recall metrics estimated with various classifiers:
        {\color{color_IPR} \bfseries --\ipr{}},
        {\color{color_KNN} \bfseries --\knn{}} and
        {\color{color_COV} \bfseries --\cov{}}.
    }
    \label{fig:stable_diffusion_varying_prompt}
\end{figure}

It is important to underline that, even in this controlled setting, we do not have access to the ground-truth to assess which method is the most relevant.
As a tentative to do so, it is interesting to visually inspect the images, some of which are showed in Figure~\ref{fig:stable_diffusion_butterfly_datasets}, with the following observations.
As expected, the variability of the ``\texttt{blue butterfly}'' images is much less in terms of color (even the blue colors of butterflies fall within a very narrow portion of the hue space compared to the ``\texttt{butterfly}'' dataset) and pose compared to those of the general class ``\texttt{butterfly}' %
thus affecting the Recall.
However, blue butterflies in the general class are mostly displayed on a white background, while samples from the restricted dataset seems to appear with a more diverse background (such as blue color gradient) %
which may also explain the simultaneous drop in Precision.
To illustrate this observation, Figure~\ref{fig:blue_butterflies_from_unrestricted} displayed the first 40 samples of blue butterflies from the unrestricted ``\texttt{butterfly}'' dataset, selected manually. 
Of course, this analysis strongly depends on the feature extractor ability to retain information about color, pose and background.

\begin{figure}[htbp]
    \centering
    \begin{subfigure}[b]{\textwidth}
        \centering
        \includegraphics[width=\textwidth]{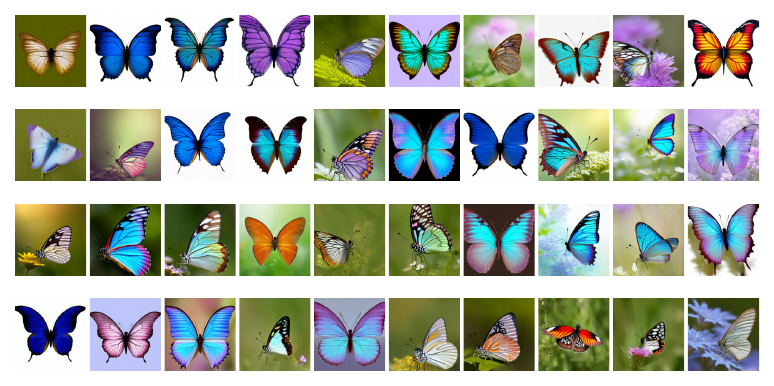}
        \caption{Samples from the ``\texttt{butterfly}'' dataset.}
        \label{subfig:butterfly}
    \end{subfigure}
    \vskip\baselineskip
    \begin{subfigure}[b]{\textwidth}
        \centering
        \includegraphics[width=\textwidth]{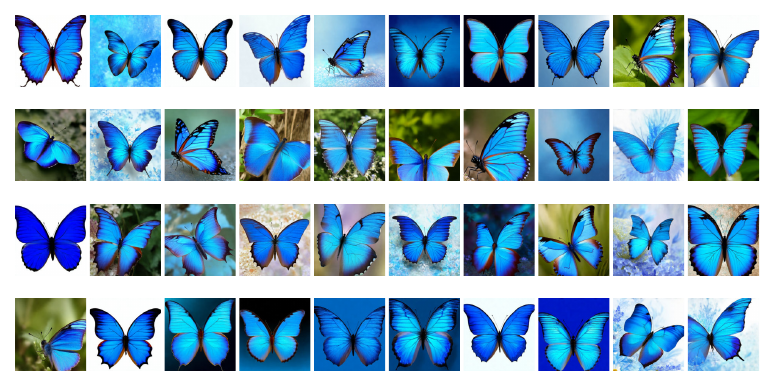}
        \caption{Samples from the ``\texttt{blue butterfly}'' dataset.}
        \label{subfig:blue_butterfly}
    \end{subfigure}
    \caption{Random images from each generated datasets, with guidance scale set to 7.5.}
    \label{fig:stable_diffusion_butterfly_datasets}
\end{figure}

\begin{figure}[htbp]
    \centering
    \includegraphics[width=\textwidth]{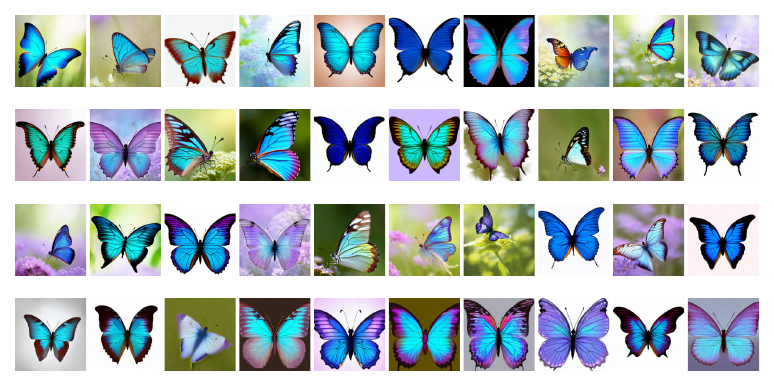}
    \caption{First 40 images of blue butterflies from the unrestricted ``\texttt{butterfly}'' dataset, selected by a human.}
    \label{fig:blue_butterflies_from_unrestricted}
\end{figure}

\subsubsection{Varying guidance scale}\label{sec:exp_guidance}

In this setting, we generate several datasets using different guidance scales and analyze their impact on the resulting Precision and Recall curves. The guidance scale parameter can be interpreted as an inverse temperature: a higher guidance scale reduces image variability while increasing adherence to the text prompt.

In the following experiment, 40k images are generated for each dataset, using guidance scales ranging from $1$ to $15$, with the same prompt : ``\texttt{a beautiful landscape with mountains and rivers}''.
Note that in order to make sure the samples are independent, a different random seed is used for the generation of each dataset.
Random samples are shown in Figure~\ref{fig:visualisation_images_guidance_scales}.

All datasets are then compared with the reference dataset corresponding to a standard guidance scale of $5$.
The summary metrics (AuC, $\alpha_\infty$, $\alpha$@$\varepsilon$, $F_b$, $\beta_0$, $\beta$@$\varepsilon$, $F_{1/b}$) 
of the corresponding PR curves are displayed in Figure~\ref{fig:stable_diffusion_varying_guidance_scale}.

\begin{figure}[htbp]
    \centering
    
    \raisebox{7mm}{\rotatebox[origin=c]{90}{\small $1.0$}}
    \hfill
    \includegraphics[width=.97\textwidth]{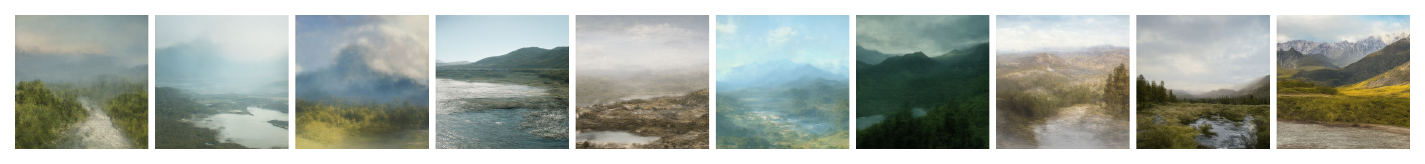}
    
    \raisebox{7mm}{\rotatebox[origin=c]{90}{\small $3.0$}}
    \hfill
    \includegraphics[width=.97\textwidth]{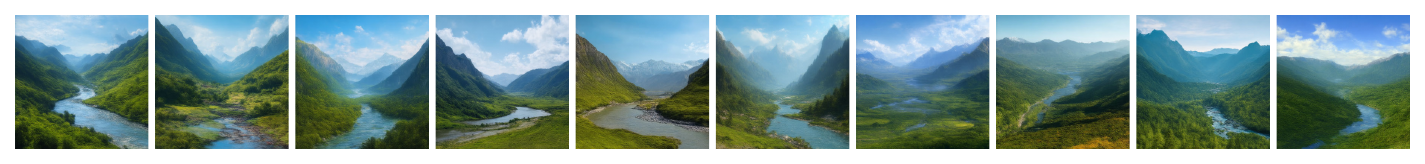}
    
    \raisebox{7mm}{\rotatebox[origin=c]{90}{\small $5.0$}}
    \hfill
    \includegraphics[width=.97\textwidth]{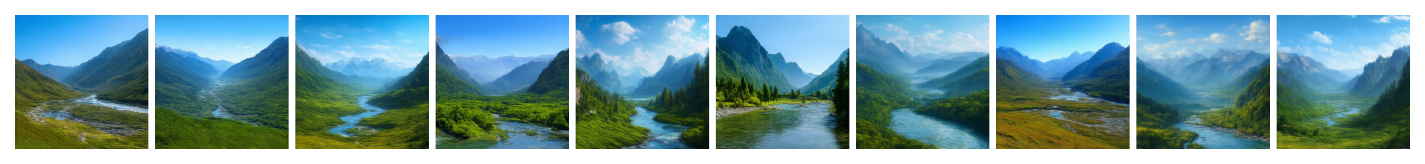}
    
    \raisebox{7mm}{\rotatebox[origin=c]{90}{\small $7.5$}}
    \hfill
    \includegraphics[width=.97\textwidth]{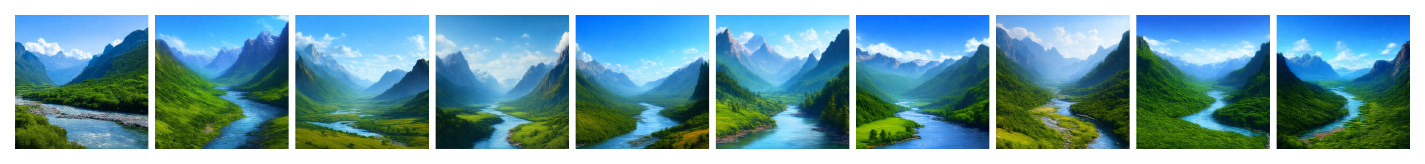}
    
    \raisebox{7mm}{\rotatebox[origin=c]{90}{\small $15$}}
    \hfill
    \includegraphics[width=.97\textwidth]{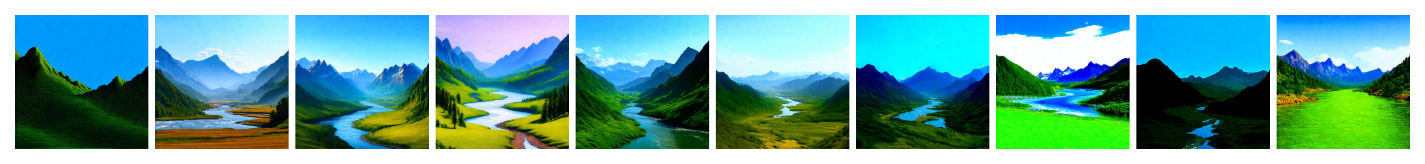}

    \caption{Generated images of ``\texttt{a beautiful landscape with mountains and rivers}'' for varying guidance scales (indicated on the left).}
    \label{fig:visualisation_images_guidance_scales}
\end{figure}

\begin{figure}[htb]
    \centering
        \begin{tabular}{c@{\hspace{2mm}}c@{\hspace{1mm}}c@{\hspace{1mm}}c}
        \multirow{2}{*}[25mm]{\rotatebox[origin=c]{90}{Precision}}
        &
        \includegraphics[width=.3\textwidth, trim={0 0 13cm 0},clip]{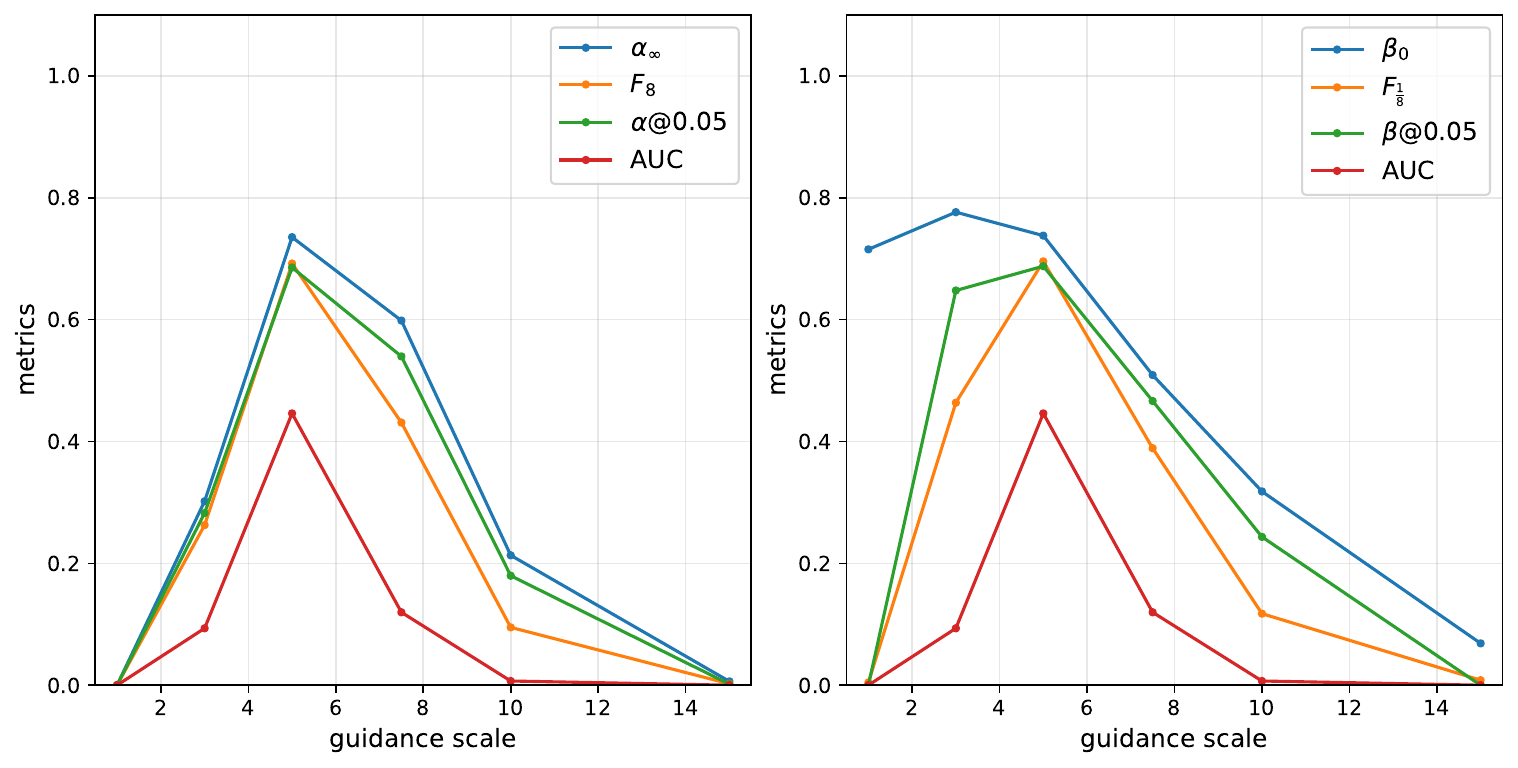}
        & 
        \includegraphics[width=.3\textwidth, trim={0 0 13cm 0},clip]{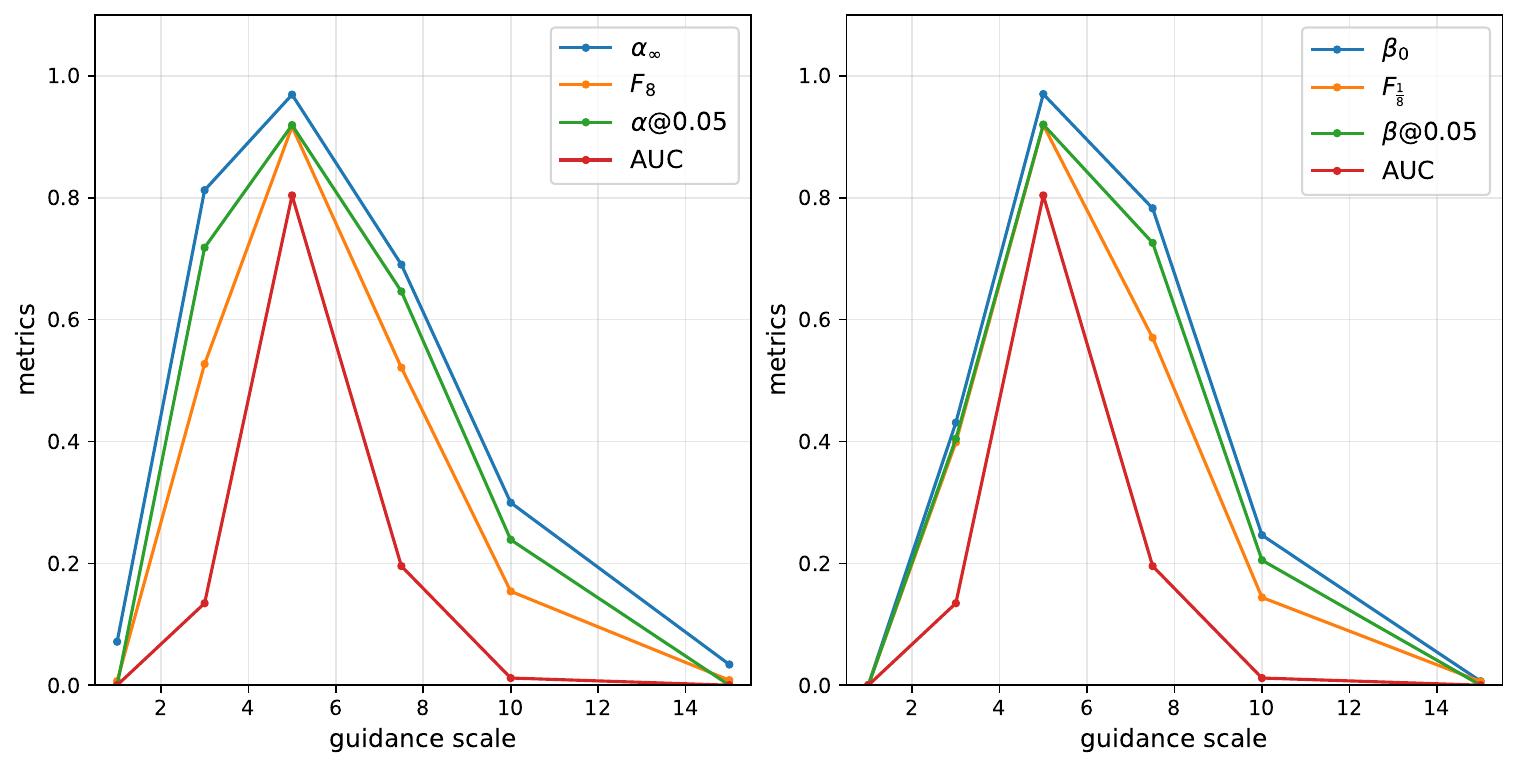}
        &
        \includegraphics[width=.3\textwidth, trim={0 0 13cm 0},clip]{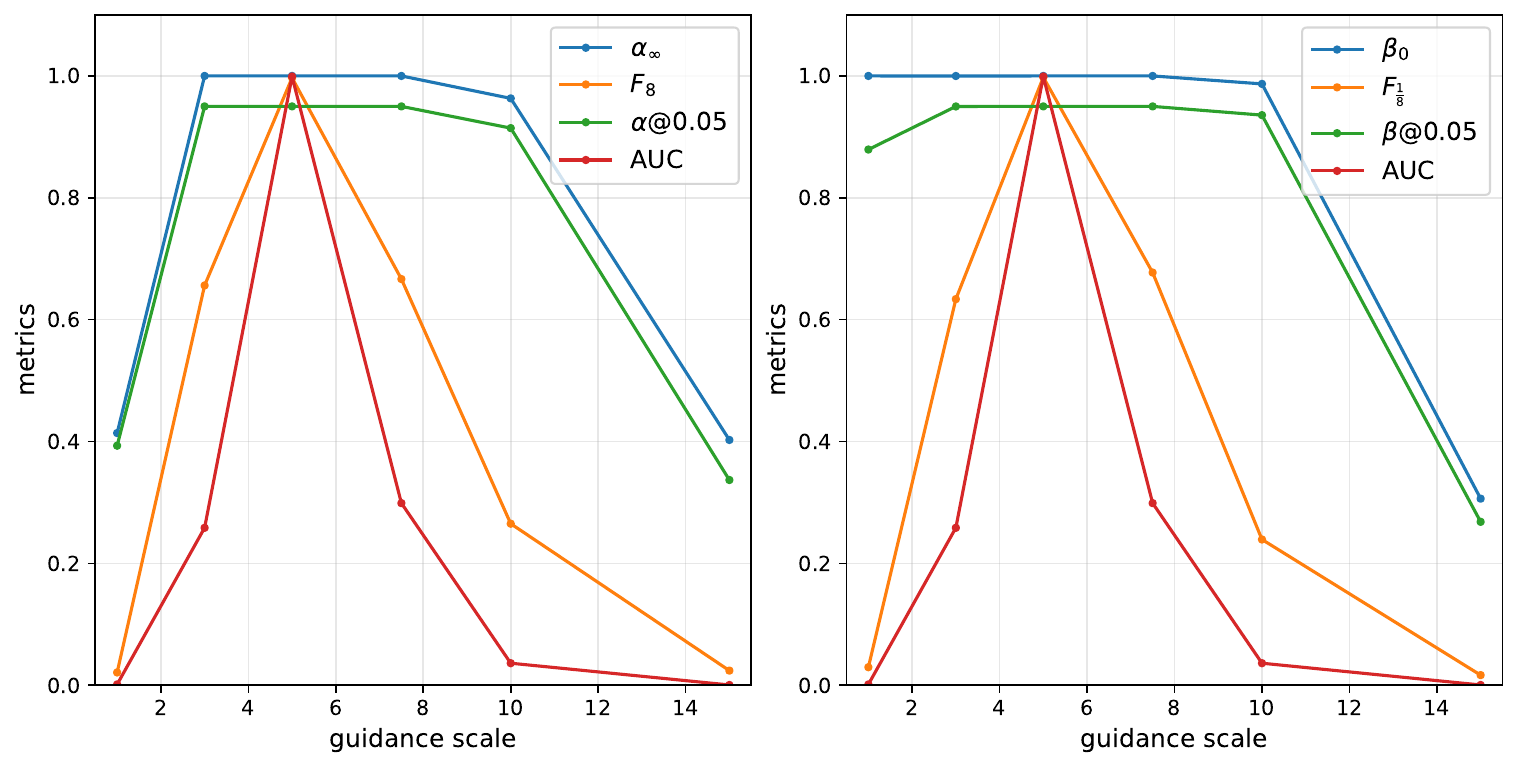}
        \\
        & \ipr{} & \cov{} & \knn{} 
        \\[2mm]
        \multirow{2}{*}[25mm]{\rotatebox[origin=c]{90}{Recall}}
        &
        \includegraphics[width=.3\textwidth, trim={13cm 0 0 0},clip]{figures_NEW/experiments/stable_diffusion/varying_scale_with_5/metrics_vs_scales_5.0_m_ipr_k_4_s_0.pdf}
        &
        \includegraphics[width=.3\textwidth, trim={13cm 0 0 0},clip]{figures_NEW/experiments/stable_diffusion/varying_scale_with_5/metrics_vs_scales_5.0_m_coverage_k_6_s_0.pdf}
        &
        \includegraphics[width=.3\textwidth, trim={13cm 0 0 0},clip]{figures_NEW/experiments/stable_diffusion/varying_scale_with_5/metrics_vs_scales_5.0_m_knn_k_200_s_0.5.pdf}
        \end{tabular}

    \caption{Experiment with generated images using the same conditioning prompt (``\texttt{a beautiful landscape with mountains and rivers}'') with different guidance scales ($1$, $2.5$, $5$, $7.5$, $10$, $15$). 
    Generated datasets are compared with a reference dataset using a fixed guidance scale of $5$. 
    Summarizing metrics for Precision and Recall curves w.r.t to the guidance scale. 
    The embeddings are based on DINOv2.}
    \label{fig:stable_diffusion_varying_guidance_scale}
\end{figure}

As expected, when spanning guidance factors, metrics of Precision and Recall increase until a maximum ($1$ in in theory) when they reach the reference guidance scale of 5, and then decrease.
When the guidance scale is low, the drop in Precision and Recall is due to a lower adherence to the prompt, resulting here in a larger diversity of scenery and a lower image quality.
For larger guidance scale, the decrease in Precision and Recall is likely due to an excessive match of the prompt (\emph{e.g.} resulting in images with a more vibrant color palette).

Interestingly, this experiment also highlights a limitation of the \ipr{} method first reported in \cite[Section~4.1.1]{naeem_ReliableFidelityDiversity_2020}: when comparing two identical Gaussian distributions, \ipr{} strongly underestimates the extreme Precision and Recall (\emph{i.e.} $\alpha^{\ipr{}}_\infty = 0.66$ and $\beta^{\ipr{}}_0 = 0.67$ instead of the perfect score $1$). 
In Section~\ref{sec:exp_shift}, while we focused the discussion on the core idea of evaluating how Precision and Recall curves are affected by two Gaussian distributions shifting away from one another, we also had observed how extreme Precision and Recall were severely underestimated with the \ipr{} method when the shift was absent.
This behavior was then observed in the StyleGAN Truncation experiment in section \ref{sub-sec:exp-DINO-self}, where Gaussian densities were fitted on the dataset of generated StyleGAN images, with an IoU of $0.66$ %
instead of 1.
Here in this setting, for a guidance scale of $5$, the two random datasets are directly drawn from the same non-Gaussian distribution, and yet we observe again the same phenomenon with $\alpha^{\ipr{}}_\infty = 0.73$ and $\beta^{\ipr{}}_0 = 0.74$ instead of $1$.
Accordingly to previous experiments, \cov{} tends to give a better estimate in this setting ($\alpha^{\cov{}}_\infty = 0.97$ and $\beta^{\cov{}}_0 = 0.97$), while \knn{} with data-splitting (which tends to overestimate extreme values) provides perfect estimates ($\alpha^{\knn{}}_\infty = 1 = \beta^{\knn{}}_0$).

\subsection{Summary of the experiments}

These experiments show in turn that different methods have an advantage over each other in some specific settings. Because of the lack of information in the general open setting, we prefer comparing metrics in controlled environments such as \ref{section:toy-experiments} or \ref{sec:hybrid-exp} in which we can estimate the ground truth and compare our estimators to it. It turns out that the extension of \ipr{}, even though not theoretically established, display some good empirical features in high dimension. 
While this observation diverges with the current literature, it should be understood as referring to the overall shape of the extended curve. In contrast, prior studies focused exclusively on the extreme value, due to inherent limitations in the PR metrics under investigation.

\section{Conclusion}
\label{sec:conclusion}
In this work, we have described a fresh perspective on the evaluation of generative models by alleviating the links between PR curves, the binary classification standpoint and total variation distance. 
We have proposed a new framework for evaluating generative models that describes an entire curve, rather than being limited to scalar metrics, as is often done in the recent literature.
This classification standpoint opened new practical perspectives (splitting, choice of hyperparameter $k$). 
Benefiting from the splitting setting, we statistically analyzed the estimators of PR we introduced. This analysis allowed to highlight the crucial role played by the dimension in the upper bounds of the estimation errors. 
We then showed how to bridge common scalar PR metrics from the literature to our framework, thus stemming an entire curve from initially 2 scalar metrics. This extension can be viewed as a dual approach to those common metrics as the extreme values of our extended curves match rigorously the scalar metrics.
Finally, we have studied the empirical behavior of the obtained variants in the light of several toy datasets and real world experiments with complex datasets. We consolidated the experiments by introducing a hybrid setting between the two previous kinds of dataset which allowed to display a complex case in which the ground truth metrics are known.

Our main messages are the following. First, computing non extreme PR values is crucial because of essential issues in the extreme values which are related to their sensitivity to the distribution tails.
Then, the curves themselves allow to describe more finely how the masses of the two distributions under comparison differ on their modes. This is useful in practice in order to tackle the case where a model generates data from the target support but with re-weighted masses.
We highlighted the negative impact of dimension in the quality of our estimators. Yet, as the extreme values of our extensions match some of the methods we extended, this approximation upper bound also holds for the initial metrics.
On the experimental side, there is no method clearly winning over the others as different behaviors emerge in varying scenarios (high dimension, several modes in the distributions ...)
If employing a data split is theoretically appealing and allowed our statistical analyses, its empirical impact is less marked since the negative bias resulting from the lack of split can sometimes advantageously compensate the positive bias caused by the restricted hypothesis class. However, this benefit is not consistent over all experiments.

This work offers several interesting perspectives and could benefit from some extensions.
A first step would be to focus on an asymptotic analysis for kNN methods in order to elucidate the convergence rate of the estimator.
In addition, the proposed KDE asymptotic analysis derived an upper bound for the estimation error which yielded a minimax upper bound. An extension of this analysis could be developed to establish a minimax lower bound.

\clearpage
\appendix

\section{Proof of \knn{} consistency}
\label{app:proof-consistency-knn}
\begin{proof}
     To establish the proof, we need only show that $R_\lambda(f^{kNN}_{\lambda})\to \alpha_\lambda$ as $k\to\infty$ and $\tfrac{k}{n}\to 0$.
     This will effectively imply both items in the theorem since $\alpha_\lambda$ is the associated Bayes risk \citep{simon_RevisitingPrecisionRecall_2019}. 
     To establish this limit, the first step is to show that for fixed $k$ then $\lim_{n\to\infty} R_\lambda(f^{kNN}_{\gamma})$ is equal to
     \begin{equation}
     \label{eq:rlambda}
         \begin{split}
     2\lambda \E[\eta(Z)\P\{\mathrm{Binom}(k,\eta(Z))<\tfrac{k}{\gamma+1}|Z\}] +\\
     2\E[(1-\eta(Z))\P\{\mathrm{Binom}(k,\eta(Z))>\tfrac{k}{\gamma+1}|Z\}] ,
         \end{split}
     \end{equation}
     where $Z=UX+(1-U)Y$ with $X\sim P$, $Y\sim Q$  and $U$ is a fair coin random variable so that $Z\sim \tfrac{P+Q}{2}$ and $\eta(Z):=\P(U=1|Z) = \tfrac{dP}{d(P+Q)}(Z)$ (respectively $1-\eta(Z):=\P(U=0|Z) = \tfrac{dQ}{d(P+Q)}(Z)$). The demonstration of Equation~\eqref{eq:rlambda} follows the same argument as in \citet{devroye2013probabilistic}[Theorem~5.2] (up to the occurrence of $\lambda$ and $\gamma$ weights) and is not repeated here for the sake of conciseness.

    Now, taking $\gamma=\lambda$ we want to show that the previous expression tends to $\alpha_\lambda$ which for the Recall equals $(\lambda P\wedge Q)(\measSpace)$ or expressed otherwise as $2\E[\lambda \eta(Z)\wedge(1-\eta(Z))]$.

     Equation~\eqref{eq:rlambda} can be reformulated as $\lim_{n\to\infty} R_\lambda(f^{kNN}_{\lambda}) = 2\E[\mu_\lambda(\eta(Z))]$ with
     \begin{equation}
         \label{eq:mu}
         \begin{split}
         \mu_\lambda(p) =& \lambda p \P\{\mathrm{Binom}(k,p) < \tfrac{k}{\lambda+1}\} 
                  + (1-p) \P\{\mathrm{Binom}(k,p)>\tfrac{k}{\lambda+1}\} .
         \end{split}
     \end{equation}
     So that it suffices to show that $\forall p\in[0,1]$, $\mu_\lambda(p)\to\lambda p \wedge(1-p)$.
     
     Let's proceed by cases, starting by considering $\lambda p < (1-p)$ which is also equivalent to $p<\tfrac{1}{\lambda+1}$. In that case we need to show that $2\mu_\lambda(p)\to \lambda p$.
     Denoting $q_\lambda(p)  = \P\{\mathrm{Binom}(k,p)>\tfrac{k}{\lambda+1}\}$, we have
     \begin{equation}
     \begin{split}
        \mu_\lambda(p) =&  \lambda p (1-q_\lambda(p)) + (1-p)q_\lambda(p) \\
                =& \lambda p + q_\lambda(p)(1-(\lambda+1)p) .
     \end{split}
     \end{equation}
     Using Hoeffding's inequality (i.e. $\forall t>0, \P\{\mathrm{Binom}(k,p)-kp> t\}\leq \exp (-2k t^2)$  and obtain
     \begin{equation}
         \begin{split}
     q_\lambda(p) =& \P\{\mathrm{Binom}(k,p)-kp> k (\tfrac{1}{\lambda+1}-p)\} \\
     \leq& \exp \left(-2k \left(\tfrac{1}{\lambda+1}-p\right)^{2}\right) .
         \end{split}
     \end{equation}
     Note that the assumption $p < \tfrac 1{\lambda+1}$ is crucial to apply Hoeffding's inequality (because $t$ needs to be positive).
     The right hand side converges to $0$ as $k\to\infty$ because by assumption $p\neq \tfrac{1}{\lambda+1}$.

     The case where $\lambda p > (1-p)$ (or $p>\tfrac{1}{\lambda+1}$) is similar and is left to the reader. In that case, we obtain $\mu_\lambda(p)\to (1-p)$.
     There remains the case of equality, that is $\lambda p = 1-p = \tfrac{1}{\lambda +1}$. In that case, even without taking the limit, one can check that $\mu_\lambda (p) = \lambda p$, which concludes the proof.
\end{proof}

\section{Proof of KDE results}

\subsection{Subgaussian Random Variables}
\begin{proposition}[Properties on subgaussian random variables]
\label{prop:subgauss-props}
\ 
\begin{enumerate}
    \item\label{itm:subgauss-sum} If $X$ and $Y$ are independent random vectors in $\R^d$, then $\Var_G(X+Y)\leq\Var_G(X)+\Var_G(Y)$.
    \item\label{itm:subgauss-ball} If $X$ is uniformly distributed on the unit ball of $\R^d$, then $\Var_G(X)\leq  \frac \eta{d}$ where $\eta>0$ is a universal constant independent from $d$ 
\end{enumerate}
\end{proposition}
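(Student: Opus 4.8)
The plan is to prove the two items in turn: item~\ref{itm:subgauss-sum} is immediate from the definition of the subgaussian variance, and item~\ref{itm:subgauss-ball} reduces, after a polar decomposition, to a short moment computation on the sphere.

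For item~\ref{itm:subgauss-sum}, I would assume $\Var_G(X)$ and $\Var_G(Y)$ are both finite (otherwise the inequality is vacuous). Since $X$ and $Y$ are independent, so are the centered vectors $X-\E[X]$ and $Y-\E[Y]$, and for every $\theta\in\R^d$,
\[
\E\!\left[e^{\langle\theta,(X+Y)-\E[X+Y]\rangle}\right]
= \E\!\left[e^{\langle\theta,X-\E[X]\rangle}\right]\,\E\!\left[e^{\langle\theta,Y-\E[Y]\rangle}\right]
\le e^{\frac12\left(\Var_G(X)+\Var_G(Y)\right)\|\theta\|^2}.
\]
Thus $X+Y$ is $K$-subgaussian with $K^2=\Var_G(X)+\Var_G(Y)$, and the minimality in Definition~\ref{def:subgaussianity} yields the claim.

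For item~\ref{itm:subgauss-ball}, I would write $X=R\,U$ with $U$ uniform on the unit sphere $\mathbb{S}^{d-1}$ and $R\in[0,1]$ independent of $U$; note $\E[X]=0$. By rotational invariance of $U$, for any $\theta\in\R^d$ one has $\langle\theta,X\rangle\overset{d}{=}\|\theta\|\,R\,U_1$, where $U_1$ is the first coordinate of $U$ and $R\perp U_1$, so conditioning on $R$ and using $R\le 1$ it suffices to bound $\E[e^{tU_1}]$. Since $U\overset{d}{=}-U$, the odd moments of $U_1$ vanish, and I would control the even ones through the Gaussian-quotient representation $U\overset{d}{=}G/\|G\|$ with $G\sim\mathcal N(0,\mathbb I_d)$: independence of direction and norm gives $\E[U_1^{2k}]=\E[G_1^{2k}]/\E[\|G\|^{2k}]=\frac{(2k-1)!!}{d(d+2)\cdots(d+2k-2)}\le\frac{(2k-1)!!}{d^k}$. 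Because $(2k-1)!!=(2k)!/(2^k k!)$, this gives
\[
\E\!\left[e^{tU_1}\right]=\sum_{k\ge0}\frac{t^{2k}}{(2k)!}\,\E[U_1^{2k}]
\le\sum_{k\ge0}\frac{t^{2k}}{2^k k!\,d^k}=e^{t^2/(2d)},
\]
hence $\E[e^{\langle\theta,X\rangle}]\le e^{\|\theta\|^2/(2d)}$ and $\Var_G(X)\le 1/d$, i.e. the statement holds with universal constant $\eta=1$.

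The only step needing a little care is the moment bound $\E[U_1^{2k}]\le(2k-1)!!/d^k$; the Gaussian-quotient argument makes it short and, crucially, exhibits a constant independent of $d$. An alternative I could use is the classical fact that the uniform law on the unit ball of $\R^d$ is the pushforward, under projection onto the first $d$ coordinates, of the uniform law on $\mathbb{S}^{d+1}\subset\R^{d+2}$, combined with the known $O(1/n)$ subgaussian constant of a single coordinate of the uniform law on $\mathbb{S}^{n-1}$; but the self-contained computation above is both shorter and yields an explicit value of $\eta$.
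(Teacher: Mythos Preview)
Your proof is correct. For item~\ref{itm:subgauss-sum} you do exactly what the paper does (factorize the moment generating function using independence); you simply write it out in full whereas the paper states it in one sentence.

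For item~\ref{itm:subgauss-ball} your route is genuinely different. The paper does not prove this item at all: it invokes \citep[Theorem~3.4.5]{vershynin2018high} for the uniform distribution on the sphere and remarks that the ball case follows. You instead give a self-contained argument: polar decomposition $X=RU$, then the Gaussian-quotient identity $\E[U_1^{2k}]=\E[G_1^{2k}]/\E[\|G\|^{2k}]=(2k-1)!!/\prod_{j=0}^{k-1}(d+2j)$, bounded by $(2k-1)!!/d^k$, and finally summed into the exponential series. This is more elementary than the cited reference, and it gives the explicit value $\eta=1$, whereas the paper leaves $\eta$ as an unspecified universal constant coming from Vershynin's concentration machinery. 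The only thing you rely on implicitly is the independence of $G/\|G\|$ and $\|G\|$ for a standard Gaussian, which is classical; and the interchange of sum and expectation in $\E[e^{tU_1}]$ is justified since $|U_1|\le 1$. The alternative you mention at the end (projecting the uniform law on $\mathbb{S}^{d+1}$ onto $\R^d$) is essentially the paper's route via the sphere result, so you have in fact identified both approaches.
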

\begin{proof}
The first item is easily obtained from the standard factorization of the exponential combined with the independence assumption.
For the second item, see \citep[Theorem~3.4.5]{vershynin2018high} which establishes the result for the uniform distribution on the sphere and henceforth on the ball as well.
\end{proof}

\begin{lemma}
\label{lemma:gauss-rep}
Let $X\in\R^d$ be a $K$-subgaussian and centered random vector, and $W\sim \mathcal N(0,I_d)$, then
$$\E[\exp(a\Vert X\Vert^2)] \leq \E[\exp(K^2a\Vert W\Vert^2)] = \frac 1{\left(1-2 K^2a\right)^{\frac d2}}$$
(the rightmost identity being valid as long as $1-2 K^2a> 0$ i.e. $a<\frac 1{2 K^2}$).

In particular, for $a=\frac1{4 K^2}$ one obtains 
$$
\E[\exp(a\Vert X\Vert^2)] \leq 2^{\frac d2} .
$$
\end{lemma}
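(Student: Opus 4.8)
The plan is to linearise the quadratic form $\norm{X}^2$ appearing in the exponent by coupling $X$ with an independent standard Gaussian vector, then exploit the subgaussian hypothesis on $X$ and reduce to an explicit chi-square moment. The scheme has three steps.

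First I would introduce $g\sim\mathcal N(0,I_d)$ independent of $X$ and use the one-dimensional Gaussian moment generating function: for any fixed $x\in\R^d$, $\langle g,x\rangle\sim\mathcal N(0,\norm{x}^2)$, so $\E_g\!\left[\exp\!\big(\sqrt{2a}\,\langle g,x\rangle\big)\right]=\exp(a\norm{x}^2)$. Taking this at $x=X$ and integrating, while swapping the two expectations (permissible by Tonelli since the integrand is nonnegative), gives
\begin{equation*}
\E\!\left[\exp(a\norm{X}^2)\right]=\E_g\,\E_X\!\left[\exp\!\big(\langle \sqrt{2a}\,g,\,X\rangle\big)\right].
\end{equation*}
Second, for each frozen value of $g$ the inner expectation is the moment generating function of the centered $K$-subgaussian vector $X$ evaluated at $\theta=\sqrt{2a}\,g$; Definition~\ref{def:subgaussianity} bounds it by $\exp\!\big(\tfrac12K^2\norm{\sqrt{2a}\,g}^2\big)=\exp(K^2a\norm{g}^2)$. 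Plugging this back and using that $g$ and $W$ have the same law yields $\E[\exp(a\norm{X}^2)]\le\E[\exp(K^2a\norm{W}^2)]$.

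Third I would evaluate the Gaussian side: writing $\norm{W}^2=\sum_{i=1}^d W_i^2$ with $W_i$ i.i.d.\ $\mathcal N(0,1)$ and computing the scalar integral $\E[\exp(tW_i^2)]=(1-2t)^{-1/2}$ for $t<\tfrac12$, independence gives $\E[\exp(K^2a\norm{W}^2)]=(1-2K^2a)^{-d/2}$, finite precisely when $a<\tfrac1{2K^2}$. The special case follows by setting $a=\tfrac1{4K^2}$, so that $2K^2a=\tfrac12$ and the bound becomes $(1/2)^{-d/2}=2^{d/2}$.

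I do not anticipate a real obstacle: the only delicate points are the interchange of the two expectations, which is costless because all integrands are nonnegative, and keeping track of the admissibility range $a<1/(2K^2)$, which is exactly the condition under which the chi-square moment generating function converges. The one thing worth double-checking is that Definition~\ref{def:subgaussianity} is stated for random vectors and may legitimately be applied with the random direction $\theta=\sqrt{2a}\,g$ by conditioning on $g$, which is valid since $g$ and $X$ are independent.
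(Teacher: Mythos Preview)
Your proposal is correct and follows essentially the same route as the paper: introduce an auxiliary standard Gaussian to linearise $\exp(a\Vert X\Vert^2)$ via its MGF, swap the two expectations, apply the subgaussian bound conditionally on the Gaussian direction, and then compute the chi-square MGF by factorising over coordinates. The only cosmetic difference is that the paper uses $W$ itself as the auxiliary Gaussian (rather than a fresh $g$) and phrases the expectation swap via iterated conditioning instead of citing Tonelli.
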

\begin{proof}
NB: this lemma is a sharper version\footnote{In their work, for $a=\frac1{4 K^2}$ Goldfeld et al obtain an upper bound of the form $\left(\exp\left(\frac 34\right)\right)^{\frac d2}$ which is slightly loose in comparison to our bound since $\exp\left(\frac 34\right)\approx 2.117$. Actually our bound is tight, because it is an identity for isotropic Gaussians.} of \citet[Equation~7]{goldfeld_ConvergenceSmoothedEmpirical_2020} with a direct and basic proof.
The first part is referred as the ``Gaussian replacement technique'' in \citet{vershynin2018high}).
First notice that $\forall x\in\R^d$, $\E[\exp(W^T x)] = \exp\left(\frac{\Vert x\Vert^2}2\right)$, and apply it to $x:=\sqrt{2a}X$ (and condition on $X$ so that it can be considered constant):
\begin{align*}
    \E[\exp(a\Vert X\Vert^2)]=&  \E[\E[\exp(W^T \sqrt{2a}X)|X]] = \E[\exp(W^T \sqrt{2a}X)]\\
    =& \E[\E[\exp(W^T \sqrt{2a}X)|W]]\leq \E[\exp(\frac 12 K^2 2a\Vert W\Vert^2)] . \qquad\text{(subgaussianity of $X$)}
\end{align*}
This ends the proof of the first part (i.e. the inequality).

Let us turn to the second part (i.e. the identity),
\begin{align*}
    \E[\exp(aK^2\Vert W\Vert^2)]=& \prod_{i=1}^d \underbrace{\E[\exp(a K^2 W_i^2)]}_{=\frac 1{\sqrt{2\pi}}\int \exp(-\frac{1-2a K^2}2 w^2)dw}\\
    =& \frac 1{\left(1-2aK^2\right)^{\frac d2}}\qquad\text{if $a<\frac 1{2 K^2}$} .
\end{align*}

\end{proof}

\subsection{TV upper bound on bias}
\label{proof:tv-upperbound-bias}
\begin{proof}
We have 
\begin{align*}
    p(x) - (p\ast k_\sigma)(x)&=\int_{y} \left(p(x)k_\sigma(x-y)-p(y)k_\sigma(x-y)\right)dy\\
    &=\int_{y} (p(x)-p(y))\tfrac{k(\frac{x-y}{\sigma})}{\sigma^d}dy\\
    &=\int_{z}  (p(x)-p(x-\sigma z))k(z)dz\\
    &=\int_{z} k(z) \int_{t=0}^1 \frac{d}{dt}p(x-\sigma zt) dz dt\\
    &=-\sigma \int_{z} k(z) \int_{t=0}^1 \nabla p(x-\sigma zt)^T z dt dz .
\end{align*}

Hence,
\begin{align*}
    2\TV{P-P\ast k_\sigma}=\int | p(x) - (p\ast k)(x)|dx & \leq \sigma \int_{z} |k(z)| \int_{t=0}^1  \underbrace{\int |\nabla p(x-\sigma zt)^T z|dx}_{\leq \int_{x \in \R^d} \|\nabla p(x-\sigma zt)\| \Vert z\Vert dx = \Vert z\Vert \int \| \nabla p(v)|\|dv}dt dz \\
    & \leq \sigma \int \Vert z \Vert  |k(z)| dz \int \|\nabla p(v)\| dv \\
    & \leq \sigma R .
\end{align*}
    Last, we can notice that for the constant kernel that is both non-negative and supported on the unit ball $\int \Vert z\Vert  |k(z)| dz \leq \int k(z) dz = 1$.
\end{proof}

\subsection{Proof of Lemma~\ref{prop:risk_deviation}}
\label{sec:proof-smoothed-TV-bound-highSNR}
\begin{proof} We have
\begin{align*}
    \E [2\TV{\bar P \ast k_\sigma - P\ast k_\sigma}]
    =& \int \E [ |\bar P\ast k_\sigma(z)-P\ast  k_\sigma(z)|]dz\\
    =& \E \left[ \frac{|\bar P\ast k_\sigma(Z)-P\ast  k_\sigma(Z)|}{f_a(Z)}\right]\\
    \leq& \E \left[ \left(\frac{\bar P\ast k_\sigma(Z)-P\ast k_\sigma(Z)}{f_a(Z)}\right)^2\right]^{\frac 12} ,
\end{align*}
where $Z\sim f_a(z)dz$ is an artificially introduced random vector in $\R^d$ independent of the samples and $f_a$ is a density that can be chosen arbitrarily as long as it is never null. 
In our case, we use
$$
f_a(z):=\mathcal N(z; 0,\frac 1{2a}I)=\left(\frac a\pi\right)^{\frac d2}\exp\left(-a\Vert z\Vert^2\right) .
$$
 Note that in the last inequality is merely the Cauchy-Schwarz one.

Now using the law of total expectation
\begin{align*}
     \E \left[ \left(\frac{\bar P\ast k_\sigma(Z)-P\ast k_\sigma(Z)}{f_a(Z)}\right)^2\right]=& \E \left[ \E\left[\left(\frac{\bar P\ast k_\sigma(Z)-P\ast  k_\sigma(Z)}{f_a(Z)}\right)^2\middle|Z\right]\right]\\
     =& \E\left[ \underbrace{\Var\left[\frac{\bar P\ast k_\sigma(Z)}{f_a(Z)}\middle|Z\right]}_{:=(*)} \right] .
\end{align*}
Since the samples $X_1,\cdots,X_N$ are independent and $Z$ is independent from them, the $X_k$'s remain independent given $Z$ is observed. Therefore, we have
\begin{align*}
    (*)=&\frac 1N \Var\left[\frac{\delta_{X_1}\ast k_\sigma(Z)}{f_a(Z)}\middle| Z\right]= \frac 1{N} \frac 1{f_a(Z)^2} \Var\left[k_\sigma(Z-X_1)\middle| Z\right] ,
\end{align*}
which, plugged into the total second moment above, shows that
\begin{align*}
       \E [2\TV{\bar P \ast k_\sigma - P\ast k_\sigma}]^2 \leq & \frac 1{N} \int \frac 1{f_a(z)} \Var\left[k_\sigma(z-X_1)\right] dz\\
      \leq& \frac 1{N} \int \frac 1{f_a(z)} \E\left[k_\sigma^2(z-X_1)\right] dz .
\end{align*}
In our case, $k_\sigma(y)= \frac 1{\text{vol}_{B_1}(d)\sigma^d}\1_{\Vert y\Vert\leq \sigma}$ and therefore $k_\sigma^2(y)=\frac 1{\text{vol}_{B_1}(d)\sigma^d} k_\sigma(y)$. 
Thus, 
\begin{align*}
       \E [2\TV{\bar P \ast k_\sigma - P\ast k_\sigma}]^2 \leq & \frac 1{N \text{vol}_{B_1}(d) \sigma^d} \E\left[\int \frac1{f_a(z)} k_\sigma(z-X_1)dz\right]\\
      =& \frac 1{N \text{vol}_{B_1}(d) \sigma^d} \E\left[\frac 1{f_a(Y+X_1)}\right] .
\end{align*}
where $Y\sim k_\sigma(y)dy$ is independent from $X_1$. 

Given that the constant kernel corresponds to a uniform distribution on a ball of radius $\sigma$, Prop~\ref{prop:subgauss-props}.\ref{itm:subgauss-ball} implies it has a  subgaussian variance smaller than $\frac{\eta\sigma^2}d$. 
In turn, Prop~\ref{prop:subgauss-props}.\ref{itm:subgauss-sum} implies that 
$S=Y+X_1$ is also sub-gaussian with constant $K_S$ following $K_S^2 = K^2+\frac{\eta}d \sigma^2$ where the constant $\eta$ does not depend on the ambient dimension $d$.

Then we will use Lemma~\ref{lemma:gauss-rep}  with $a:=\frac 1{4 K_S^2}$ (which turns out to be the optimal choice here since it minimizes $\frac{1}{a(1-2K_S^2 a)}$):

\begin{align*}
       \E [2\TV{\bar P \ast k_\sigma - P\ast k_\sigma}]^2 \leq & \frac {\left(\frac \pi a\right)^{\frac d2}}{N \text{vol}_{B_1}(d) \sigma^d} \E\left[\exp(a\Vert S\Vert^2)\right]&\\
       \leq& {\left(\frac \pi {a(1-2K_S^2 a)}\right)^{\frac d2}}\frac 1{N \text{vol}_{B_1}(d) \sigma^d}&\text{(Lemma~\ref{lemma:gauss-rep})}\\
       \leq& \left(8K_S^2\pi\right)^{\frac d2} \frac 1{N \text{vol}_{B_1}(d) \sigma^d} .
\end{align*}
Last, we take the square root of this inequality and use both $\text{vol}_{B_1}(d)=\frac{\pi^{\frac d2}}{\Gamma\left(\frac d2 +1\right)}$ and $K_S^2=K^2+\frac \eta{d} \sigma^2$ to obtain

\begin{align*}
       \E [2\TV{\bar P \ast k_\sigma - P\ast k_\sigma}]\leq& \sqrt{\Gamma\left(\frac d2+1\right)} 8^{\frac d4}\left(\frac
       \eta d +\frac{K^2}{\sigma^2}\right)^{\frac d4}\frac 1{\sqrt N} .
\end{align*}
\end{proof}

\subsection{Proof KDE bias term}
\label{sec:proof-bias-kde}
\begin{proof}
Before starting the proper proof, let us mention that we will somewhat involve a relative bias between the classifier KDE estimator $\hat \alpha^{\text{KDE}}_\lambda$ and the plug-in KDE estimator $\hat \alpha^{\text{TV}}_\lambda$. To bound this relative bias we will need to set the latter in a form similar to the former.
Recall that from the definition \eqref{eq:hatPRC}, using empirical false positive (resp. negative) rate $\overline \fpr$ (resp. $\overline \fnr$), we have
\begin{equation*}
\begin{aligned}
    \hat \alpha^{\text{KDE}}_\lambda
    &= \min_{\gamma} \lambda\overline\fpr(\hat f^{\text{KDE}}_\gamma) +\overline\fnr(\hat f^{\text{KDE}}_\gamma) ,
\end{aligned}
\end{equation*}
where the empirical likelihood-ratio classifier parameterized by $\gamma$ is estimated from training data $\{\X, \Y\}$ using
\begin{equation*}
    \hat f^{\text{KDE}}_\gamma(z)= \1_{ \gamma \hat P(z) \geq \hat Q(z)} .
\end{equation*}
Now, we recall that the empirical FNR and FPR are estimated on the evaluation data $\{\X', \Y'\}$
\begin{equation*}
    \overline \fpr(f) = \frac{1}{N}
    \sum_{x \in \X'} (1-f(x))
    \qquad \text{ and } \qquad
    \overline \fnr(f) = \frac{1}{N}
    \sum_{y \in \Y'}
    f(y).
\end{equation*}
On the other hand, the plug-in estimator is 
\begin{align*}
    \hat \alpha^{\text{TV}}_\lambda
    &= \tfrac{1}{2}(\lambda + 1) - \TV{\lambda \hat P-\hat Q}
    =\lambda  \widehat \fpr (\hat f^{\text{KDE}}_\lambda) + \widehat \fnr (\hat f^{\text{KDE}}_\lambda) .
\end{align*}
where the estimated FNR and FPR depend on the training data $\{\X, \Y\}$ are defined as
\begin{equation*}
    \widehat \fpr(f) = \int_{x} (1-f(x)) d\hat P(x)
    \qquad \text{ and } \qquad
    \widehat \fnr(f) = \int_{y} f(y) d\hat Q(y) .
\end{equation*}
Let us now proceed with the proof. First, due to the splitting between train/test samples, the bias of the considered estimator is positive. Indeed,
\begin{align*}
\E \,\hat\alpha^{\text{KDE}}_\lambda - \alpha_\lambda=& \E \left[\E \left[ \hat \alpha^{\text{KDE}}_\lambda | \X,\Y \right] - \alpha_\lambda\right]\\
=&\E \left[\E \left[ \lambda\overline{\fpr}(f_{\gamma*}^{\text{KDE}}) + \overline{\fnr}(f_{\gamma*}^{\text{KDE}}) | \X,\Y \right] - \alpha_\lambda\right]\\
=&\E \left[\underbrace{\lambda\fpr(f_{\gamma*}^{\text{KDE}}) + \fnr(f_{\gamma*}^{\text{KDE}}) - \alpha_\lambda}_{\geq 0}\right]\geq 0 ,
\end{align*}
where we have used that for any classifier $f$, conditionally to the training set, the expectations of $\overline{\fpr}(f)$ and $\overline{\fpr}(f)$ are just the population $\fpr(f)$ and $\fnr(f)$, as well as the fact that the population excess-risk is non-negative.

Therefore
\begin{align*}%
    \left| \E \,\hat\alpha^{\text{KDE}}_\lambda - \alpha_\lambda  \right|
    &= \E \,\hat\alpha^{\text{KDE}}_\lambda - \alpha_\lambda  
    \\
    &= \E \left[  \E \left[ \hat \alpha^{\text{KDE}}_\lambda | \X,\Y \right] - \alpha_\lambda \right]
    \\
    & = \E \left[ \E[\hat \alpha^{\text{KDE}}_\lambda| \mathcal X,  \mathcal Y] -   \hat \alpha^{\text{TV}}_\lambda +  \hat \alpha^{\text{TV}}_\lambda - \alpha_\lambda \right]
    \quad \text{ using $\hat \alpha^{\text{TV}}_\lambda$ defined in \eqref{def:alpha-TV}}
    \\
    &\leq \underbrace{\E  \left[\E[\hat \alpha^{\text{KDE}}_\lambda| \mathcal X,  \mathcal Y] -  \hat \alpha^{\text{TV}}_\lambda \right]}_{\text{relative bias w.r.t to plug-in}}
    + 
    \underbrace{ \E \left| \hat \alpha^{\text{TV}}_\lambda - \alpha_\lambda \right|}_{\text{TV plug-in risk}}
    .
\end{align*}

Regarding the second term, we have already seen in Proposition~\ref{prop:plugin-trivial-bound}   that 
\begin{align*}%
     \E \left| \hat \alpha^{\text{TV}}_\lambda - \alpha_\lambda \right| 
    & \le \lambda  \E \TV{\hat P-P} +  \E \TV{\hat Q - Q} 
    .
\end{align*}

It remains to show that the relative bias is also subject to the same upper bound.
Using once again the fact that the expectation of the empirical FPR (resp. FNR) on an independent evaluation set is the FPR (resp. FNR), we can upper bound the expected estimated Precision
\begin{align*}
    \E \left[\hat \alpha^{\text{KDE}}_\lambda \,|\, \mathcal X,  \mathcal Y \right]
    &= \E \left[ \min_\gamma \lambda  \overline \fpr (\hat f^{\text{KDE}}_\gamma) + \overline \fnr (\hat f^{\text{KDE}}_\gamma) \,|\, \mathcal X,  \mathcal Y \right]
    \\
    & \le \lambda \, \E \left[ \overline \fpr (\hat f^{\text{KDE}}_\lambda) \,|\, \mathcal X,  \mathcal Y \right]
    + \E  \left[ \overline \fnr (\hat f^{\text{KDE}}_\lambda) \,|\, \mathcal X,  \mathcal Y \right]
    \quad \text{(picking $\gamma = \lambda$ instead of $\gamma^*$)}
    \\
    &= \lambda \fpr \, (\hat f^{\text{KDE}}_\lambda) + \fnr \, (\hat f^{\text{KDE}}_\lambda) .
\end{align*}
So we can bound the classifier relative bias using
\begin{align*}
    \E \left[\hat \alpha^{\text{KDE}}_\lambda| \mathcal X,  \mathcal Y] -  \hat \alpha^{\text{TV}}_\lambda  \right ]
    & \le  
        \lambda \int_{} (1 - \hat f^{\text{KDE}}_\lambda ) d(P-\hat P)
        + \int_{} \hat f^{\text{KDE}}_\lambda d(Q-\hat Q)
    \\
    & = 
        \lambda \int_{} \hat f^{\text{KDE}}_\lambda  d(P-\hat P)
        + \int_{} \hat f^{\text{KDE}}_\lambda  d(Q-\hat Q)
    \\
    & = \lambda 
        \int_{} \1_{ \lambda \hat P \geq \hat Q}  d(P-\hat P)
        + 
         \int_{} \1_{ \lambda \hat P \geq \hat Q} d(Q-\hat Q) 
    \\
    & \le \lambda 
        \TV{P-\hat P}
        + 
        \TV{ Q-\hat Q } .
\end{align*}
where we have used the definition of $\TV{P-\hat P} :=\sup_{A} P(A)-\hat P(A) = \sup_{A} \int \1_A d(P-\hat P)$.

\end{proof}

\section{Proof of Theorem~\ref{thm:cosupp}: co-support}
\label{proof:cosupp}
\begin{proof}
First, if $A, A'$ are two co-supports. Then $Q(A)=Q(A\cap A') = Q(A')$. Indeed, if $Q(A)> Q(A\cap A')$ then letting $B=A\setminus A'\subset A$ one obtains $Q(B)>0$. Yet $B\subset A'^c$ so that $P(B)< P(A'^c)=0$ yielding a contradiction.

Second, let us exhibit a co-support $C$ that verifies $Q(C)=\alpha_\infty$.
In \citet{simon_RevisitingPrecisionRecall_2019}, it is shown that for $\lambda=+\infty$, Equation~\ref{eq:dualPR} can be restated as $\alpha_\infty = \min_{A\text{ s.t. } P(A^c)=0} Q(A)$. 
Let $A^*$ one of the minimizers.
Without further care on the minimizer, $A^*$ should be merely a particular support of $P$ but could still not be a co-support. 
We therefore need to filter out any part of the space that charges $P$ but not $Q$ (which will make it a co-support without affecting its $Q$-mass).
To do so, we consider $C = A^*\setminus \cap_{\lambda>0} \{\lambda P > Q\}$. 
First, the monotone convergence theorem implies that $Q(C)=Q(A^*)=\alpha_\infty$. 

It remains to show that $C$ is indeed a co-support.
Notice that $P\wedge Q(C^c)=(P\wedge Q)({A^*}^c\bigcup \cap_{\lambda>0} \{\lambda P > Q\})\leq P({A^*}^c) + Q(\cap_{\lambda>0} \{\lambda P > Q\})=0$ (because of the constraint on $A^*$ for the first summand, and by the monotone convergence theorem again for the other summand).

Besides let $B\subset C \subset (\cap_{\lambda>0} \{\lambda P > Q\})^c=\cup_{\lambda>0} \{\lambda P \leq Q\}$ so that $Q(B)=0 \implies P(B)=0$.
Conversely, if $P(B)=0$ let us show that $Q(B)=0$. 
To do so let us reason by contradiction, by assuming that $Q(B)>0$. Then $A=A^*\setminus B$ verifies the constraint $P(A)=0$ and $Q(A)=Q(A^*)-Q(B)$ (because $B\subset C\subset A^*$). Then $Q(A)<Q(A^*)$ would contradict the definition of $A^*$.
\end{proof}

\section{Additional experimental results}\label{app:add_exp}

\subsection{Gaussians shifts}
\label{ap:gaussian_shifts}
Table.~\ref{tab:mean_iou_shift_k4} complements Section~\ref{sec:exp_shift} and Figure \ref{fig:shift-gauss} by comparing various estimated PR curves with respect to the ground truth, using average IoU scores.
While this observation contrasts with findings reported in the literature, it should be understood as referring to the overall shape of the curve. In contrast, prior studies focused exclusively on the extremes, due to inherent limitations in the PR metrics under investigation.

\begin{table}[htb]
\small
    \centering
    \begin{tabular}{c c c||c|c|c|c}
&&shift $\mu$ %
   & {\color{color_IPR}  \ipr{}}
   & {\color{color_KNN}  \knn{}}
   & {\color{color_PARZ}  \parzen{}} 
   & {\color{color_COV}  \cov{}}
\\
\hline\hline
\multirow{8}{*}{\rotatebox[origin=c]{90}{with 50 \% split}}&\multirow{4}{*}{\rotatebox[origin=c]{90}{$k=4$}}
&$0.12$  &  0.69  & 0.71  & 0.72  & 0.73  \\
&&0.21  &  0.42  & 0.49  & 0.49  & 0.55  \\
&&0.29  &  0.24  & 0.38  & 0.34  & 0.48  \\
&&0.38  &  0.13  & 0.33  & 0.24  & 0.48  \\
\cline{2-7}
&\multirow{4}{*}{\rotatebox[origin=c]{90}{$k=\sqrt n$}}
&0.12  &  0.81 & 0.87 & 0.84 & 0.92 \\
&&0.21  &  0.69 & 0.84 & 0.78 & 0.90 \\
&&0.29  &  0.65 & 0.84 & 0.75 & 0.90 \\
&&0.38  &  0.63 & 0.84 & 0.75 & 0.93 \\
\cline{1-7}
\multirow{8}{*}{\rotatebox[origin=c]{90}{without split}}&\multirow{4}{*}{\rotatebox[origin=c]{90}{$k=4$}}
&0.12  &  0.43 & 0.7 & 0.62 & 0.76 \\
&&0.21  &  0.55 & 0.81 & 0.68 & 0.84 \\
&&0.29  &  0.62 & 0.79 & 0.68 & 0.77 \\
&&0.38  &  0.55 & 0.61 & 0.62 & 0.63 \\
\cline{2-7}
&\multirow{4}{*}{\rotatebox[origin=c]{90}{$k=\sqrt n$}}
&0.12  &  0.91 & 0.93 & 0.94 & 0.96 \\
&&0.21  &  0.88 & 0.93 & 0.92 & 0.97 \\
&&0.29  &  0.84 & 0.92 & 0.90 & 0.95 \\
&&0.38  &  0.83 & 0.91 & 0.90 & 0.96
\end{tabular}
    \caption{\textbf{Mean IoU scores} for shifted Gaussians.
    Standard deviations are $< 10^{-2}$ with $n=10$K.}
    \label{tab:mean_iou_shift_k4}
\end{table}

\subsection{Gaussian mixture comparison}
\label{ap:gaussian_mixture}
Figure~\ref{fig:GMM-dim64_suite} complements Section~\ref{sec:exp_GMM} and Figure \ref{fig:GMM-dim64} with additional curves for different setting (w/ and w/o splitting, $k=4$ or $k=100$).

\setlength\mywidthhere{0.4\textwidth}
\begin{figure}[htb]
    \centering
    \begin{tabular}{cc}
        \multicolumn{2}{c}{\emph{50\% split validation/train}} \\
        \includegraphics[width=\mywidthhere]{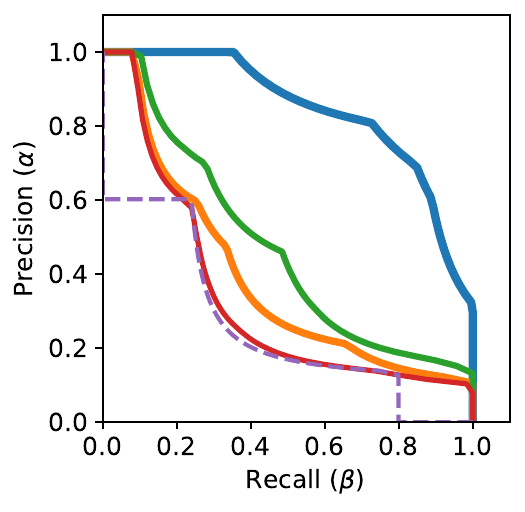} &
        \includegraphics[width=\mywidthhere]{figures_NEW/gmm/gmm_split_k_n.pdf} \\
        $k=4$ & $k = \sqrt n$ \\
        \multicolumn{2}{c}{\emph{without split}} \\
        \includegraphics[width=\mywidthhere]{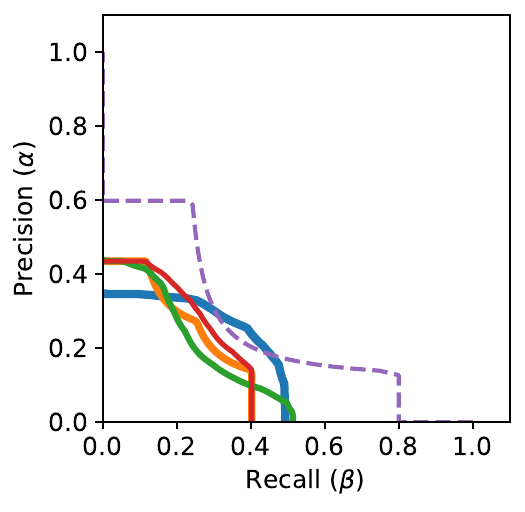} &
        \includegraphics[width=\mywidthhere]{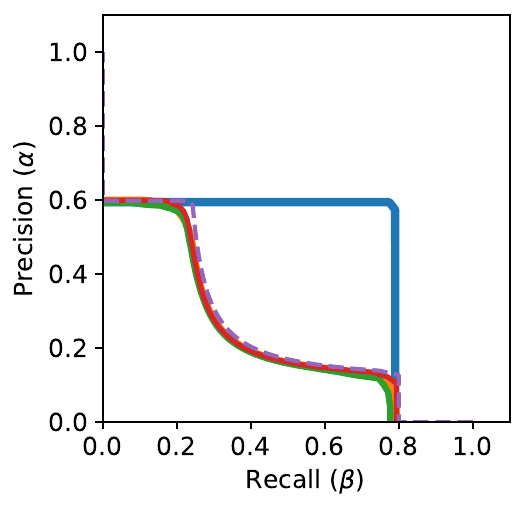} \\
        $k=4$ & $k = \sqrt n$
    \end{tabular}
    \caption{\textbf{Comparing two Gaussian mixtures}. This figure complements Figure~\ref{fig:GMM-dim64}.
    The ground truth PR curve (
    {\color{color_GT} \bfseries -~-\textsc{GT}})
    is compared to empirical estimates from various NN-classifiers:
    {\color{color_IPR} \bfseries --\ipr{}},
    {\color{color_KNN} \bfseries --\knn{}},
    {\color{color_PARZ} \bfseries --\parzen{}}, and 
    {\color{color_COV} \bfseries --\cov{}}.
    Here $P$ and $Q$ are two GMMs sharing the same modes (centered at $\mu_k$):
    $P = \sum_{\ell} p_\ell \mathcal{N}(\mu_\ell \mathbf 1_{d}, \mathbb{I}_{d})$ 
    and 
    $ Q =  \sum_{\ell} q_\ell \mathcal{N}(\mu_k \mathbf 1_{d}, \mathbb{I}_{d})$ with $d=64$ dimensions and 
    $\mu_\ell \in \{0, -5, 3, 5\}$.
    However, $P$ and $Q$ have different weights ($p_\ell$ and $q_\ell$):
    $p_\ell  \in \{0.3, 0.2, 0.5, 0\}$ vs
    $q_\ell  \in \{0, 0.5, 0.2, 0.3\}$.
    $n=1$k points are sampled and split in half between validation and train, and $k=\sqrt n$. 
    }
    \label{fig:GMM-dim64_suite}
\end{figure}

\clearpage
\vskip 0.2in

\end{document}